\theoremstyle{plain}
\newtheorem{theorem}{Theorem}[section]
\newtheorem{lemma}[theorem]{Lemma}
\newtheorem{corollary}[theorem]{Corollary}
\theoremstyle{definition}
\newtheorem{definition}[theorem]{Definition}
\theoremstyle{remark}
\newcolumntype{H}{>{\setbox0=\hbox\bgroup}c<{\egroup}@{}} %
\newcommand{\matr}[1]{\bm{#1}}
\newcommand{\set}[1]{\mathcal{#1}}
\mathchardef\mhyphen="2D
\theoremstyle{definition}
\newtheorem{example}{Example}[section]
\def\eqref#1{equation~\ref{#1}}
\def\1{\bm{1}}
\DeclareMathOperator{\hash}{HASH}
\def\fragment{{f}}
\def\fragmentation{{F}}
\def\vertices{{\set V}}
\def\edges{{\set E}}
\def\features{{\matr X}}
\def\graph{{G}}
\def\vocabulary{{\set Y}}
\def\fragmentationScheme{{\mathcal F}}
\def\WL{{\text{WL}}}
\def\FR{{\text{FR}}}
\def\HLG{{\text{HLG}}}
\def\NF{{\text{NF}}}
\def\ER{{\text{ER}}}
\def\GR{{\text{GR}}}
\def\neighborhood{{\set{N}}}
\def\cond{{\,|\,}}
\def\ours{{FragNet}}
\def\ourslong{{Fragment Graph Neural Network}}
\def\MLP{{\text{MLP}}}
\def\AGG{{\text{AGG}}}
\newcommand*{\ldbrace}{\{\mskip-5mu\{}
\newcommand*{\rdbrace}{\}\mskip-5mu\}}
\DeclareMathAlphabet{\mathsfit}{\encodingdefault}{\sfdefault}{m}{sl}
\SetMathAlphabet{\mathsfit}{bold}{\encodingdefault}{\sfdefault}{bx}{n}
\DeclarePairedDelimiterX{\infdivx}[2]{[}{]}{%
  #1\;\delimsize\|\;#2%
}
\icmltitlerunning{Expressivity and Generalization: Fragment-Biases for Molecular GNNs}
\begin{document}

\twocolumn[
\icmltitle{Expressivity and Generalization: Fragment-Biases for Molecular GNNs}

\icmlsetsymbol{equal}{*}

\begin{icmlauthorlist}
\icmlauthor{Tom Wollschläger}{equal,tum,mdsi}
\icmlauthor{Niklas Kemper}{equal,tum}
\icmlauthor{Leon Hetzel}{tum,helm}
\icmlauthor{Johanna Sommer}{tum,mdsi}
\icmlauthor{Stephan Günnemann}{tum,mdsi}
\end{icmlauthorlist}

\icmlaffiliation{tum}{School of Computation, Information \& Technology, Technical University of Munich.}
\icmlaffiliation{mdsi}{Munich Data Science Institute, Germany.}
\icmlaffiliation{helm}{Helmholtz Center for Computational Health, Munich, Germany}
\icmlcorrespondingauthor{Tom Wollschläger}{t.wollschlaeger@tum.de}
\icmlcorrespondingauthor{Niklas Kemper}{niklas.kemper@tum.de}

\icmlkeywords{Machine Learning, ICML}

\vskip 0.3in
]

\printAffiliationsAndNotice{\icmlEqualContribution} %

\begin{abstract}

Although recent advances in higher-order Graph Neural Networks (GNNs) improve the theoretical expressiveness and molecular property predictive performance, they often fall short of the empirical performance of models that explicitly use fragment information as inductive bias. However, for these approaches, there exists no theoretic expressivity study. In this work, we propose the \textit{Fragment-WL} test, an extension to the well-known Weisfeiler \& Leman (WL) test, which enables the theoretic analysis of these fragment-biased GNNs. Building on the insights gained from the Fragment-WL test, we develop a new GNN architecture and a fragmentation with infinite vocabulary that significantly boosts expressiveness. We show the effectiveness of our model on synthetic and real-world data where we outperform all GNNs on Peptides and have $12\%$ lower error than all GNNs on ZINC and $34\%$ lower error than other fragment-biased models. Furthermore, we show that our model exhibits superior generalization capabilities compared to the latest transformer-based architectures, positioning it as a robust solution for a range of molecular modeling tasks.

\end{abstract}

\section{Introduction}
\label{sec:intro}
A common issue with Graph Neural Networks (GNNs) is their lack of expressiveness, including their inability to recognize substructures, which could limit their empirical performance \cite{morris_weisfeiler_2021}. In chemistry and machine learning for chemistry, frequently occurring substructures, or fragments, are commonly used to improve expressivity, as they can be powerful predictors of the functional properties of molecules \cite{MERLOT2003594}. 
The beneficial effect of fragments becomes even more apparent in larger systems, such as proteins, where individual components often resemble whole residues \cite{singh_2003}.

To give GNNs the predictive power of substructures, recent work has introduced more powerful methods that relate expressive power to the ability to distinguish isomorphic graphs. For these higher-order GNNs, the Weisfeiler \& Leman (WL) test serves as a measure of expressivity. Empirical evaluations analyze the ability of these models to count substructures and predict molecular properties \cite{zhang2023expressive, morris_weisfeiler_2021}. However, many of these works focus primarily on theoretical expressiveness analysis, often neglecting practical implications. 

Recently, \citet{campi_expressivity_2023} showed that these models suffer from poor generalization to data that do not perfectly fit the training distribution. This often results in sub-par performance on real-world data \cite{maron2019provably, gasteiger2020directional}. At the same time, models that explicitly incorporate fragment information as inductive biases perform better overall \cite{fey_hierarchical_2020}, but are often limited to a single substructure and lack a theoretical analysis of their expressiveness, leading to a mismatch between theory and practical performance \cite{zhu_mathcalo-gnn_2022, fey_hierarchical_2020, zang2023hierarchical}.

In this work, we address the gap between theory and practical performance by introducing the Fragment-WL test, an extension of the standard WL test that enables a unified analysis of existing (fragment-biased) models. In addition, we introduce a new powerful model that directly leverages graph fragments within its message-passing framework. Thereby, our model becomes more robust to varying graph structures and can generalize better to out-of-distribution data. Lastly, our model enables a new fragmentation to represent molecular graphs with an infinite vocabulary consisting only of basic building blocks. We show the usefulness of our approach across a range of molecular datasets and tasks. We analyze both short- and long-range interactions, where we achieve state-of-the-art performance compared to graph-based models and even outperform transformer-based architectures in some scenarios.\footnote{Find our code at \href{https://www.cs.cit.tum.de/daml/fragment-biased-gnns/}{cs.cit.tum.de/daml/fragment-biased-gnns/}}

Our core contributions can be summarized as follows: 
\begin{itemize}
    \item We provide a more fine-grained hierarchy on the expressiveness for a multitude of models that incorporate substructures as inductive bias, such as including them as node features, learning an individual representation, or performing operations on higher-level structures.
    \item We propose a new architecture that performs message passing along substructures and improves expressivity and generalization while retaining linear complexity. 
    \item With our new architecture, we can propose a novel fragmentation for molecules that handles an infinite number of substructures based on simple yet flexible building blocks but still generalizes well.
    \item We study predictive power, long-range performance, and generalization through extensive experiments.
\end{itemize}

\section{Background}
\textbf{Notation.}
A graph $\smash{\graph \coloneqq (\vertices,\edges,\features)}$ consists of a set of vertices $\smash{\vertices}$, a set of (undirected) edges $\smash{\edges \subseteq \vertices \times \vertices}$ and $d$ node features $\smash{\features\in \mathbb{R}^{|\vertices| \times d}}$ for every node $\smash{v \in \vertices}$. The set of nodes that are adjacent to $v$ is denoted by $\smash{\neighborhood(v)}$. Two graphs $\smash{\graph^1 = (\vertices^1, \edges^1, \features^1)}$ and $\smash{\graph^2 = (\vertices^2, \edges^2, \features^2)}$ are \textit{isomorphic} if there exists a bijection $\smash{b: \vertices^1 \to \vertices^2}$ that preserves edges and node features, that is, $\smash{\{v,w\} \in \edges^1 \Leftrightarrow \{b(v), b(w) \} \in \edges^2}$ and $\smash{\features^1_v = \features^2_w}$. For a subset of nodes $\smash{\set U \subseteq \vertices}$, we denote the induced subgraph with respect to these nodes by $\smash{\graph [\set U]}$.

\textbf{Expressiveness.}
We can classify the expressiveness of functions over graphs by their capability to distinguish non-isomorphic graphs. We say that a function $f$ is (in parts) \emph{more powerful} than a function $g$ if there exist two non-isomorphic graphs $\graph^1$, $\smash{\graph^2}$ such that $\smash{f(\graph^1) \neq f(\graph^2)}$ whereas $\smash{g(\graph^1) = g(\graph ^2)}$. The function $f$ is \emph{strictly more powerful} than $g$ (we write $f > g$) if $f$ is more powerful than $g$ and $g$ is not (in parts) more powerful than $f$.

\textbf{Weisfeiler \& Leman.}
The Weisfeiler \& Leman graph isomorphism test is an iterative graph coloring algorithm that bounds the expressive power of MPNNs \cite{kiefer2022power}. In each iteration, it produces a color for each node based on its neighboring nodes' colors. Starting with a vertex color based only on features $\smash{c_v^0 = \hash(\features_v)}$, we calculate the update for the color $c$ of node $v$ in iteration $t$:
\begin{equation}
    c_v^{(t)} = \hash\big(c_v^{(t-1)}, \ldbrace c_w^{(t-1)} \cond w \in \neighborhood(v) \rdbrace\big).
    \label{eq:hash}
\end{equation}
The algorithm terminates once the set of unique colors does not increase. Two non-isomorphic graphs can be distinguished if the multiset of colors differs at the end. As this test cannot distinguish all non-isomorphic graphs, it can be extended to strictly more powerful versions, $k$-WL, incorporating $k$-tuples of nodes to determine the color. For more background information, we refer to \citet{morris_weisfeiler_2021}. Importantly, $2$-WL is equivalent to the previously described WL test \cite{huang2021tutorial}. 

\textbf{Fragmentations.}
A vocabulary $\vocabulary$ is a set of graphs (potentially including node features) representing important substructures, e.g., cycles. A fragment of a graph $\graph$ is an induced subgraph $\graph[\fragment]$ isomorphic to a graph from the vocabulary. We will identify a fragment simply by the subset of nodes $\fragment \subseteq \vertices$. All fragments $f$ that are isomorphic to the same graph of the vocabulary have the same $\text{type}(f)$. A fragmentation scheme $\fragmentationScheme$ is a permutation invariant function that maps a graph $\graph$ to a set of fragments $\fragmentationScheme(G) =: \fragmentation$, which is called a fragmentation. Note that there might exist subgraphs isomorphic to a graph in $\vocabulary$ that are \emph{not} in $\fragmentationScheme(G)$. For example, even if $\vocabulary$ contains 5-cycles, not all 5-cycles in $\graph$ need to be in $\fragmentationScheme(G)$. If, for all graphs $G$, $\fragmentationScheme(\graph)$  includes \emph{every} subgraph isomorphic to a graph $V \in \vocabulary$, we say that the fragmentation scheme $\fragmentationScheme$ recovers $V$.

\section{Related work}
\label{sec:related_work}
\textbf{Expressiveness of GNNs.}
Message-Passing Neural Networks (MPNN)\footnote{We use MPNNs and GNNs interchangeably.} are limited in their expressiveness. Their ability to distinguish between non-isomorphic graphs is confined to the 2-WL algorithm, restricting their discriminative power \cite{xu_how_2019}. Moreover, when it comes to recognizing substructures, MPNNs are unable to accurately count almost all types of substructures \cite{chen_can_2020}. This limitation stems from their reliance on purely local messages, which---despite facilitating excellent linear space and time complexity---renders them blind to higher-level structural information within graphs.

\textbf{Higher-order GNNs.}
In response to the inability to effectively learn substructures, the introduction of more powerful GNN architectures aims to overcome this limitation and enable comprehensive substructure learning.
\citet{morris_weisfeiler_2021} draw inspiration from the multidimensional $k$-WL algorithm and diverge from learning node-specific representations by considering each $k$-tuple of nodes instead. Although this improves expressiveness, its complexity increases exponentially.
Subgraph GNNs comprise an alternative to improve substructure identification, decomposing a graph into smaller subgraphs for GNN application. The resulting subgraph representations are pooled before a final graph level representation is derived \citep{huang_boosting_2023, frasca_understanding_2022}. With some strategies for extracting subgraphs, subgraph GNNs can identify basic substructures such as 4-cycles \cite{huang_boosting_2023}. \citet{puny2023equivariant} extend the WL test for higher-order GNNs to the graph polynomial counting problem as a new expressivity measure, highlighting the importance of more fine-grained tests for GNNs. However, the limitations of higher-order GNNs lie in their inability to effectively learn more intricate substructures, accompanied by an increase in time complexity. Recent findings also suggest susceptibility to adversarial attacks and out-of-distribution data, hinting at challenges in robustly learning substructures \cite{campi_expressivity_2023}.

\textbf{Fragment-Biased GNNs.}
Another line of work provides fragment information to GNNs as an explicit inductive bias. These fragment-biased models vary not only in their vocabulary but also in the way fragmentation information is integrated into the model. \textit{Node features}: \citet{bouritsas_improving_2023} introduce GSN-v, which uses the number of cycles or cliques as an additional node feature. \textit{Learned fragment representation}: Instead of treating fragmentation information as a fixed feature, other models learn representations for each fragment by aggregating information from the corresponding nodes. \citet{zhu_mathcalo-gnn_2022} use a vocabulary of only cycles whereas \citet{zang2023hierarchical} present HiMol, which fragments a molecular graph, based on chemical properties. \textit{Higher-level graph}: A natural extension of the learned fragment representation is a higher-level graph of fragments where neighboring fragments influence each other. \citet{Thiede2021AutobahnAG} use equivariant computations along the paths of length 3 to 6 and cycles of sizes 5 and 6. \citet{fey_hierarchical_2020} build a higher-level junction tree using rings and edges. 
Yet, none of the existing works compare---theoretically or experimentally---how to encode and use substructure information in the model. Additionally, most works only focus on a single substructure that does not allow to fragment the complete graph.

\textbf{Topological GNNs} use higher-level topological structures such as simplicial complexes \cite{bodnar_weisfeiler_2021} or CW-Networks \cite{bodnar_weisfeiler_2022, giusti_cin_2023} in their message-passing schemes. While coming from a different theoretical direction than substructure-biased GNNs, in practice, they use cliques or cycles as learned fragment representations or in a higher-level graph.

\textbf{Graph Transformer.} Recently, models such as Graph Transformers \cite{ying_transformers_2021, ma_graph_2023, geisler2023transformers} and ViT/MLP-Mixers \cite{he_generalization_2023} for graphs adapted successful models from other domains to graph data.
Their ability to recognize substructures depends on the positional encoding used. Almost all recent models use random walk encodings, which can help to discover simple substructures like cycles. 

\textbf{Fragmentation Schemes.}
Fragmentation methods in the chemical domain aim to divide a molecular graph into subgraphs with distinct structures or properties. There are various strategies to achieve this, such as separating probable reactants \cite{brics_decomp}, categorizing molecules into distinct structural classes \cite{bemismurcko}, or breaking apart acyclic bonds \cite{maziarz_learning_2022, jin_junction_2019}. Unlike these methods, data-driven approaches like those outlined in \citet{kong_molecule_2022} and \citet{geng2023novo} focus on deriving subgraphs directly from a dataset without relying on predefined rules for decomposition.

\section{Weisfeiler \& Leman Go Fragments}
\label{sec:theory}
Existing fragment-biased MPNNs vary in their underlying fragmentation scheme and how the fragment information is incorporated into the model. This variability makes a direct comparison of the expressiveness of these models difficult. To address this challenge, we propose a new, more fine-grained version of the WL test, called \emph{Fragment-WL} test, that incorporates detailed structural elements. We derive a hierarchy of tests that capture how fragmentation information is incorporated into existing substructure-biased models, while leaving out all variability that does not influence the expressivity. 

Our Fragment-WL test also subsumes existing WL variants designed for simplicial complexes and CW-cells \cite{bodnar_weisfeiler_2021, bodnar_weisfeiler_2022}, providing a more unified framework for assessing the expressiveness of both substructure-biased and topological GNNs. Furthermore, our proposed Fragment-WL test highlights the significance of how fragment information is incorporated into the model, emphasizing that the integration methodology plays a crucial role for determining the model's expressive power.

Fragment-WL entails multiple variants with increasing expressiveness in distinguishing isomorphic graphs. In the following, we first provide a general framework and then define the individual Fragment-WL versions that perform the original WL test on different augmented graphs. We start with a definition of WL tests on augmented graphs:
\begin{definition}
    A $g$-WL test is a function that performs the WL test on the augmented graph $g(\graph)$, i.e.
    \[g\mhyphen\WL(\graph) := \WL(g(\graph))\]
    where $g$ is a function mapping from graphs to graphs, i.e., $g: (\vertices, \edges, \features) \mapsto (\vertices', \edges', \features')$.
\end{definition}

\begin{figure}[t]
\centering
\includegraphics[width=\linewidth]{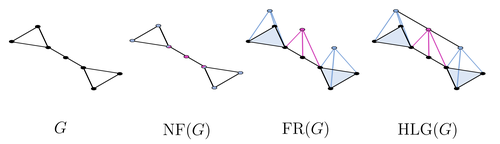}
\vskip -0.05in
\caption{Example graph $\graph$ with corresponding augmented variants. $\NF(\graph)$ includes node features, $\FR(\graph)$ also includes a representation for each fragment and $\HLG(\graph)$ also has connections between neighboring fragment represenations.}
\label{fig:fragment-wl}
\end{figure}

There are three ways in which a fragmentation $\fragmentation$ is used in existing fragment-biased GNNs: as an additional node feature, as learned fragment representation, and as a higher-level graph. We instantiate $g$ with the corresponding functions to augment the graph with the respective features. 
First, we use additional node features. We extend the individual node features with the information of the fragments that the node is contained in. We concatenate this information to the already existing features. Formally, we define this augmentation function in the following way.
\\
\begin{definition}
    \label{def:NF}
    We define the node feature function  as $\NF(\vertices, \edges, \features)=(\vertices, \edges, \features^\NF)$ with: 
    \begin{align*}
    &\features^\NF_v = X_v \: \lVert \: \lambda\bigl(\ldbrace\text{type}(\fragment) \cond v \in \fragment, \fragment \in \fragmentation\rdbrace\bigr),   
    \end{align*}
    where $\lambda$ represents any injective function and $\lVert$ indicates the concatenation operation. We instantiate $g$ with $\NF$ to create the \NF-WL test.  
\end{definition}

Another prominent way is using representations for each fragment and messages that are flowing from the lower level nodes to their entailing fragment and backwards. This means that we introduce a new vertex for each fragment and connect it to all its corresponding vertices in the original graph. We depict this graph $\FR(\graph)$ in \cref{fig:fragment-wl}. We define this augmentation function in the following. 
\begin{definition}
    We define the fragment representation function as $\FR(\vertices, \edges, \features) = (\vertices^\FR, \edges^\FR, \features^\FR)$ with:
    \begin{align*}
    &\vertices^\FR := \vertices \cup \fragmentation, \\
    &\edges^\FR := \edges \cup \bigl\{ \{\fragment,v\} \: | \: \forall \fragment \in \fragmentation, \forall v \in f \bigr\},\\
    &\features^\FR_i := \begin{cases} \features_i & i \in \vertices \\
    \text{type}(i) & i \in \fragmentation 
    \end{cases}
    \end{align*}
\end{definition}

Lastly, we allow messages to be exchanged between neighboring fragments, thus creating a higher-level graph on which information can flow. To this end, we add edges between fragments that have neighboring nodes in $G$ and thus construct a graph of the higher-level fragments, see $\HLG(\graph)$ in \cref{fig:fragment-wl}.
\begin{definition}
The higher-level graph augmentation functions is $\HLG(\vertices, \edges, \features) = (\vertices^\HLG, \edges^\HLG, \features^\HLG)$ with:
\begin{align*}
&\vertices^\HLG := \vertices^\FR, \: \: \:  \: \features^\HLG := \features^\FR,\\
&\edges^\HLG := \edges^\FR \cup \big \{ \{\fragment,k\} \mid \fragment,k \in \fragmentation, \fragment \cap k \neq \emptyset \big  \}
\end{align*}

\end{definition}

Equipped with the formal definitions, we will now compare the power of performing the WL test on these transformed graphs to the original graph\footnote{All proofs are detailed in \cref{app:proofs}}. The power of the Fragment-WL test depends on the fragmentation scheme $\mathcal{F}$. With a sufficiently advanced fragmentation scheme, even NF-WL can become arbitrarily powerful. 
\begin{restatable}{theorem}{theopowerful}
\label{theo:powerful}
There exist fragmentation schemes such that NF-WL, FR-WL and HLG-WL are all strictly more powerful than $k$-WL for any $k$.
\end{restatable}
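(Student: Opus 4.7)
The plan is to exhibit a single ``universal'' fragmentation scheme whose output already encodes the isomorphism type of the whole input graph, so that each of the three Fragment-WL variants trivially inherits full distinguishing power. Let $\vocabulary$ be the collection of all finite node-featured graphs up to isomorphism, and define
\[
\fragmentationScheme(\graph) := \{\vertices\} \quad \text{for every } \graph = (\vertices,\edges,\features).
\]
The single fragment is $\vertices$ itself; its induced subgraph is $\graph$ and is therefore isomorphic to an element of $\vocabulary$, and $\fragmentationScheme$ is trivially permutation invariant. By the paper's convention that isomorphic fragments share a type, $\text{type}(\vertices)$ is then a label that uniquely identifies the isomorphism class of $\graph$.

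Next I would argue that each Fragment-WL variant distinguishes all pairs of non-isomorphic graphs. Under this scheme the NF-WL initial feature of every node $v$ is $X_v \: \lVert \: \lambda(\ldbrace\text{type}(\graph)\rdbrace)$. For non-isomorphic $\graph_1 \not\cong \graph_2$ we have $\text{type}(\graph_1) \neq \text{type}(\graph_2)$, and by injectivity of $\lambda$ and $\hash$ every node of $\graph_1$ receives an initial color disjoint from every node of $\graph_2$, so already the iteration-$0$ color multisets differ and NF-WL separates them. For $\FR\mhyphen\WL$ and $\HLG\mhyphen\WL$ the augmented graphs additionally contain a fragment node whose feature is $\text{type}(\graph)$, and the same argument applies. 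Strictness then follows from the classical Cai--F\"urer--Immerman construction: for every $k$ there exist non-isomorphic graphs that $k$-WL fails to distinguish, giving a pair on which the Fragment-WL tests succeed but $k$-WL does not; conversely, since our scheme makes the Fragment-WL tests distinguish \emph{all} non-isomorphic pairs, there is no pair on which $k$-WL succeeds but Fragment-WL fails. Hence $\NF\mhyphen\WL > k\mhyphen\WL$, $\FR\mhyphen\WL > k\mhyphen\WL$, and $\HLG\mhyphen\WL > k\mhyphen\WL$ for every $k$.

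The main (and essentially only) obstacle is confirming that the formal definitions of vocabulary and fragmentation scheme actually admit a whole-graph fragment; they do, since the definitions only require $\vocabulary$ to be a set of graphs and $\fragmentationScheme$ to be a permutation-invariant mapping from graphs to sets of vertex subsets. There is no real combinatorial content --- the proof derives its strength entirely from the fact that a sufficiently expressive fragmentation scheme is allowed to ``leak'' the global isomorphism type of $\graph$ into the local node features consumed by the ordinary WL test.
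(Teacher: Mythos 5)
Your proof is correct and uses essentially the same idea as the paper: construct a fragmentation scheme whose fragment type leaks the isomorphism class of the entire input graph, so that all three Fragment-WL variants become complete isomorphism tests and then dominate every $k$-WL via the Cai--F\"urer--Immerman counterexamples. The paper's version takes the fragmentation to be \emph{every} induced subgraph rather than your single whole-vertex-set fragment, but the load-bearing fragment is the whole graph in both cases; your write-up is if anything slightly more explicit about why condition~(2) of ``strictly more powerful'' holds.
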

However, in practice, mostly fragmentation schemes with a vocabulary of rings, paths, and cliques are used for fragment-biased GNNs \cite{fey_hierarchical_2020, bouritsas_improving_2023, zhu_mathcalo-gnn_2022}. So, from now on, we will restrict ourselves to such fragmentation in our theoretical analysis. Next, we show that it matters how to incorporate fragment information and that the WL variants become more powerful through higher-level abstraction.

Integrating fragment information from any non-trivial substructure as an additional node feature already increases expressiveness beyond 2-WL. 
\begin{restatable}{theorem}{theoNF}
    \label{theo:NF}
    NF-WL is strictly more powerful than 2-WL for fragmentation schemes $\mathcal{F}$ that recover any substructure with more than two nodes.
\end{restatable}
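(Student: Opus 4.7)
My plan is to establish strict dominance $\NF\mhyphen\WL > 2\mhyphen\WL$ in the usual two halves: first, show that every pair of graphs distinguished by 2-WL is also distinguished by NF-WL; second, exhibit a pair of non-isomorphic graphs distinguished by NF-WL but not by 2-WL.

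For the first half, I will argue that on any graph the NF-WL coloring refines the 2-WL coloring. The only modification NF-WL makes is at initialization: it concatenates $\features_v$ with $\lambda(\ldbrace \text{type}(\fragment)\cond v\in \fragment,\fragment\in \fragmentation\rdbrace)$, and $\lambda$ is injective. Hence two vertices agreeing on their NF-WL initial color must agree on $\features_v$, and therefore also on their 2-WL initial color. A straightforward induction on the WL iteration index extends this refinement to every round, because vertices with equal NF-WL colors have matching multisets of neighbors' NF-WL colors and therefore also matching multisets of neighbors' 2-WL colors. The final NF-WL color multiset is thus at least as discriminative as the 2-WL one.

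For the second half, I will exhibit a witness pair that exploits the recovered substructure. Since $\mathcal{F}$ recovers some $V$ with $|V|\ge 3$, the canonical starting point is the classical 2-WL-indistinguishable pair $\graph^1=C_6$ (the six-cycle) and $\graph^2=2K_3$ (two disjoint triangles): both are 2-regular on six vertices with identical constant features, so 2-WL assigns the same color to every vertex in each graph and fails to separate them. When $V$ has exactly three vertices, i.e.\ $V\in\{K_3,P_3,\overline{K_3}\}$, a direct count shows that the per-vertex multiset of fragment types differs between the two graphs---for example, every vertex of $2K_3$ lies in a triangle fragment while no vertex of $C_6$ does---so NF-WL separates them after a single hash. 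For larger $V$, I plan to construct analogous witness pairs by starting from a 2-WL-equivalent base pair with matching degree sequence and locally planting a copy of $V$ into only one of them via a gadget that preserves 2-WL equivalence (e.g.\ by attaching $V$ symmetrically to a CFI-style pair or to a pair of cospectral regular graphs).

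The main obstacle is the general-$V$ case: the concrete $C_6$ versus $2K_3$ witness already handles the practically relevant setting where the recovered fragmentation consists of small rings, paths, or cliques, but giving a uniform witness construction for arbitrary $V$ with $|V|\ge 3$ ultimately reduces to invoking the known limitation that 2-WL cannot count induced occurrences of essentially any non-trivial subgraph on three or more vertices, and carefully verifying that the gadget used to plant $V$ does not disturb the 2-WL coloring of the base pair.
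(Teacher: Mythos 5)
Your refinement argument (NF-WL colours refine 2-WL colours because the augmentation is a concatenation via an injective $\lambda$, so one direction of dominance holds) is correct and matches the paper, which proves this more generally as a ``reversible augmentation does not lose power'' lemma. Your concrete witness $C_6$ versus $2K_3$ is also fine for the three-node case: both are connected 2-regular six-vertex graphs, 2-WL colours every vertex identically in both, yet the per-vertex multisets of recovered fragment types differ, so NF-WL separates them at initialization.

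The gap is in the general $|V|\ge 3$ case. You propose to ``plant'' a copy of $V$ via a CFI-style gadget while preserving 2-WL equivalence, but you never specify the gadget or argue that it preserves 2-WL equivalence, and designing such a construction for arbitrary $V$ is not trivial (CFI gadgets are built around base graphs and edge twists, not around planting an arbitrary induced subgraph). More importantly, the construction is unnecessary: the result you name as a fallback --- that 2-WL cannot count induced subgraphs on three or more nodes (Chen et al.\ 2020) --- is already stated in the form ``for any such $S$ there exist non-isomorphic $G^1, G^2$ that 2-WL cannot distinguish but which contain different numbers of $S$.'' This directly hands you the witness pair for each $V$; no gadget or verification is required. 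You then only need the small observation that when $\mathcal{F}$ recovers $V$, the counts of $V$ differ between $G^1$ and $G^2$ forces the multiset of initial NF-WL node features to differ (summing the $V$-type contributions over all nodes gives $|V|$ times the count of $V$), so NF-WL distinguishes them. That is exactly the paper's proof, and it is shorter than the route you sketch: cite the counting impossibility for the witness, note NF-WL sees the counts, and combine with the refinement lemma.
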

This shows that the classical 2-WL test cannot reveal differences in the expressivity of fragment-biased GNNs since using any substructure as a node feature already increases expressivity beyond 2-WL. Our Fragment-WL test, however, provides a more fine-grained alternative that reveals that higher-level abstraction through a learned fragment representation strictly increases the expressivity compared to node features:
\begin{restatable}{theorem}{theoFR}
    \label{theo:FR}
    FR-WL is strictly more powerful than NF-WL for fragmentation schemes $\mathcal{F}$ recovering 3-cycles.
\end{restatable}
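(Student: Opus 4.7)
The plan is to decompose the statement into two parts: (i) FR-WL is at least as powerful as NF-WL, and (ii) there is an explicit pair of non-isomorphic graphs that FR-WL distinguishes but NF-WL does not, under a fragmentation scheme recovering 3-cycles.

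For part (i), I would show that the stable coloring produced by WL on $\FR(\graph)$, restricted to the original vertices $\vertices$, refines the stable coloring produced by WL on $\NF(\graph)$. The key observation is that in $\FR(\graph)$ every original vertex $v$ is adjacent to exactly the set of fragment-nodes $\{f \in \fragmentation : v \in f\}$, and each such fragment-node is initialized with its type. Hence after a single WL iteration on $\FR(\graph)$, the color of $v$ encodes (at least) the multiset $\ldbrace \text{type}(f) : v \in f, f \in \fragmentation \rdbrace$ alongside the neighbor colors in $G$. This is precisely the information $\NF$ injects into the initial features $\features^\NF_v$. An induction on the iteration index, using that every $\{u,v\} \in \edges$ is preserved in $\edges^\FR$, then shows that each FR-WL iterate dominates the corresponding NF-WL iterate, giving NF-WL $\le$ FR-WL.

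For part (ii), I would exhibit two non-isomorphic graphs $\graph^1, \graph^2$ that are indistinguishable by 2-WL and in which every vertex participates in the same multiset of 3-cycles, so that the NF-augmented node features are identical across both graphs and NF-WL collapses to standard 2-WL, failing to separate them. The natural candidates are pairs of strongly regular graphs with the same parameters and uniform triangle count per vertex, such as the $4\times 4$ rook graph versus the Shrikhande graph (both $(16,6,2,2)$-strongly regular, so every edge lies in exactly two triangles and every vertex in exactly six). By parameter equality, 2-WL (equivalently, NF-WL with triangle counts as features) produces identical color histograms on the two graphs. In contrast, $\FR(\graph)$ introduces an explicit triangle-node whose WL-color after one step depends on the colors of its three member vertices, effectively giving FR-WL access to two-step, triangle-mediated adjacencies; a direct refinement computation on the augmented graphs separates this classical pair.

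The main obstacle is the NF-WL-equivalence step in (ii): one must verify that feeding triangle-incidence features into the refinement process does not break the symmetry that keeps 2-WL stuck. The cleanest way is to choose graphs in which every vertex has combinatorially identical triangle-incidence structure (e.g. vertex-transitive triangle incidences with the same parameters), so the augmented initial coloring is constant on vertices in both graphs and the subsequent WL iterates coincide. Once such a pair is fixed, verifying that FR-WL separates them is a finite, mechanical stable-coloring computation on the augmented graphs, which completes the strict inequality FR-WL $>$ NF-WL.
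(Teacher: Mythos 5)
Part (i), showing NF-WL $\le$ FR-WL by induction on color refinement, is essentially the paper's argument; the paper works with a one-step offset ($c^{(t+1)}_v \sqsubseteq d^{(t)}_v$, where $c$ colors $\FR(G)$ and $d$ colors $\NF(G)$) because fragment-type information only reaches a vertex in $\FR(G)$ after one hop, but your sketch captures the right idea.

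Part (ii) has a genuine gap: the Rook's $4\times 4$ / Shrikhande pair is \emph{not} separated by FR-WL with a $3$-cycle fragmentation, so it cannot serve as the witness. Both graphs are strongly regular $(16,6,2,2)$, hence every vertex has degree $6$ and lies on exactly $6$ triangles, every edge lies on $\lambda=2$ triangles, and there are $32$ triangles in each. Thus in $\FR(G)$ every original vertex has $6$ vertex-neighbors and $6$ triangle-neighbors, and every triangle node has $3$ vertex-neighbors. Starting from the initial two-class coloring (vertices vs.\ triangle nodes), the WL update already sends all vertices to the same color and all triangle nodes to the same color, in both graphs, at every iteration; the refinement is stuck at two classes with identical counts $(16,32)$. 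So FR-WL returns identical histograms, exactly as 2-WL and NF-WL do, and your sentence ``a direct refinement computation on the augmented graphs separates this classical pair'' is precisely where the argument breaks. That pair is what the paper invokes later, for comparing HLG-WL with $5$-cycle fragments against $3$-WL, which is a different and stronger test.

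The paper's own proof instead constructs a small pair of graphs that carry \emph{two distinct node features}. Because the underlying unlabeled graph is vertex-transitive, permutation invariance of $\fragmentationScheme$ forces every node to receive the same NF augmentation (the same multiset of triangle types), and the paper's indistinguishability lemma shows NF-WL never refines past the feature partition. But one of the two graphs contains a $3$-cycle whose three vertices all carry the \emph{same} feature, while the other does not; the corresponding triangle nodes in $\FR(G)$ therefore acquire distinct colors after one WL step, and FR-WL separates the pair. The lesson is that separating FR-WL from NF-WL using only $3$-cycles requires asymmetry in the \emph{composition} of triangles induced by node features; a feature-free strongly regular pair is too symmetric for any of these Fragment-WL variants to break.
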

Building a higher-level graph of fragments further increases the expressivity. Figure \ref{fig:HLG} shows an example of two graphs that are indistinguishable by 2-WL, NF-WL, and FR-WL but distinguishable by HLG-WL. Formally, we express this in the following theorem.
\begin{figure}[t]
\centering
\includegraphics[width=\linewidth]{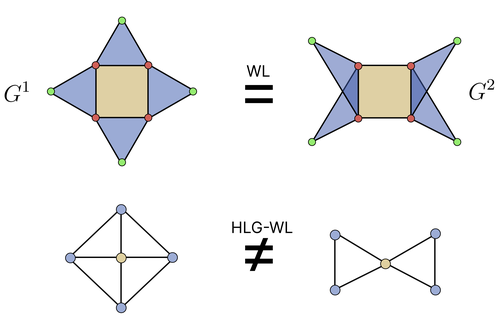}
\vskip -0.1in
\caption{Graphs $\graph^1$ and $\graph^2$ with their corresponding higher-level graph of fragments. The edges of the fragment representation to the vertices of $\graph^1$ and $\graph^2$ are omitted. $\graph^1$ and $\graph^2$ are indistinguishable by WL, \NF-WL and \FR-WL but distinguishable by HLG-WL as the higher-level graphs exhibit different connections from the 3-ring nodes to the 4-ring node.}
\label{fig:HLG}
\end{figure}
\begin{restatable}{theorem}{theoHLG}
    \label{theo:HLG}
    HLG-WL is strictly more powerful than FR-WL for fragmentation schemes $\mathcal{F}$ recovering 3-cycles.
\end{restatable}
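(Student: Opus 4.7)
I need to establish two facts: (a) whenever FR-WL distinguishes two graphs, so does HLG-WL, and (b) there is some pair distinguishable by HLG-WL but not by FR-WL. Fact (b) is witnessed by the graphs $\graph^1,\graph^2$ in \cref{fig:HLG}: their $\FR$-augmentations are related by an edge- and feature-preserving bijection (the same multisets of 3- and 4-cycle fragments attached to locally equivalent original substructures), whereas their $\HLG$-augmentations differ in how the 3-cycle fragments connect to the 4-cycle fragment, a discrepancy which the WL refinement detects within a few iterations. The bulk of the work therefore lies in (a).

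For (a), my plan is a coloring-refinement argument by induction on the WL iteration $t$: construct functions $\phi_t$ such that $\phi_t\!\bigl(c^{\HLG,(t)}_v\bigr) = c^{\FR,(t)}_v$ for every vertex $v$ of either augmented graph. This immediately implies that equal HLG-WL color multisets force equal FR-WL multisets. The base case is trivial since $\vertices^\HLG = \vertices^\FR$ and $\features^\HLG = \features^\FR$, so $\phi_0$ can be taken to be the identity. For the inductive step, I split on node type. If $v \in \vertices$ is an original vertex, then $\neighborhood^\HLG(v) = \neighborhood^\FR(v)$, because all extra HLG-edges lie strictly between fragment vertices; applying $\phi_t$ pointwise to the HLG neighborhood multiset then yields the FR one. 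If $v$ is a fragment vertex, $\neighborhood^\HLG(v)$ contains $\neighborhood^\FR(v)$ plus the other fragments intersecting $v$. Since the initial features distinguish original vertices (encoded by $\features$) from fragment vertices (encoded by $\text{type}$) under the standard injective-encoding assumption, the induction hypothesis guarantees that the \emph{role} of each neighbor remains readable from its HLG-color. The fragment vertex can therefore extract the sub-multiset of HLG-colors over its original neighbors only and push it through $\phi_t$, recovering precisely the input to the FR-WL update. Defining $\phi_{t+1}$ as the induced aggregation closes the induction.

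The main obstacle is the fragment-vertex case of the inductive step: one must carefully verify that the original-vs.-fragment role is preserved by every $\phi_t$ (ultimately relying on disjointness of the original-feature and type-label alphabets), and that extracting the original-only sub-multiset from the full HLG-neighborhood multiset is a deterministic function of the HLG-color, so that $\phi_{t+1}$ is well-defined. Once this bookkeeping is in place, both halves of the theorem follow: (a) from the induction, and (b) from the inspection of \cref{fig:HLG} sketched above.
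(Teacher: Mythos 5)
Your argument matches the paper's in both halves: part (b) uses the same pair of graphs from \cref{fig:HLG}, and part (a) is the color-refinement argument that the paper factors through its general lemma on reversible graph augmentations (\cref{lem:lower bound}, applied with $\FR(\graph)$ as the base graph and the $\FR$-to-$\HLG$ transformation as the reversible augmentation). One phrasing to correct in part (b): you write that the $\FR$-augmentations are ``related by an edge- and feature-preserving bijection.'' Taken literally this is an isomorphism of $\FR(\graph^1)$ and $\FR(\graph^2)$, which would force $\graph^1 \cong \graph^2$ (under the disjoint-alphabet convention you yourself invoke, any such isomorphism restricts to one on the original vertices) and render the counterexample vacuous. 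What you actually need, and what the paper verifies via \cref{lemma:WL}, is only WL-indistinguishability: equal feature multisets with, for every feature value, a common multiset of neighbor features. The inductive bookkeeping you describe --- disjoint label alphabets, role read-off from the color, extraction of the original-vertex sub-multiset of neighbor colors --- is precisely what makes the $\FR$-to-$\HLG$ step reversible, and is sound.
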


Hence, we have shown that the expressivity increases strictly monotonically from 2-WL to HLG-WL for fragmentation schemes recovering 3-cycles:
\begin{equation}\label{hierarchy}
    \text{2-\WL} < \text{\NF-\WL} < \text{\FR-\WL} < \text{\HLG-\WL}
\end{equation}
Regarding the higher-dimensional $k$-WL hierarchy, our Fragment-WL hierarchy cannot be bounded by 3-WL if the fragmentation can recover 5 cycles. 
\begin{restatable}{theorem}{theoHLGpower}
    \label{theo:HLG power}
    HLG-WL is in parts more powerful than 3-WL for fragmentation schemes $\mathcal{F}$ recovering 5-cycles.
\end{restatable}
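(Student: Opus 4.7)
The plan is to construct a pair of non-isomorphic graphs $\graph^1, \graph^2$ that are indistinguishable by $3$-WL yet separated by HLG-WL under a fragmentation $\fragmentationScheme$ that recovers every $5$-cycle. The overall strategy mirrors the proofs of \cref{theo:NF,theo:FR,theo:HLG}: first pick a pair of graphs on which the baseline test provably fails, then exhibit a local invariant of the augmented graph $\HLG(\graph)$ that standard WL refinement picks up.

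A natural candidate is a pair of strongly regular graphs with identical parameters, concretely the $4\times 4$ Rook graph $R(4,4)=K_4\,\square\,K_4$ and the Shrikhande graph $Sh$, both strongly regular with parameters $(16,6,2,2)$. By the homomorphism-count characterization of $k$-WL, two graphs are $3$-WL-indistinguishable iff they admit identical homomorphism counts from every graph of treewidth at most two; for the Rook/Shrikhande pair this agreement is a classical combinatorial fact that I would invoke, and otherwise verify directly from the common SRG parameters.

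The global $5$-cycle structure of the two graphs, however, differs substantially. In $R(4,4)$, each of the eight $K_4$ subgraphs forces a tightly clustered family of $5$-cycles whose members pairwise share two or more vertices, whereas $Sh$ is $K_4$-free and admits no such overlapping family. Since two fragment vertices in $\HLG(\graph)$ are adjacent whenever their $5$-cycles share at least one original vertex, this clustering manifests as a local pattern around a fragment vertex; for instance, a fragment vertex whose multiset of fragment neighbors, refined after a few rounds of WL by the colors of their shared original vertices, cannot arise anywhere in $\HLG(Sh)$. I would identify such a concrete pattern and argue that WL refinement on $\HLG(\graph)$ assigns the corresponding fragment vertices distinct colors within a constant number of rounds.

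The main obstacle is the combinatorial verification: one must enumerate the $5$-cycles through a representative vertex of each graph and argue that the overlap multiset observed in $\HLG(R(4,4))$ cannot be reproduced in $\HLG(Sh)$. Should this direct route prove too tedious, a clean fallback is a CFI-style construction on a small base graph for which $3$-WL provably fails; the CFI gadgets can be chosen so that the induced $5$-cycles have an intersection pattern depending on the CFI twist, which is precisely the information that HLG exposes and $3$-WL cannot access.
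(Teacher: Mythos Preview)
You have chosen exactly the right pair of graphs and correctly argued the $3$-WL side; the paper uses the same Rook's $4\times 4$ graph $R(4,4)$ and the Shrikhande graph $Sh$, both in $\mathrm{SR}(16,6,2,2)$, citing \citet{bodnar_weisfeiler_2022} for their $3$-WL indistinguishability. The divergence is entirely on the HLG-WL side, and here your proposal rests on a false premise.

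You claim that the eight $K_4$'s in $R(4,4)$ force ``a tightly clustered family of $5$-cycles'', and plan to separate the graphs by comparing overlap patterns of $5$-cycle fragments in $\HLG(R(4,4))$ versus $\HLG(Sh)$. But recall that in this paper a fragment is an \emph{induced} subgraph isomorphic to a vocabulary element, and $R(4,4)$ contains \emph{no} induced $5$-cycle at all. The argument is a one-line parity obstruction: in $R(4,4)=K_4\,\square\,K_4$ every edge is either a row-edge or a column-edge, and two consecutive edges of an induced cycle cannot be of the same type (else the three vertices share a line and a chord appears); hence the edge types would have to alternate around the cycle, which is impossible for an odd cycle. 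By contrast, $Sh$ does contain induced $C_5$'s (e.g.\ $(0,0),(0,1),(3,1),(2,0),(1,0)$ in the standard Cayley description on $\mathbb{Z}_4^2$).

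This makes the paper's proof immediate: with any $\fragmentationScheme$ recovering $5$-cycles, $\HLG(Sh)$ contains fragment vertices of type ``$5$-cycle'' while $\HLG(R(4,4))$ contains none, so already the multiset of initial colours differs and HLG-WL separates the two graphs. Your planned enumeration of overlap multisets, and the CFI fallback, are therefore unnecessary; the obstacle you anticipated (``too tedious'') dissolves once you notice that one side of the comparison is empty.
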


Our developed Fragment-WL hierarchy models the different ways in which fragment information is used in most fragment-biased and topological GNNs. This new measure of expressiveness allows the comparison and ordering of these existing methods; see \cref{tab:expressiveness} for an overview.

In summary, our Fragment-WL test provides a new alternative measure of expressivity compared to the original WL test. Our developed hierarchy reveals that it matters how to incorporate fragmentation information, i.e., higher-level abstraction increases expressivity. Additionally, it allows for a comparison of the expressiveness of most existing fragment-biased and topological GNNs.

\section{\ourslong{}}
Based on the insights of the previous section, we propose our new model architecture and a new fragmentation scheme with infinite vocabulary consisting of only basic building blocks. %
Given the higher expressiveness, our model can differentiate complex substructures given only these basic building blocks, as it is able to learn the dependencies on the higher-level graph. We empirically confirm this in \cref{sec:experiments}. 

\subsection{Model}
Building on the theoretical findings that a higher-level graph (see \cref{theo:HLG}) improves expressiveness, we propose \ours{}, a general model for any fragmentation $\fragmentation$ that performs message-passing on the original graph \emph{and} a higher-level graph of fragments that are also connected, i.e., we are using the \HLG{} augmented graph. 

The learned representation $h^t_v$ at step or layer $t$ of a node receives a message from neighboring nodes $m^t_{\vertices\rightarrow v}$ and fragments $m^t_{\fragmentation \rightarrow v}$. Similarly, the learned representation $h^t_f$ of a fragment receives a message from neighboring \linebreak fragments  $m^t_{\fragmentation \rightarrow f}$ and nodes $m^{t}_{\vertices\rightarrow f}$. 
\begin{align*}
m^t_{\vertices\rightarrow v} &= \AGG\bigl(\ldbrace \MLP(h^{t-1}_u, h^{t-1}_e) \mid e = \{u,v\} \in \edges \rdbrace\bigr) \\
m^{t}_{\fragmentation \rightarrow f} &= \AGG\bigl(\ldbrace h^{t-1}_g \mid g \in N_\fragmentation(f) \rdbrace\bigr) \\
m^t_{\fragmentation \rightarrow v} &= \AGG\bigl(\ldbrace h^{t-1}_f \mid v \in f, f \in \fragmentation \rdbrace\bigr) \\
m^{t}_{\vertices\rightarrow f} &= \AGG\bigl(\ldbrace h^{t-1}_v \mid v \in f, v \in \vertices \rdbrace\bigr) 
\end{align*}
where $N_F(f)$ denotes the neighbors of fragment $f$ in the higher-level graph of fragments. The hidden representations are updated by combining the incoming messages with the previous hidden representation:
\begin{align*}
h^t_v &= \MLP(h_v^{t-1}, m^t_{\vertices\rightarrow v}, m^t_{\fragmentation \rightarrow v}) \\
h^t_f &= \MLP(h^{t-1}_f, m^{t}_{\vertices\rightarrow f}, m^t_{\fragmentation \rightarrow f}) \\
h^t_e &= \MLP(h^{t-1}_e, h^{t-1}_u, h^{t-1}_v) 
\end{align*}

\begin{figure}[t]
\centering
\includegraphics[width=\linewidth]{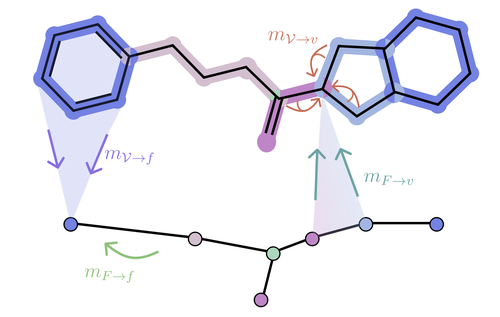}
\vskip -0.05in
\caption{Overview of our model and our fragmentation. The molecular graph is fragmented with our rings-paths fragmentation into three cycles, three paths, and a junction node.
The figure shows the messages $m^{t}_{\fragmentation \rightarrow f}$, $m^{t}_{\vertices\rightarrow f}$  to one fragment $f$, and the messages $m^t_{\vertices\rightarrow v}$, $m^t_{\fragmentation \rightarrow v}$ to one vertex $v$.}
\label{fig:overview}
\end{figure}

The final graph-level readout after $T$ layers is computed by aggregating the multiset of node representation, edge representations, and fragment representations:
\begin{align*}
    \text{OUT}(\ldbrace h^T_v \mid v \in \vertices \rdbrace, \ldbrace h^T_e \mid e \in \edges \rdbrace,  \ldbrace h^T_f \mid f \in \fragmentation \rdbrace)
\end{align*}

Note that the complexity of our \ours{} model is linear in the number of nodes and fragments (assuming that each node is only part of a constant number of fragments).

Additionally, our \ours{} model achieves the highest expressiveness in our Fragment-WL hierarchy and also compared to other fragment-biased GNNs.
\begin{restatable}{theorem}{ourmodel}
    \label{our model}
    \ours{}s are at most as powerful as HLG-WL. Additionally, when using injective neighborhood aggregators and a sufficient number of layers, \ours{}s are as powerful as HLG-WL.
\end{restatable}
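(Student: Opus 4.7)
The plan is to prove the two directions separately, in the style of the standard Xu et al.\ result relating MPNNs to 1-WL, adapted to the augmented graph $\HLG(\graph)$ on which HLG-WL operates. The key observation is that \ours{} is essentially a message-passing network on $\HLG(\graph)$ with two additional ingredients beyond vanilla WL-style message passing: typed messages (the four flows $m^t_{\vertices \to v}$, $m^t_{\fragmentation \to v}$, $m^t_{\vertices \to f}$, $m^t_{\fragmentation \to f}$ are kept separate before being combined in the update MLP) and edge representations $h^t_e$. Both will turn out to be neutral with respect to HLG-WL expressivity.

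For the upper bound (\ours{} at most as powerful as HLG-WL), I would proceed by induction on the layer index $t$ and show the standard refinement statement: whenever HLG-WL assigns the same color to two vertices of $\HLG(\graph)$ after iteration $t$, \ours{} assigns them identical representations after layer $t$. The base case is immediate from the definition of $\features^\HLG$ and the \ours{} initialization. For the inductive step, each of the four typed message aggregations is computed over a subset of the $\HLG(\graph)$-neighborhood using only layer-$(t{-}1)$ states, and $h^t_e$ is a deterministic function of its endpoint states; hence $h^t_v$ and $h^t_f$ are functions of the multiset of $\HLG(\graph)$-neighbor states at layer $t{-}1$, which by the induction hypothesis is coarser than the HLG-WL color multiset at iteration $t{-}1$. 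The graph-level readout OUT preserves this refinement, so any two graphs with identical HLG-WL color histograms produce identical \ours{} outputs.

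For the lower bound (\ours{} at least as powerful as HLG-WL with injective aggregators and a sufficient number of layers), I would adapt the construction of Xu et al.: over countable feature spaces, any injective multiset function can be realized as a sum of carefully chosen feature maps that an MLP can approximate. Concretely, choose each AGG to be injective on multisets and each update MLP to be injective on its input tuple. Then $h^t_v$ becomes a bijection of $\bigl(h^{t-1}_v, \ldbrace h^{t-1}_u \cond u \in \neighborhood(v) \rdbrace, \ldbrace h^{t-1}_f \cond v \in f, f \in \fragmentation \rdbrace\bigr)$, and analogously for $h^t_f$, which encodes exactly the same information as one hash step of HLG-WL on $\HLG(\graph)$. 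An induction analogous to Xu et al.\ (Lemma~2 / Theorem~3) then shows that after $T$ layers at least as large as the number of refinement rounds until HLG-WL stabilizes, \ours{} distinguishes every pair of graphs that HLG-WL distinguishes.

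The main obstacle is the careful bookkeeping around typed messages and edge features. HLG-WL lumps all $\HLG(\graph)$-neighbors of a vertex into a single multiset, whereas \ours{} splits them into typed sub-multisets before aggregation. The equivalence rests on two points. First, the typed splits can be re-assembled losslessly by concatenation inside an injective update MLP, so no information is discarded relative to the merged multiset. Second, the typing cannot smuggle in extra information unavailable to HLG-WL, because $\features^\HLG$ already assigns distinct initial colors to original nodes and to fragment nodes (via $\text{type}(f)$), so the ``type'' of each neighbor is recoverable from its HLG-WL color at every iteration. A secondary subtlety is the edge representation $h^t_e$: since it is a deterministic function of its endpoints' histories (and any constant initial features), it adds no discriminative power beyond what the endpoint colors already carry, and can be folded into the neighbor-multiset argument without affecting either direction.
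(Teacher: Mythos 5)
Your argument is correct and is in the same Xu-et-al.\ spirit as the paper's, but the decomposition differs in two ways worth noting. The paper's proof is much terser: it observes that \ours{} with injective aggregators and sufficiently deep MLPs exactly simulates the WL test on the graph augmented by both $\HLG$ and $\ER$, and then invokes the separately-proved Lemma~\ref{lem:ER} (that adding edge representations leaves WL expressivity unchanged) to conclude equality with HLG-WL. You instead handle the edge representations inline, folding them into your bidirectional induction rather than factoring through a reusable lemma; your justification (edge states are deterministic functions of endpoint histories plus a constant initialization, so they carry no new discriminative information) is exactly the content of the paper's Lemma~\ref{lem:ER}, just localized. Where your write-up genuinely adds value is the explicit treatment of the typed-message subtlety: \ours{} aggregates the four message flows separately before the update MLP, whereas HLG-WL hashes a single undifferentiated neighbor multiset, so a priori the typed splitting could leak extra information. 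Your observation that this cannot happen --- because $\features^\HLG$ already assigns disjoint initial colors to original vertices and to fragment vertices, so every neighbor's ``type'' is recoverable from its HLG-WL color at every iteration --- is the right resolution and is glossed over in the paper's one-line ``exactly models'' claim. The paper's approach buys modularity and brevity by reusing Lemma~\ref{lem:ER}; yours buys a self-contained argument and surfaces a subtlety the paper leaves implicit.
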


\subsection{Molecular fragmentation}
Apart from the question of how to use a fragmentation, there is the equally important question of how to fragment the graph in the first place. A fragmentation $\fragmentation$ has to fulfill two seemingly conflicting goals:
\begin{enumerate}
\itemsep0em
    \item $\fragmentation$ should contain all \emph{important} substructures.
    \item $\fragmentation$ should facilitate generalization.
\end{enumerate}
On the one hand, if the fragmentation is too coarse-grained, important structural features that the model cannot learn may be missed. On the other hand, if the fragmentation is too fine-grained, it is harder to find similarities between graphs, and it exposes the model to the risk of overfitting. Of course, the right level of detail and the right fragmentation scheme critically depend on the application domain. Existing approaches to fragment molecules focus either only on a single substructure \cite{zhu_mathcalo-gnn_2022} or fragment the molecule based on chemical properties \cite{degen_art_2008}, which requires a huge vocabulary.

Our approach is capable of fragmenting the complete graph with only two classes of substructures: rings and paths. At a conceptual level, first, all minimal rings are extracted. Second, the remaining edges are connected at nodes of degree two to form paths.
Third, whenever three or more fragments meet at a node, a junction node is introduced in the higher-level graph and the fragments are only connected to the junction nodes. This prevents most cycles in the higher-level graph. \cref{fig:overview} shows our fragmentation on an example molecule. 

Since our rings-paths fragmentation scheme recovers 5-cycles, we derive the following Corollary from \cref{theo:HLG power}.
\begin{corollary}
    \ours{} with our rings-paths fragmentation scheme is in parts more powerful than 3-WL.
\end{corollary}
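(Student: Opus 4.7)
The plan is to combine the two already-established results in the paper: \cref{theo:HLG power}, which says HLG-WL is in parts more powerful than 3-WL whenever the fragmentation recovers 5-cycles, and \cref{our model}, which says \ours{} with injective aggregators and sufficient layers matches the expressivity of HLG-WL. The corollary then reduces to verifying that our rings-paths fragmentation does in fact recover every 5-cycle of any input graph.

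First, I would recall the definition of ``recovers'': a fragmentation scheme $\fragmentationScheme$ recovers a vocabulary element $V$ if, for every graph $\graph$, every induced subgraph isomorphic to $V$ appears as a fragment in $\fragmentationScheme(\graph)$. So I need to argue that for an arbitrary molecular graph $\graph$, every 5-cycle in $\graph$ is placed into $\fragmentationScheme(\graph)$ by the rings-paths procedure. The procedure begins by extracting all minimal rings, i.e., the smallest simple cycles, which is precisely the set of chordless cycles up to the usual notion used in cycle bases of molecular graphs. A 5-cycle is minimal because a shorter ring inside it would require at least one chord, but the 5-cycle in question is an induced subgraph, so it has no chord; hence it is itself minimal and is included among the extracted rings. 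This is the one step that requires a small argument and is the main (mild) obstacle: making sure that ``minimal rings'' as used in the fragmentation scheme coincides with ``chordless cycles'', so that induced 5-cycles are guaranteed to be captured rather than possibly decomposed into smaller pieces.

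Once recovery is established, I would chain the two theorems. Step one: by \cref{theo:HLG power}, there exist two non-isomorphic graphs $\graph^1, \graph^2$ that HLG-WL distinguishes while 3-WL cannot, under any fragmentation scheme recovering 5-cycles; by the previous paragraph, our rings-paths scheme qualifies, so HLG-WL equipped with this scheme distinguishes $\graph^1$ and $\graph^2$ while 3-WL does not. Step two: by \cref{our model}, a \ours{} instance with injective aggregators and enough layers computes representations that match the HLG-WL coloring on the HLG-augmented graph, so \ours{} also distinguishes $\graph^1$ and $\graph^2$. Together with the fact that 3-WL is not (in parts) more powerful than HLG-WL under these witnesses, this gives the required ``in parts more powerful than 3-WL'' statement for \ours{}.

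Finally, I would note that no additional upper bound argument is needed, because the claim is only ``in parts more powerful'', which is witnessed by a single pair of graphs; the corollary thus follows directly from the two cited results with recovery of 5-cycles as the sole technical bridge.
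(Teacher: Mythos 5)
Your proposal matches the paper's (implicit) argument: the paper simply observes that the rings-paths scheme recovers 5-cycles and then derives the corollary by chaining \cref{theo:HLG power} (HLG-WL in parts more powerful than 3-WL when 5-cycles are recovered) with \cref{our model} (FragNet matches HLG-WL with injective aggregators and enough layers). You additionally make explicit the small observation the paper leaves unstated --- that an induced 5-cycle is chordless and thus counts as a minimal ring in the extraction step --- but the logical structure is identical.
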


While this fragmentation increases the expressiveness and makes it possible to fragment the complete graph, it could require a very large vocabulary, as it has to include paths and cycles of any size found in the dataset. To still effectively encode the fragment type, we introduce a novel ordinal encoding.

\paragraph{Ordinal encoding.} Previous works either use no encoding for the types of fragments \cite{fey_hierarchical_2020} or a simple one-hot encoding \cite{bouritsas_improving_2023}. However, to facilitate the generalization capabilities of a model, the encodings of similar fragments should also be similar.
 In our approach, we introduce an ordinal fragment encoding, which accomplishes this by incorporating two embeddings: one for the fragment class (i.e., $\text{class}(f) \in \{\text{path}, \text{cycle}, \text{junction} \}$) and another that is proportionally scaled based on the fragment size; see \cref{fig:ordinal}. More formally:
 \begin{align*}
  h_f^0 = (e_{\text{class}(f)}, |f| \cdot s_{\text{class}(f)}),
\end{align*}
where $s$ and $e$ are different learned embeddings for the classes of cycle, path, and junction fragments. 
This approach enables the encoding of an infinite vocabulary, accommodating even completely unseen fragments while concurrently supporting effective model generalization.

\begin{figure}[t]
\includegraphics[width=\linewidth]{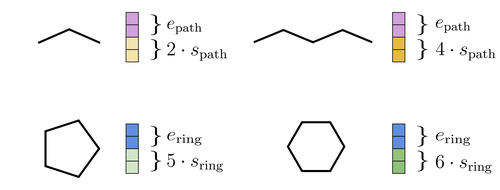}
\centering
\caption{Ordinal encoding applied to a 2-path, 4-path, 5-ring, and 6-ring. The encoding comprises two components: one learned embedding $e$ for every fragment class (i.e., path, cycle, or junction) and another learned embedding $s$ that is proportionally scaled based on the fragment size.}
\label{fig:ordinal}
\end{figure}

\subsection{Long-range interaction}
Many graph-related tasks demand the capability to capture long-range interactions among distant nodes \cite{dwivedi2023long}. 
However, GNNs suffer from a phenomenon called \textit{over-squashing}, where nodes are insensitive to information from distant nodes. This occurs because, in GNNs, the computational graph is equal to the input graph, requiring $L$ layers to exchange information between nodes separated by a distance of $L$. \citet{di_giovanni_over-squashing_2023} show that increasing the number of layers does not help prevent this phenomenon, as GNNs suffer from vanishing gradients. Additionally, their work introduces the commute time between nodes as an indicator for over-squashing. 

Our rings-paths fragmentation, together with the higher-level graph, create shortcuts that can help mitigate over-squashing. Messages can traverse the smaller, higher-level graph of fragments, while the message path still has semantic meaning. 
In \cref{sec:experiments}, we demonstrate that our rings-paths fragmentation reduces the commute time in both synthetic and real-world datasets, enhances the recoverability of messages from distant nodes, and shows strong performance on benchmarks requiring long-range interactions.

\section{Results}
\label{sec:experiments}
While we have theoretically demonstrated that our model attains the highest expressiveness within our Fragment-WL hierarchy, we also empirically evaluate its expressiveness by examining its ability to count substructures. Additionally, we explore its ability to communicate over long distances, its overall predictive effectiveness, and its capacity to generalize.

\textbf{Expressiveness.} To evaluate how well our model can learn to recognize chemically \emph{important} substructures in molecular graphs, we first identify the most common substructures in the ZINC 10k dataset \cite{gómezbombarelli2017automatic} using a chemically-inspired fragmentation scheme, specifically MAGNet \cite{hetzel_magnet_2023}. Subsequently, we train our model to predict substructure counts. Our model is able to identify all substructures close to perfection as demonstrated in \cref{tab:substruct_counting} on a subset of fragments; performance details for all substructures are available in \cref{app:further_experiments}.
  Notably, our model achieves high accuracy even for intricate substructures not present in our vocabulary. This underscores that our fragmentation, based solely on rings and paths, together with our ordinal encoding and the higher-level message passing, proves sufficient for the model to recognize more complex substructures. This is essential for application in, e.g., fragment-based molecule generation, as the task of the encoder is to encode information about such substructures.
\begin{table}[t]
    \centering
    \caption{Our model predicting the number of different substructures occurring in ZINC test dataset.}
    \vskip 0.05in
    \resizebox{\columnwidth}{!}{\begin{tabular}{l|HHHllHllHHlHHHHHHHHHHHHHHHHHH}
\toprule
     \multirow{2}{*}{\textbf{Fragment}}&  
     \multirow{2}{*}{\includegraphics[width=0.05\textwidth]{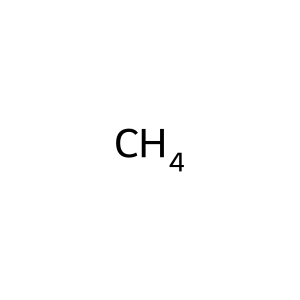}} &  \multirow{2}{*}{\includegraphics[width=0.05\textwidth]{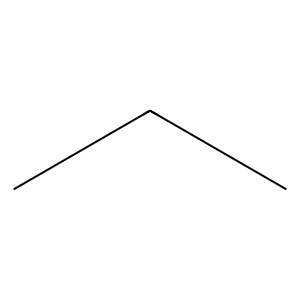}} &  \multirow{2}{*}{\includegraphics[width=0.05\textwidth]{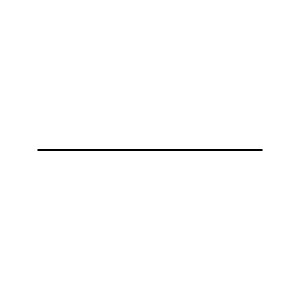}} &  \multirow{2}{*}{\includegraphics[width=0.05\textwidth]{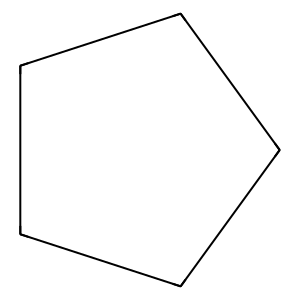}} &  \multirow{2}{*}{\includegraphics[width=0.05\textwidth]{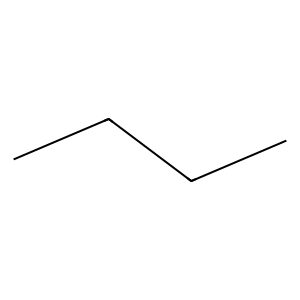}} &  \multirow{2}{*}{\includegraphics[width=0.05\textwidth]{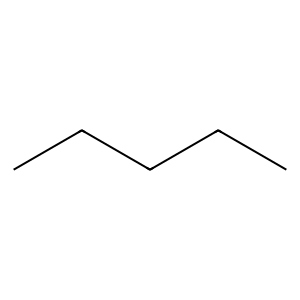}} &  \multirow{2}{*}{\includegraphics[width=0.05\textwidth]{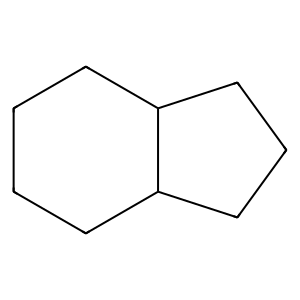}} &  \multirow{2}{*}{\includegraphics[width=0.05\textwidth]{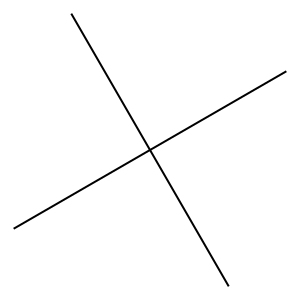}} &  \multirow{2}{*}{\includegraphics[width=0.05\textwidth]{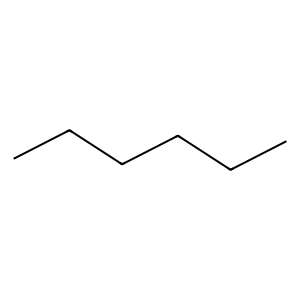}} &  \multirow{2}{*}{\includegraphics[width=0.05\textwidth]{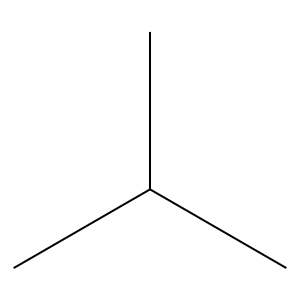}} &  \multirow{2}{*}{\includegraphics[width=0.05\textwidth]{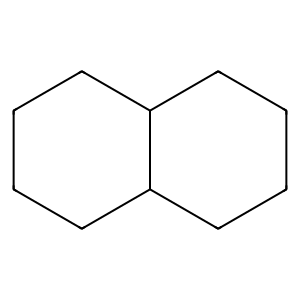}} &  \multirow{2}{*}{\includegraphics[width=0.05\textwidth]{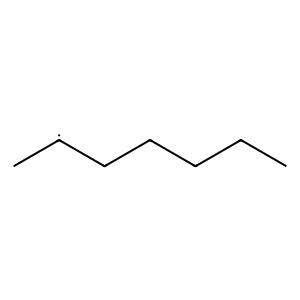}} &  \multirow{2}{*}{\includegraphics[width=0.05\textwidth]{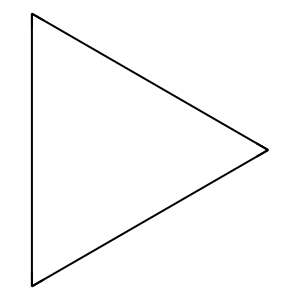}} &  \multirow{2}{*}{\includegraphics[width=0.05\textwidth]{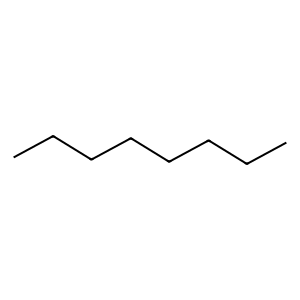}} &  \multirow{2}{*}{\includegraphics[width=0.05\textwidth]{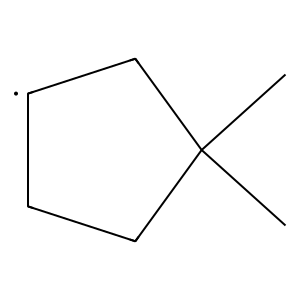}} &  \multirow{2}{*}{\includegraphics[width=0.05\textwidth]{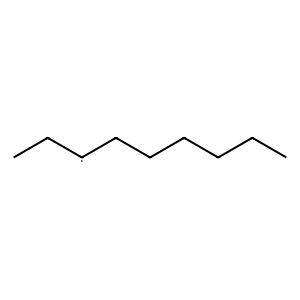}} &  \multirow{2}{*}{\includegraphics[width=0.05\textwidth]{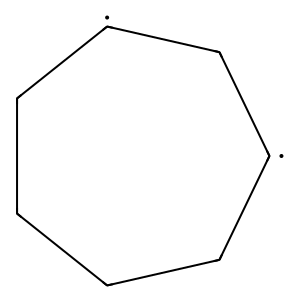}} &  \multirow{2}{*}{\includegraphics[width=0.05\textwidth]{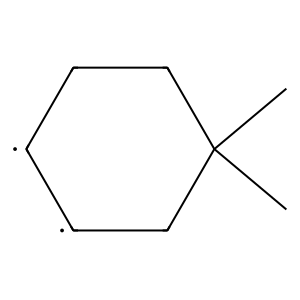}} &  \multirow{2}{*}{\includegraphics[width=0.05\textwidth]{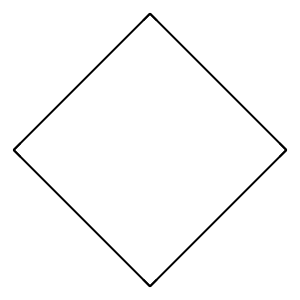}} &  \multirow{2}{*}{\includegraphics[width=0.05\textwidth]{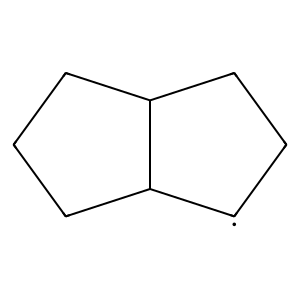}} &  \multirow{2}{*}{\includegraphics[width=0.05\textwidth]{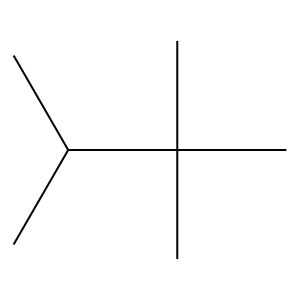}} &  \multirow{2}{*}{\includegraphics[width=0.05\textwidth]{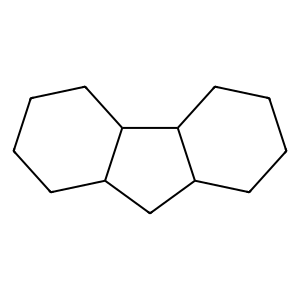}} &  \multirow{2}{*}{\includegraphics[width=0.05\textwidth]{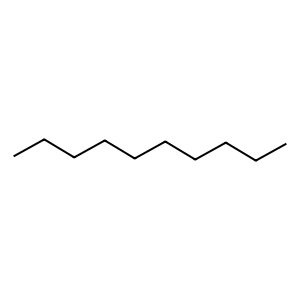}} &  \multirow{2}{*}{\includegraphics[width=0.05\textwidth]{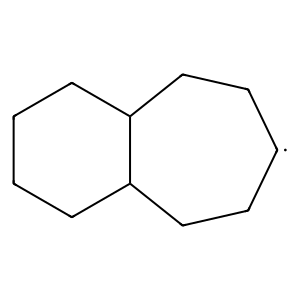}} &  \multirow{2}{*}{\includegraphics[width=0.05\textwidth]{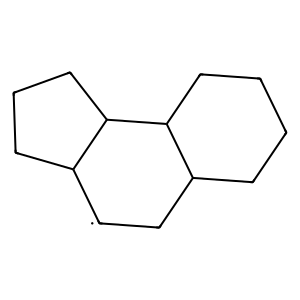}} &  \multirow{2}{*}{\includegraphics[width=0.05\textwidth]{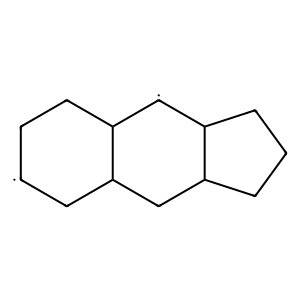}} &  \multirow{2}{*}{\includegraphics[width=0.05\textwidth]{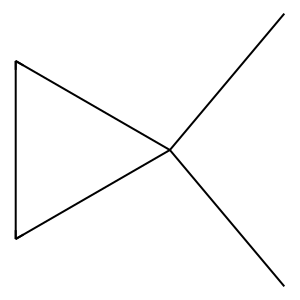}} &  \multirow{2}{*}{\includegraphics[width=0.05\textwidth]{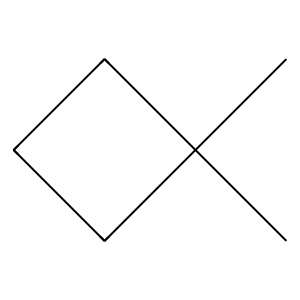}} &  \multirow{2}{*}{\includegraphics[width=0.05\textwidth]{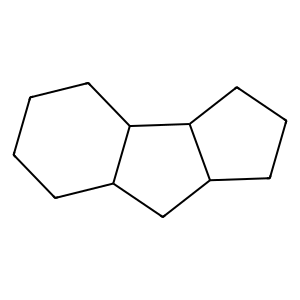}} \\
      &&&&&&&&&&&&&&&&&&&&&&&&&&&&&\\
\midrule
\textbf{Counts}   & 10000 &  7548 &                                                       6198 &                                                       5629 &                                                       3904 &                                                       2204 &                                                       1799 &                                                       1772 &                                                       1348 &                                                        1330 &                                                        1071 &                                                         741 &                                                         573 &                                                         375 &                                                         208 &                                                         204 &                                                         176 &                                                         156 &                                                         113 &                                                         113 &                                                          90&                                                          80 &                                                          77 &                                                          66 &                                                          54 &                                                          45 &                                                          37 &                                                          32 &                                                          31 \\
\textbf{Accuracy} &                                                          1.0 &                                                        0.997 &                                                        0.999 &                                                        0.986 &                                                         0.99 &                                                        0.969 &                                                        0.999 &                                                          1.0 &                                                        0.963 &                                                         0.997 &                                                         0.997 &                                                         0.933 &                                                         0.999 &                                                         0.954 &                                                         0.999 &                                                         0.988 &                                                         0.983 &                                                         0.982 &                                                         0.996 &                                                         0.995 &                                                         0.993 &                                                         0.991 &                                                         0.998 &                                                         0.995 &                                                         0.996 &                                                         0.996 &                                                         0.996 &                                                         0.998 &                                                         0.998 \\
\bottomrule
\end{tabular}
}
    \label{tab:substruct_counting}
\end{table}

\paragraph{Long-Range Interactions.}
Inspired by \citet{di_giovanni_over-squashing_2023}, we investigate the long-range capacity by training models to recover messages sent from a designated node to another node in a graph. As over-squashing is particularly prone to happen at bottlenecks \cite{topping_understanding_2022}, we are considering a graph consisting of two cycles connected by a bottleneck in the form of a path. This graph structure also matches many substructures found in molecules. 

\cref{fig:recovery} shows the recovery rates for a GNN using no fragmentation, a ring fragmentation, as used by CIN \cite{bodnar_weisfeiler_2022}, and our rings-paths fragmentation. A standard GNN is able to recover messages from the neighborhood perfectly, but the performance deteriorates with increasing distance. After the path bottleneck, the performance is equivalent to random guessing. A model with rings fragmentation exhibits a behavior very similar to the baseline. In particular, it performs worse on one node on the path. We assume that this happens due to the additional messages on the rings, creating even more messages that traverse the bottleneck. For our fragmentation with the higher-level graph, we observe that all nodes can recover the messages nearly perfectly. 
The results closely mirror the commute times on those graphs (see \cref{fig:commute} in \cref{app:further_experiments}), validating the theoretical work by \citet{di_giovanni_over-squashing_2023} that proposed the commute time as an indicator for over-squashing.

\begin{figure}[t]
\centering
\includegraphics[width=\linewidth]{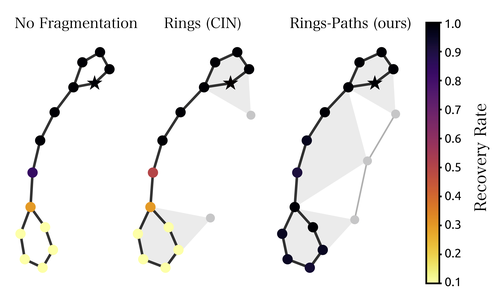}
\vskip -0.05in
\caption{Recovery rate of messages sent from the star node to all other nodes. A recovery rate of 0.1 corresponds to random guessing. The first graph has no fragmentation, the second one a rings fragmentation (like in CIN/CIN++), the third a rings and paths fragmentation (like in our model).}
\label{fig:recovery}
\end{figure}

\textbf{Predictive Performance.}
To evaluate the predictive performance on real-world molecular dataset, we use the long-range peptides  benchmark \cite{dwivedi_benchmarking_2022} and the large-scale molecular benchmark ZINC \cite{Sterling2015ZINC1}. 
We compare our model against standard GNNs, like \textbf{GCN} \cite{kipf2017semisupervised}, \textbf{GIN} \cite{xu_how_2019} or \textbf{GatedGCN} \cite{bresson2018residual}, higher-order GNNs such as \textbf{RingGNN} \cite{chen2023equivalence} or \textbf{3WLGNN} \cite{maron2019provably}, topological GNNs, especially \textbf{CIN} and \textbf{CIN++} \cite{bodnar_weisfeiler_2021, bodnar_weisfeiler_2022} and other fragment-biased GNNs such as \textbf{HIMP} \cite{fey_hierarchical_2020} and \textbf{GSN} \cite{bouritsas_improving_2023}. While our main focus lies on the evaluation against other message-passing GNNs, especially the group of fragment-biased GNNs, we compare against Transformer architectures for completeness. We test \textbf{Graphormer} \cite{ying_transformers_2021}, \textbf{GPS} \cite{rampasek_recipe_2023}, \textbf{GT} \cite{dwivedi_generalization_2021}, \textbf{SAN} \cite{kreuzer2021rethinking} and \textbf{GRIT} \cite{ma_graph_2023}. 

A summary of the used hyperparameters of our model and the experimental details for each experiment can be found in \cref{sec:experimental_details}. All our models adhere to the 500k parameter budget for both datasets. We do not use any additional feature augmentation, such as positional encodings.

To further investigate the long-range capabilities of our model, we empirically measure the performance on the Long-Range Graph Benchmark (LRGB), where the regression performance depends on these capabilities. The datasets we use are the LRGB Peptides-struct and Peptides-func \cite{dwivedi_benchmarking_2022}. The peptides datasets comprise 15535 peptides, each with an average of 150 nodes, that exhibit long amino acid chains with a lower average degree. As the molecules have a strong variability in their diameters, the models must generalize well to graphs of different sizes. These datasets are commonly utilized to benchmark the long-range capabilities of graph transformers against GNNs.

The peptides-struct task is a multi-label graph regression dataset where the objective is to predict various structural properties of the 3-dimensional molecules, including spatial lengths. Note that the node and edge features of the graphs do not contain any 3D information. Performance is evaluated using the mean absolute error (MAE). The peptides-func dataset consists of the same graphs, but requires the models to predict functional properties of the peptides. It is a multi-label classification task with 10 classes such as \textit{Antiviral} or \textit{Antibacterial}. The performance metric is the unweighted mean average precision (AP). 

The results presented in \cref{tab:peptides} demonstrate that our model achieves state-of-the-art performance for GNNs on both datasets. Notably, our model surpasses almost all graph transformers, even though transformers have an advantage in capturing long-range interactions due to their consideration of messages between all pairs of nodes, resulting in quadratic complexity. GRIT stands out as the only model that surpasses our performance, displaying exceptional results across all datasets.

Furthermore, we evaluate our model on ZINC \cite{Sterling2015ZINC1}, a collection of chemical compounds. The benchmark consists of a dataset that contains only a subset of 10,000 molecules and a dataset consisting of all 250,000 molecules. For both, we predict the penalized logP, which characterizes the drug-likeness of a molecule \cite{gómezbombarelli2017automatic}, and measure performance using MAE. 

\begin{table}[t]
    \centering
    \sisetup{detect-all}
    \caption{Predictive performance for multiple models on Peptides-struct and func. Best Transformer and best GNN are highlighted.}
    \vskip 0.05in
    \resizebox{\columnwidth}{!}{\begin{tabular}{
  @{}
  l
  l
  l
  l%
  l%
  l%
  l%
  l%
  l%
  @{}
}
\toprule
\multirow{3}{*}{\textbf{Type}} & \multirow{3}{*}{\textbf{Model}} & \multicolumn{2}{c}{\textbf{Peptides-}} \\
\cmidrule(lr){3-4}
 & & \textbf{Struct} & \textbf{Func} \\
 & & \text{(MAE $\downarrow$)} & \text{(AP $\uparrow$)} \\
 \midrule

\multirow{5}{*}{{Transformer}}
& GPS & 0.2500 $\pm$ 0.0012 & 0.6535 $\pm$ 0.0041\\
& SAN+LapPE& 0.2683 $\pm$ 0.0043 & 0.6384 $\pm$ 0.0121\\
& SAN+RWSE& 0.2545 $\pm$ 0.0012 & 0.6439 $\pm$ 0.0075\\
& GRIT & \textbf{0.2460} $\pm$ 0.0012 & \textbf{0.6988} $\pm$ 0.0082\\
\midrule
\midrule

\multirow{4}{*}{{Basic GNNs}} 
& GCN & 0.3496 $\pm$ 0.0013 & 0.5930 $\pm$ 0.0023\\
& GIN & 0.3547 $\pm$ 0.0045 & 0.5498 $\pm$ 0.0079\\
& GatedGCN & 0.3420 $\pm$ 0.0013 & 0.5864 $\pm$ 0.0035\\
& GatedGCN+RWSE & 0.3357 $\pm$ 0.0006 & 0.6069 $\pm$ 0.0035\\
\midrule

\multirow{1}{*}{{Topological}} 
 & CIN++ & 0.2523 $\pm$ 0.0013 & 0.6569 $\pm$ 0.0117 \\
\midrule

\multirow{2}{*}{Fragment-Biased} 
& HIMP & 0.2503 $\pm$ 0.0008 & 0.5668 $\pm$ 0.0149\\
& \ours{} (ours) & \textbf{0.2462} $\pm$ 0.0021 & \textbf{0.6678} $\pm$ 0.005\\
  
\bottomrule

\end{tabular}

}
    \vspace{-0.1in}
\label{tab:peptides}
\end{table}

As shown in \cref{tab:ZINC}, basic GNNs, which lack both high expressiveness and strong long-range capacity, exhibit significantly inferior performance. All fragment-biased approaches outperform the higher-order GNNs despite being less computationally demanding. Our model shows the best performance among all GNNs, with very similar results to CIN++ on ZINC-subset. In addition, our model outperforms most transformer architectures as well and achieves results comparable to those of GRIT on the full dataset.

\begin{table}[t]
    \sisetup{detect-all}
    \caption{Predictive performance for multiple models on ZINC 10k and ZINC full. Best Transformer and best GNN are highlighted.}
    \vskip 0.05in
    \resizebox{\columnwidth}{!}{\begin{tabular}{
  @{}
  l
  l
  l
  l%
  l%
  l%
  l%
  @{}
}
\toprule
\multirow{3}{*}{\textbf{Type}} & \multirow{3}{*}{\textbf{Model}} & \multicolumn{2}{c}{\textbf{ZINC}}\\
\cmidrule(lr){3-4}
 & &  \textbf{10k} & \textbf{Full}\\
 & & \text{(MAE $\downarrow$)}  & \text{(MAE $\downarrow$)}\\
 \midrule

\multirow{7}{*}{{Transformer}}
& Graphormer & 0.122 $\pm$ 0.006 & 0.052 $\pm$ 0.005\\
& GPS & 0.070 $\pm$ 0.006 & \text{-}\\
&  GT & 0.226 $\pm$ 0.014 & -\\
& SAN & 0.139 $\pm$ 0.006 & -\\
& Graphormer-URPE & 0.086 $\pm$ 0.007 & 0.028 $\pm$ 0.002\\
& Graphormer-GD & 0.081 $\pm$ 0.009 & 0.025 $\pm$ 0.004\\
& GRIT & \textbf{0.059} $\pm$ 0.002 & \textbf{0.023} $\pm$ 0.001 \\
\midrule
\midrule

\multirow{4}{*}{{Basic GNNs}} 
& GCN & 0.367 $\pm$ 0.011 & 0.113 $\pm$ 0.002\\
& GIN & 0.526 $\pm$ 0.051 & 0.088 $\pm$ 0.002\\
& GAT & 0.384 $\pm$ 0.007 & 0.111 $\pm$ 0.002\\
& GraphSAGE & 0.398 $\pm$ 0.002 & 0.126 $\pm$ 0.003\\
\midrule

\multirow{2}{*}{{Higher-order}} 
& RingGNN & 0.353 $\pm$ 0.019 & - \\
& 3WLGNN & 0.303 $\pm$ 0.068 & - \\
 \midrule

\multirow{2}{*}{{Topological}} & CIN-Small & 0.094 $\pm$ 0.004 & 0.044 $\pm$ 0.003\\    
 & CIN++ & \textbf{0.077} $\pm$ 0.004 & 0.027 $\pm$ 0.007\\
\midrule

\multirow{4}{*}{Fragment-Biased} 
& HIMP & 0.151 $\pm$ 0.006 & 0.036 $\pm$ 0.002\\
& GSN & 0.115 $\pm$ 0.012 & \text{-} \\
& Autobahn & 0.106 ± 0.004 & 0.029 ± 0.001 \\
& \ours{} (ours) & \textbf{0.0775} $\pm$ 0.005& \textbf{0.0237} $\pm$ 0.00\\
\bottomrule

\end{tabular}

}
\label{tab:ZINC}
\end{table}

\textbf{Generalization.}
To test the generalization capabilities of our model with the ordinal fragment encoding, we use a test set containing out-of-distribution molecules with completely unseen fragments. For this, we use the ZINC dataset and remove all molecules 
\begin{wraptable}{r}{4.5cm}
\vspace{-0.2in}
    \centering
    \sisetup{detect-all}
    \caption{We remove all molecules that contain 7-rings from the training set and test on all molecules, i.e., also the ones with 7-rings.}
    \vskip 0.05in
    \resizebox{4.5cm}{!}{\begin{tabular}{
  @{}
  l
  S[table-format=1.2]
  S[table-format=1.2]
  S[table-format=1.2]
  S[table-format=1.2]
  S[table-format=1.2]
  S[table-format=1.2]
  @{}
}
\toprule
 \multirow{3}{*}{\textbf{Model}} & \multicolumn{2}{c}{\textbf{ZINC 10k}} \\
 \cmidrule(lr){2-3}
 & \textbf{training} & \textbf{test} \\
 & \text{(MAE $\downarrow$)}  & \text{(MAE $\downarrow$)} \\
 \midrule
GRIT & 0.02 & 0.61 \\
 \midrule
\ours{} (ours) & 0.08 & \textbf{0.34} \\
\bottomrule
\end{tabular}
}
    \label{tab:ood}
\vspace{-0.1in}
\end{wraptable}
 containing a 7-ring from the training data. After training, we test on all molecules from the test set, thus also containing 7-rings that were not seen during training. 
The results in \cref{tab:ood} demonstrate that our model achieves an error 1.8 times lower than GRIT, showcasing the superior generalization capabilities.
Our better generalization capabilities can also be seen in the normal ZINC benchmark. In \cref{tab:rarity}, we group the ZINC dataset into groups based on the frequency of the rarest fragment. Our model outperforms HIMP everywhere and GRIT for graphs containing rare fragments.

\begin{table}[t]
    \centering
    \sisetup{detect-all}
    \vspace{-0.1in}
    \caption{Comparison of the MAE of GRIT and our model on ZINC-full. Graphs are grouped by the frequency of their rarest fragment.}    
    \vskip 0.05in
    \resizebox{\columnwidth}{!}{\begin{tabular}{
  @{}
  l|cccc
  @{}
}
\toprule
Frequency of  & \multirow{2}{*}{$<0.1\%$}& \multirow{2}{*}{$<1\%$}  & \multirow{2}{*}{$<10\%$} & \multirow{2}{*}{$\ge 10\%$} \\
rarest Fragment & & & & \\
\midrule
HIMP (MAE) & 14.4 & 0.48 & 0.15 & 0.030 \\
GRIT (MAE) & 9.5 & 0.26 & 0.026 & 0.018\\ 
\ours{} (ours) (MAE) & 5.3 & 0.15 & 0.045 & 0.021 \\
\bottomrule
\end{tabular}
}
    \label{tab:rarity}
\end{table}
Lastly, we test our model's capability to transfer the knowledge to a completely different dataset. We train on ZINC and predict the penalized logP on QM9 \cite{Wu2017MoleculeNetAB}. In \cref{tab:zero_shot_qm9}, we see that \ours{} achieves the lowest MAE of $1.12$, outperforming GRIT and HIMP, 
\begin{wraptable}{l}{3.5cm}
\vspace{-0.2in}
    \centering
    \sisetup{detect-all}
    \caption{Zero shot generalization to QM9}
    \vskip 0.05in
    \resizebox{4cm}{!}{\begin{tabular}{
  @{}
  l
  S[table-format=1.2]
  S[table-format=1.2]
  S[table-format=1.2]
  S[table-format=1.2]
  S[table-format=1.2]
  S[table-format=1.2]
  @{}
}
\toprule
 \multirow{1}{*}{\textbf{Model}} & {\textbf{QM9} (MAE $\downarrow$)} \\
 \midrule
HIMP & 3.43 \\
GRIT & 1.22 \\
\ours{} (ours) & \textbf{1.12} \\
\bottomrule
\end{tabular}
}
    \label{tab:zero_shot_qm9}
\end{wraptable}
suggesting that our model generalizes better to unseen data distributions due to our inductive bias and corresponding fragmentation. 
In summary, we showcased the generalization capabilities of our model on both a completely unseen dataset and a slightly shifted data distribution. The generalization capabilities also help our model perform better on rare fragments.

\section{Limitations}\label{sec:limitations}
While our method has strong predictive performance, it can further be improved: first, our fragmentation method and ordinal encoding is tailored towards molecules and not applicable to large densely connected graphs such as citation or social networks as one will find vast amount of meaningless fragments, introducing a lot of noise. Second, while our generalization experiments show superior performance over GRIT, \cref{tab:rarity} also shows that GRIT performs slightly better on molecules with frequent fragments. We leave it for future work to further improve fragment-biased models on those data.

\section{Conclusion}
\label{sec:conclusion}
In this work, we proposed a new expressivity measure, the Fragment-WL test, which provides a hierarchy on existing fragment-biased GNNs. Based on these insights, we proposed an expressive new model, which, together with our new fragmentation, outperforms all GNN approaches and most transformer architectures. Moreover, our model demonstrates predictive performance comparable to that of the best transformer model, while surpassing it in terms of generalization capabilities, despite having only linear complexity. This positions our approach as a robust solution for a variety of molecular modeling tasks. 

We believe that our work lays the foundation for several promising future directions. These include extending the expressivity hierarchy to, for example, incorporate orbit information \cite{bouritsas_improving_2023}. In addition, future work could seek to further improve the predictive performance on frequent data, or using fragment-biases in multi-task or meta-learning settings.

\section*{Acknowledgements}
This project is supported by the Bavarian Ministry of Economic Affairs, Regional Development and Energy with funds from the Hightech Agenda Bayern. Additionally, it is supported by the Helmholtz Association under the joint research school ``Munich School for Data Science - MUDS" and by the German Federal Ministry of Education and Research (BMBF) under Grant No. 01IS18036B.

\section*{Impact Statement}
Among other contributions, this work presents an approach for predicting the properties of molecules. In the area of machine learning for drug discovery, such methods can sometimes be used for harmful purposes. This also applies to our research, since it might help to discover or create dangerous substances. Despite these concerns, we believe that the benefits of our work outweigh the risks.

\bibliography{references}
\bibliographystyle{icml2024}

\newpage
\appendix
\onecolumn
\section{Proofs}\label{app:proofs}
This chapter presents the proofs for the theorems introduced in \cref{sec:theory}, and the expressiveness analysis of existing fragment-biased and topological GNNs. We will first introduce general concepts that will later help to bound the expressiveness of different models and our Fragment WL test. 
\subsection{Color refinement and expressiveness: Useful definitions and lemmas}
To prove the power of different graph coloring algorithms, it will be useful to first introduce the definition of color refinement. The intuition is that a "finer" coloring contains more information than a "corser" coloring.
\begin{definition}
Let $c,d$ be colorings of a graph $\graph$. The coloring $c$ refines $d$ (we write $c \sqsubseteq d$) if there exists a function $h$ such that $h(c_v) = d_v$ for all $v \in \vertices$.
\end{definition}
We will sometimes write $c_v \sqsubseteq d_v$ if $c \sqsubseteq  d$  and the set of vertices $\vertices$ is clear from the context.
\begin{example}
    Let $c^{(t)}$ be the coloring of iteration $t$ of the WL test. Then one can easily show that $c^{(t)}_v \sqsubseteq c^{(l)}_v$ for $l \le t$ as $c^{(t)}_v$ contains the information of all previous colorings $c^{(l)}_v$.
\end{example}
Note that an alternative definition of color refinement \cite{bodnar_weisfeiler_2021} is: $c \sqsubseteq d$ iff $c_v = c_w$ implies $d_v = d_w$ for all $v,w \in \vertices$. It is easy to see that the two definitions are equivalent.
Next, we extend our definition to arbitrary functions and not just colorings.
\begin{definition}
Let $a,b$ be two functions over the set of vertices $\vertices$. Then $a \sqsubseteq b$ if there exists a function $h$ such that $h(a(v)) = b(v)$ for all $v \in \vertices$.
\end{definition}
Intuitively, $a \sqsubseteq b$ means that we can compute $b(v)$ from the result of $a(v)$. So $a(v)$ contains more or the same information as $b(v)$. Again, we will sometimes simply write $a_v \sqsubseteq b_v$ with $a_v := a(v)$, $b_v := b(v)$ if $a \sqsubseteq b$ and if the set of vertices $\vertices$ is clear from the context.
\begin{example}
    Let $c^{(t)}$ be the coloring of iteration $t$ of the WL test. Then, because of the injectiveness of the hash function $\hash$ in \cref{eq:hash}:
    \begin{align*}
    c^{(t)}_v \sqsubseteq \ldbrace c^{(t-1)}_w \mid w \in \neighborhood(v) \rdbrace.
    \end{align*}
    Note that we use the simplified notation here. The right and left-hand side are actually the functions that map from $v \in \vertices$ to these terms. Intuitively, this shows that one can compute the previous color of all neighbors from the color of a node.
\end{example}

It is easy to see that the refinement relation is transitive, i.e., $a \sqsubseteq b$ and $b \sqsubseteq c$ imply $a \sqsubseteq c$.

We will now formally define the expressive power of an algorithm with respect to the ability to distinguish non-isomorphic subgraphs. 
\begin{definition}
A function $f$ is (in parts) \emph{more powerful} than a function $g$ if there exist two non-isomorphic graphs $\graph^1$, $\smash{\graph^2}$ such that $f$ can distinguish them \[\smash{f(\graph^1) \neq f(\graph^2)}\] whereas $g$ cannot distinguish them \[\smash{g(\graph^1) = g(\graph ^2)}.\] 
\end{definition}
Note that this relation is \emph{not} anti-symmetric, i.e. $f$ can be (in parts) more powerful than $g$, and conversely, $g$ can also be (in parts) more powerful than $f$. Hence, we introduce the following stronger anti-symmetric relation: 
\begin{definition}
A function $f$ is \emph{strictly more powerful} than $g$ (denoted as $f > g$) if 
\begin{enumerate}
    \item $f$ is more powerful than $g$
    \item and $g$ is not (in parts) more powerful than $f$.
\end{enumerate}
\end{definition}
Additionally, we write we write $g \le f$ if a function $g$ is not more powerful than $f$.

Next, we will prove a connection between color refinement and expressiveness: a function that always produces a finer coloring cannot be less powerful than a function with a coarser coloring.
\begin{lemma}
    \label{lem:refinement}
    Let $f,g$ be functions with $f \sqsubseteq g$ for all graphs. Then, $g$ is not more powerful than $f$, i.e., $g \le f$.
\end{lemma}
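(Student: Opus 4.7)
The plan is to unfold the definitions and reduce the claim to a statement about multisets of vertex colors. For WL-style procedures, the graph-level output $f(\graph)$ is naturally interpreted as the multiset $\ldbrace f(v) \mid v \in \vertices \rdbrace$, and similarly for $g$; this is the level at which isomorphism tests compare graphs. With that convention, showing $g \le f$ amounts to proving the contrapositive: whenever $f$ fails to distinguish a pair of non-isomorphic graphs, $g$ also fails. That is: $f(\graph^1) = f(\graph^2) \implies g(\graph^1) = g(\graph^2)$.

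So first I would fix non-isomorphic graphs $\graph^1,\graph^2$ with $f(\graph^1) = f(\graph^2)$. Equality of multisets gives a bijection $\pi:\vertices^1 \to \vertices^2$ with $f(v) = f(\pi(v))$ for every $v \in \vertices^1$. The goal is then to push this equality through to $g$, which is exactly what the refinement hypothesis is designed to do.

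Next I would invoke $f \sqsubseteq g$ to produce a function $h$ with $h(f(v)) = g(v)$. The subtle point is that the definition of $\sqsubseteq$ is stated on a single vertex set, so strictly speaking the hypothesis yields refinement witnesses $h_1$ on $\vertices^1$ and $h_2$ on $\vertices^2$ which need not agree a priori. To obtain a single $h$ valid on both graphs simultaneously, I would apply the hypothesis to the disjoint union $\graph^1 \sqcup \graph^2$, whose vertex set is $\vertices^1 \cup \vertices^2$; this works because the WL-type colorings considered in the paper are determined connected-component-wise, so $f$ and $g$ on the disjoint union agree with $f$ and $g$ on each component. The hypothesis then gives an $h$ satisfying $h(f(v)) = g(v)$ for every $v \in \vertices^1 \cup \vertices^2$.

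Finally, applying $h$ elementwise along the bijection $\pi$ yields
\[
  \ldbrace g(v) \mid v \in \vertices^1 \rdbrace
  = \ldbrace h(f(v)) \mid v \in \vertices^1 \rdbrace
  = \ldbrace h(f(\pi(v))) \mid v \in \vertices^1 \rdbrace
  = \ldbrace g(w) \mid w \in \vertices^2 \rdbrace,
\]
so $g(\graph^1) = g(\graph^2)$. Hence no non-isomorphic pair witnesses that $g$ is more powerful than $f$, and $g \le f$ by definition. The main obstacle is the consistency point in the third paragraph, i.e., justifying a single refinement map $h$ across both graphs from a hypothesis phrased per-graph; once that is clarified (either by the disjoint-union argument or by noting that the colors are determined purely by local structure, so a global $h$ exists by construction), the remainder is a routine multiset computation.
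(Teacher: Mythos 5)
Your proof is correct and takes a genuinely different route from the paper's. The paper argues by contradiction with a counting/partition argument: it assumes $g$ distinguishes a pair that $f$ does not, picks a $d$-color class $\alpha$ whose cardinalities differ between the two graphs, observes that since $c \sqsubseteq d$ the $c$-colors of nodes in that class are disjoint from the $c$-colors of nodes outside it, and concludes that some $c$-color class $\beta \subseteq D_\alpha$ must also have mismatched cardinalities --- contradiction. You instead argue the contrapositive directly: equality of the $f$-multisets yields a bijection $\pi$ matching $f$-colors, and pushing this bijection through the refinement witness $h$ immediately yields equality of the $g$-multisets. Your version is more constructive and arguably more transparent; the paper's version avoids having to manufacture a bijection but at the cost of a pigeonhole-style count. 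Both buy the same theorem. One point in your favor: you explicitly surface the subtlety that the refinement witness $h$ is quantified per-graph and must be made consistent across $\graph^1$ and $\graph^2$ (your disjoint-union reduction, or the observation that WL-style colors are determined by local structure, handles this). The paper silently assumes this consistency when it compares $c$-colors of a node in $\graph^1$ with $c$-colors of a node in $\graph^2$; your write-up makes the implicit hypothesis explicit, which is a genuine improvement in rigor even if the substance is the same.
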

\begin{proof}
    Assume for the sake of contradiction that there exist non-isomorphic graphs $\graph^1, \graph^2$ that can be distinguished by $g$ but not by $f$.
    Let $d$ be the coloring obtained with $g$, and $c$ be the coloring obtained with $f$. The multiset of colors $d$ has to differ for $\graph^1$ and $\graph^2$, i.e. there exists a color $\alpha$ with 
    \begin{align*}
        D_\alpha^1 &:= \{ v \mid d_v = \alpha, \: v \in \vertices^1 \} \\
        D_\alpha^2 &:= \{ v \mid d_v = \alpha, \: v \in \vertices^2 \}
    \end{align*}
    such that
    \begin{align}
        \label{eq:diff}
        |D_\alpha^1| \neq |D_\alpha^2|.
    \end{align}
    Since $c$ refines $d$ no node $v$ in $D_\alpha^1$ and $D_\alpha^2$ can share a color $c_v$ with another node not in $D_\alpha^1$ and $D_\alpha^2$. Hence, the set of colors $c$ of nodes in $D_\alpha^1$ and $D_\alpha^2$ is disjoint from the set of colors $c$ for the other nodes in the graph. But because of \ref{eq:diff} there has to exist a color $\beta$ with
    \begin{align*}
        C_\beta^1 &:= \{ v \mid c_v = \beta, \: v \in \vertices^1 \}  \subseteq D_\alpha^1\\
        C_\beta^2 &:= \{ v \mid c_v = \beta, \: v \in \vertices^2 \} \subseteq D_\alpha^2
    \end{align*}
    and
    \begin{align*}
        |C_\beta^1| \neq |C_\beta^2|.
    \end{align*}
    This contradicts the initial assumption. 
\end{proof}

Note that all our augmentation functions introduced in \cref{sec:theory} only add information to the graph, or more formally:
\begin{definition}
A function $g$ from graphs to graphs is called additive if the set of nodes and edges does not decrease, i.e., $g(\vertices, \edges, \features) = (\vertices', \edges', \features')$ with  $\vertices \subseteq \vertices'$ and $\edges \subseteq \edges'$.
\end{definition}
But by adding too much information, one could completely destroy the initial structure of the graph (e.g., make every graph a complete graph). Hence, we need the additional condition that it should be possible to recover the original graph from the augmented graph.
\begin{definition}
An additive function $g$ from graphs to graphs (i.e, $g(\graph) = \graph'$) is called reversible if there exists a function $h$ for vertices $v \in \vertices'$ such that 
\begin{align}
    \label{eq:features}
    h(\features'_v)  = 
    \begin{cases}
       \features_v & v \in \vertices \\
       \bot & v \notin \vertices
    \end{cases}
\end{align}
 and a function $\varepsilon$ for edges $e = \{u,v\} \in \edges'$ such that
\begin{align}
\varepsilon(\features'_u, \features'_v) = 
\begin{cases}
\label{eq:edges}
1 & e \in \edges \\
0 & e \notin \edges.
\end{cases}
\end{align}
\end{definition}
By only adding such reversible information the WL test cannot become less powerful:
\begin{lemma}
    \label{lem:lower bound}
    Let $g$ be a reversible function. Then $g$-WL is not less powerful than WL.
\end{lemma}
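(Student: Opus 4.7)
The plan is to reduce the claim to Lemma \ref{lem:refinement} by showing that the $g$-WL coloring, when restricted to the original vertices $\vertices$, refines the WL coloring on $\graph$. The main technical work is a careful induction that leverages both reversibility conditions: the feature-recovery function $h$ from equation (\ref{eq:features}) and the edge-recovery function $\varepsilon$ from equation (\ref{eq:edges}).

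First I would establish the refinement pointwise: for every iteration $t$ and every $v \in \vertices$, the $g$-WL color $c'^{(t)}_v$ refines the WL color $c^{(t)}_v$. The base case $t=0$ is immediate from equation (\ref{eq:features}), since $h(\features'_v) = \features_v$ lets us recover $\hash(\features_v)$ from $\hash(\features'_v)$. For the inductive step, the injectivity of $\hash$ lets us extract from $c'^{(t)}_v$ both $c'^{(t-1)}_v$ and the multiset $\ldbrace c'^{(t-1)}_w \mid w \in \neighborhood'(v) \rdbrace$, where $\neighborhood'$ is the neighborhood in $g(\graph)$. Because each $c'^{(t-1)}_w$ refines the initial color $\hash(\features'_w)$ through the chain of WL iterations, the features $\features'_v, \features'_w$ are recoverable, and equation (\ref{eq:edges}) together with $\varepsilon$ lets us filter this multiset down to the multiset over $\neighborhood(v)$. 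The inductive hypothesis then converts this filtered multiset and $c'^{(t-1)}_v$ into exactly the ingredients needed to reconstruct $c^{(t)}_v = \hash(c^{(t-1)}_v, \ldbrace c^{(t-1)}_w \mid w \in \neighborhood(v) \rdbrace)$.

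Next I would lift the pointwise refinement to multisets. Mimicking the argument in the proof of Lemma \ref{lem:refinement}, if WL distinguishes $\graph^1$ and $\graph^2$ via differing color multisets on $\vertices^1$ vs $\vertices^2$, then the restricted $g$-WL multisets over $\vertices^1$ and $\vertices^2$ must already differ, since applying the common refinement function gives the differing WL multisets. The last piece is to propagate this difference to the full $g$-WL color multiset, which also contains colors of the added vertices in $\vertices' \setminus \vertices$. Here I would use that equation (\ref{eq:features}) recognizes originality from $\features'_v$ alone ($h(\features'_v) = \bot$ iff $v \notin \vertices$), so the original/non-original flag is recoverable from every $c'^{(t)}_v$; consequently no $g$-WL color is shared between original and added vertices, and any count discrepancy restricted to $\vertices$ cannot be masked by the added vertices.

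The main obstacle I anticipate is the inductive step, and specifically showing that the $g(\graph)$-neighborhood can be filtered to the original neighborhood purely as a function of $c'^{(t)}_v$. This is where all pieces of the reversibility definition genuinely come into play: the injectivity of $\hash$ to unpack colors, $h$ to recover node features, and $\varepsilon$ to identify original edges. Once this refinement is established, the remaining multiset bookkeeping is routine and the conclusion follows directly from Lemma \ref{lem:refinement}.
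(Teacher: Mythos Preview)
Your proposal is correct and follows essentially the same approach as the paper: an induction showing that the $g$-WL coloring (restricted to $\vertices$) refines the WL coloring, using $h$ for the base case and $\varepsilon$ to filter the augmented neighborhood in the inductive step, and then concluding via Lemma~\ref{lem:refinement}. Your treatment is in fact slightly more careful than the paper's in one respect: the paper simply invokes Lemma~\ref{lem:refinement} after establishing refinement on $\vertices$, whereas you explicitly argue that the added vertices in $\vertices'\setminus\vertices$ cannot mask a multiset discrepancy because the $\bot$-flag from equation~(\ref{eq:features}) separates their colors from those of original vertices.
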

\begin{proof}
    Let $c^{(t)}$ be the coloring obtained by the $t$-th iteration of the WL-test, and $d^{(t)}$  the coloring of $g$-WL.
    We will show by induction that there exists a function $h$ such that $h(d_v^{(t)}) = c_v^{(t)}$ for $v \in \vertices$, i.e $d_v^{(t)} \sqsubseteq c_v^{(t)}$. 
    For $t=0$, this follows immediately from \cref{eq:features}. 
    For the induction step, note 
    \begin{align*}
        d_v^{(t)} \sqsubseteq \ldbrace d_u^{(t-1)} \mid u \in \neighborhood_{\graph'}(v) \rdbrace.
    \end{align*}
    Now note that the function $\varepsilon$ (\cref{eq:edges}) makes it possible to reconstruct the neighborhood in $\graph$ based on the features $\features'$ and neighborhood in $\graph'$. Since $d^{(t)}$ only refines the features $\features'$, we can also reconstruct the neighborhood in $\graph$ based on $d^{(t)}$ and, hence,
    \begin{align*}
        \ldbrace d_u^{(t-1)} \mid u \in \neighborhood_{\graph'}(v) \rdbrace \sqsubseteq \ldbrace d_u^{(t-1)} \mid u \in \neighborhood_{\graph}(v) \rdbrace
    \end{align*}
    and by induction hypothesis
    \begin{align*}
        \ldbrace d_u^{(t-1)} \mid u \in \neighborhood_{\graph}(v) \rdbrace \sqsubseteq \ldbrace c_u^{(t-1)} \mid u \in \neighborhood_{\graph}(v) \rdbrace.
    \end{align*}
    So, taken together, we have
    \begin{align}   
    \label{eq:rev1}
        d_v^{(t)} \sqsubseteq \ldbrace c_u^{(t-1)} \mid u \in \neighborhood_{\graph}(v) \rdbrace.
    \end{align}
    Additionally, note that
    \begin{align}
        \label{eq:rev2}
        d_v^{(t)} \sqsubseteq d_v^{(t-1)} \stackrel{\text{IH}}{\sqsubseteq} c_v^{(t-1)}.
    \end{align}
    By combining \cref{eq:rev1} and \cref{eq:rev2}, we get
    \begin{align*}
    d_v^{(t)} \sqsubseteq \left ( c_v^{(t-1)},\: \ldbrace c_u^{(t-1)} \mid u \in \neighborhood_{\graph}(v) \rdbrace \right ) \sqsubseteq c_v^{(t)}.
    \end{align*}
     Hence, $d$ refines $c$, and by \cref{lem:refinement} $g$-WL is not less powerful than WL.
\end{proof}

Now that we have found a lower bound of the expressiveness, we will also give an upper bound of the expressiveness. The idea is that a graph augmentation function does not increase the power of a coloring algorithm $f$ (e.g., the WL test) if all the information added by $g$ can also be computed from the coloring obtained with $f$. Or, to put it differently, a graph augmentation function $g$ does not increase the expressiveness if it is possible to compute the set of colors on $g(G)$ from the set of colors on $G$.
\begin{lemma}
\label{lem:upper bound}
Let $g$ be a function that augments a graph, i.e., a function from graphs to graphs. Let $f$ be a coloring function. If there exists a function $h$ such that  
\begin{align*}
f(g(G)) = h(f(G))
\end{align*}
then $f \circ g$ is not more powerful than $f$.
\end{lemma}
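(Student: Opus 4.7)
The plan is to prove this by contraposition on the definition of ``more powerful''. Recall that $f \circ g$ being more powerful than $f$ would mean there exist two non-isomorphic graphs $\graph^1, \graph^2$ such that $f(g(\graph^1)) \neq f(g(\graph^2))$ while $f(\graph^1) = f(\graph^2)$. My goal is to rule out this possibility under the stated hypothesis.

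First, I would fix an arbitrary pair of non-isomorphic graphs $\graph^1, \graph^2$ and suppose that $f(\graph^1) = f(\graph^2)$. The key step is then a one-line chain of equalities using the hypothesis that $f(g(\graph)) = h(f(\graph))$ for all graphs: apply this identity at $\graph^1$ and at $\graph^2$, and use the fact that $h$, being a function, sends equal inputs to equal outputs. Explicitly,
\[
f(g(\graph^1)) = h(f(\graph^1)) = h(f(\graph^2)) = f(g(\graph^2)).
\]
This shows that whenever $f$ fails to distinguish a pair of graphs, $f \circ g$ also fails to distinguish them. Equivalently, every pair of graphs distinguished by $f \circ g$ is already distinguished by $f$, so $f \circ g$ cannot be ``in parts more powerful'' than $f$ in the sense of the definition given earlier in the excerpt.

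The argument requires no case analysis and no use of the graph structure beyond what is packaged into the hypothesis, so there is no real obstacle: the entire content is the observation that a function of $f(\graph)$ cannot separate more graphs than $f(\graph)$ itself. I would conclude by stating the contrapositive conclusion explicitly, namely that $f \circ g \le f$, which is what the lemma asserts.
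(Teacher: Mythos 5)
Your proof is correct and is, up to contraposition, identical to the paper's: the paper starts from a pair distinguished by $f \circ g$ and concludes it is distinguished by $f$, while you start from a pair merged by $f$ and conclude it is merged by $f \circ g$; both hinge on the single identity $f(g(\graph)) = h(f(\graph))$ and the fact that functions preserve equality. No substantive difference in approach.
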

\begin{proof}
   Let $G^1, G^2$ be two non-isomorphic graphs that are distinguishable by $f \circ g$. Then
   \begin{align*}
   f(g(G^1)) &\neq f(g(G^2)) \\
   h(f(G^1)) &\neq h(f(G^2)) 
   \end{align*}
   It follows that $f(G^1) \neq f(G^2)$. Hence, $f \circ g$ is not more powerful than $f$.
\end{proof}
Note that the condition in the lemma is similar to the definition of color refinement. However, we cannot use color refinement directly because the function $g$ could add or delete nodes, making a direct comparison of nodes between $G^1$ and $G^2$ impossible. Consequently, we have to use a more global view rather than the more localized approach of color refinement.

\subsection{Graph augmentation functions}
We will now analyze the change in expressiveness with some graph augmentation functions that model message-passing schemes that are frequently used in practice.

\subsubsection{Fragment augmentations}
We will first give the proofs for the augmentation functions from \cref{sec:theory} that incorporate fragment information.
\paragraph{Proof of Theorem \ref{theo:powerful}}
\theopowerful*
\begin{proof}
    Consider a fragmentation scheme that decomposes a graph into every possible subgraph (so the vocabulary is the set of all possible graphs). It is known that for every $k$ there exist two non-isomorphic graphs $\graph^1$ and $\graph^2$ that are indistinguishable by $k$-WL. But as $\graph^1$ and $\graph^2$ are themselves part of the vocabulary and the fragmentation, they are trivially distinguishable by \NF-WL, \FR-WL, and \HLG-WL. 
\end{proof}

\paragraph{Proof of Theorem \ref{theo:NF}}
\theoNF*
\begin{proof}
\citet{chen_can_2020} showed that the WL test cannot count the number of (induced) subgraphs with at least three nodes, i.e. for any substructure $S$ with more than three nodes, there exist non-isomorphic graphs $\graph^1$ and $\graph^2$ such that 2-WL cannot distinguish them but they have a different count of $S$. 
So, let $X$ be the substructure that $\mathcal{F}$ recovers. Then, WL cannot count the number of occurrences of $X$, whereas NF-WL can trivially count the number of $X$ in a graph.
Hence, NF-WL is more powerful than WL. Since NF is a reversible function, NF-WL  is by \cref{lem:lower bound} also not less powerful than WL, making NF-WL strictly more powerful than WL.
\end{proof}

\paragraph{Proof of Theorem \ref{theo:FR}}
Before coming to the proof of \cref{theo:FR}, we will prove the following useful lemma:
\begin{lemma}
\label{WL lemma}
    Two graphs $\graph^1 = (\vertices^1, \edges^1, \features^1)$, $\graph^2 = (\vertices^2, \edges^2, \features^2)$ are undistinguishable by WL if 
    the set of node features is the same
    \[ \features^1 = \features^2 := \features\]
    and all nodes with the same node feature have the same neighborhood
    \begin{align}
    &\forall i,j \in \vertices^1 \cup \vertices^2: \nonumber \\
    &\features_i = \features_j \Rightarrow \{\features_n \: |\: n \in \neighborhood(i)\} = \{\features_m \: |\: m \in \neighborhood(j)\}.
    \label{neighborhood}
    \end{align}
\label{lemma:WL}
\end{lemma}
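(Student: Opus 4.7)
The plan is to prove by induction on the WL iteration $t$ that for every node $v \in \vertices^1 \cup \vertices^2$, the WL color $c_v^{(t)}$ is a function only of the initial feature $\features_v$. Formally, I would show: for every $t \ge 0$ there exists a map $\varphi_t$ with $c_v^{(t)} = \varphi_t(\features_v)$ for all $v$. The base case $t=0$ is immediate from $c_v^{(0)} = \hash(\features_v)$. For the inductive step, the defining update
\[ c_v^{(t)} = \hash\bigl(c_v^{(t-1)}, \ldbrace c_w^{(t-1)} \cond w \in \neighborhood(v) \rdbrace\bigr) \]
expresses $c_v^{(t)}$ in terms of $c_v^{(t-1)}$ (which by the induction hypothesis depends only on $\features_v$) and the multiset of colors over neighbors of $v$. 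Assumption~\eqref{neighborhood} says that whenever $\features_i = \features_j$, the multisets of neighbor features coincide; hence by the induction hypothesis the multisets $\ldbrace c_w^{(t-1)} \cond w \in \neighborhood(i) \rdbrace$ and $\ldbrace c_w^{(t-1)} \cond w \in \neighborhood(j) \rdbrace$ are also equal. Therefore $c_v^{(t)}$ depends only on $\features_v$, closing the induction.

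From this, I would conclude as follows. For any $t$, the multiset $\ldbrace c_v^{(t)} \cond v \in \vertices^k \rdbrace$ equals the image of the feature multiset $\ldbrace \features_v \cond v \in \vertices^k \rdbrace$ under $\varphi_t$. The hypothesis $\features^1 = \features^2$ (interpreted as equality of the underlying feature multisets on the two graphs) then implies that $\graph^1$ and $\graph^2$ produce identical color multisets at every iteration $t$. Since WL declares two graphs indistinguishable precisely when no iteration produces different color multisets, $\graph^1$ and $\graph^2$ cannot be separated by WL.

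The argument is essentially routine: assumption~\eqref{neighborhood} is tailored exactly so that equality of features propagates through one WL step to equality of neighborhood color multisets, which is the only thing needed to keep the coloring ``feature-determined'' forever. The only subtle point worth flagging is the interpretation of $\features^1 = \features^2$ as equality of feature multisets across $\vertices^1$ and $\vertices^2$; without that, one only gets that equal-feature nodes receive equal colors, which is not yet enough to conclude WL-indistinguishability of the two graphs. I would therefore state this interpretation explicitly at the start of the proof and then run the two-line induction above.
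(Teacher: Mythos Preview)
Your proposal is correct and mirrors the paper's proof almost exactly: both argue by induction on the WL iteration that nodes with equal initial features receive equal colors (your formulation via a function $\varphi_t$ is just a repackaging of the paper's implication $\features_i = \features_j \Rightarrow c_i^{(t)} = c_j^{(t)}$), and then conclude from $\features^1 = \features^2$ that the color multisets agree at every round. Your explicit remark that $\features^1 = \features^2$ must be read as equality of feature multisets is a welcome clarification that the paper leaves implicit.
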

\begin{proof}
We will show by induction over $t$ that the color of all nodes with the same node features is the same:
\begin{align*}
    &\forall i,j \in \vertices^1 \cup \vertices^2: \\
    &\features_i = \features_j \Rightarrow c_i^t = c_j^t.
    \end{align*}
For $t=0$, this follows immediately from $c^0_i = \hash(\features_i)$.
\\For $t>0$, we have for nodes $i,j$ with $X_i = X_j$
\begin{align*}
c^t_i &= \hash\bigl(c^{t-1}_i, \{c^{t-1}_n \mid n\in N(i)\}\bigr) \\
&= \hash\bigl(c^{t-1}_j, \{c^{t-1}_n \mid n \in N(i)\}\bigr) && \text{(by IH)} \\
&= \hash\bigl(c^{t-1}_j, \{c^{t-1}_m \mid m \in N(j)\}\bigr) && \text{(by \ref{neighborhood} and IH)} \\
&= c^t_j
\end{align*}
As both graphs have the same node features $\features^1 = \features^2$ the set of colors is also the same in each iteration $t$ of the \WL-test. Hence, the graphs are indistinguishable by the \WL-test
\end{proof}

\theoFR*

\begin{figure}[tb]
\includegraphics[width= 0.7\linewidth]{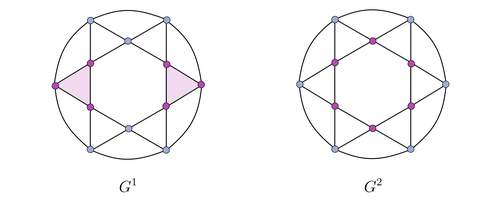}
\centering
\caption[Graphs indistinguishable by \NF-WL but distinguishable by \FR-WL]{Graph $\graph^1$ and graph $\graph^2$ that are indistinguishable by \NF-WL but distinguishable by \FR-WL. Node features are represented by the color of the nodes.}
\label{fig:FR}
\end{figure}

\begin{proof}
With Lemma \ref{lemma:WL} we can now prove Theorem \ref{theo:FR}: We first show that FR-WL is more powerful than NF-WL.
Consider the two graphs $G^1$, $G^2$ depicted in Figure \ref{fig:FR} with two different node features colored violet and blue. Note that when not considering the node features the graphs are identical and each node is isomorphic to every other node. As every fragmentation scheme has to be permutation invariant, every node is assigned the same additional node feature in NF (which, thus, holds no additional information to distinguish the graphs). Now, observe that the two graphs fulfill the conditions of Lemma \ref{WL lemma}. Therefore, the graphs are indistinguishable by NF-WL. In contrast, FR-WL can distinguish the two graphs as $\graph^1$ contains 3-cycles with three violet nodes whereas $\graph^2$ does not contain such 3-cycles. Hence, the fragment representation for these 3-cycles differs and distinguishes the two graphs.

Now, it remains to show that NF-WL is not more powerful than FR-WL. 
Let $c_v^{t}$, $d_v^{t}$ be the colorings of the $t$-th iteration of FR-WL and NF-WL, respectively.
We will show by induction over $t$ that $c^{t+1}_v \sqsubseteq d^{t}_v$.
To simplify the notation, let $\neighborhood_\uparrow(v) := \{ f \mid v \in f, \: f \in F\} $ be the set of fragments that $v$ is part of.
For $t=0$, note that after the first iteration of the FR-WL test each vertex $v \in G$ receives information about the fragment it is part of
\begin{align*}
c^1_v &= \hash\bigl(c^0_v, \ldbrace c^0_n \mid n \in \neighborhood_G(v) \uplus \neighborhood_\uparrow(v)\rdbrace \bigr) \\
 &= \hash\bigl(c^0_v, \ldbrace c^0_n \mid n \in \neighborhood(v) \rdbrace \uplus \ldbrace c^0_f \mid f \in \neighborhood_\uparrow(v)\rdbrace \bigr) \\
 &\sqsubseteq \bigl(c^0_v, \ldbrace c^0_f \mid f \in \neighborhood_\uparrow(v)\rdbrace \bigr) \\
 &= \bigl(\features_v, \ldbrace \text{type}(f) \mid f \in \neighborhood_\uparrow(v)\rdbrace \bigr) \\
 &\sqsubseteq d^0_v
\end{align*}
For the induction step $(t-1) \rightarrow t$, we have
\begin{align*}
c^t_v &= \hash\bigl(c^{t-1}_v, \ldbrace c^{t-1}_n \mid n \in \neighborhood_G(v) \uplus \neighborhood_\uparrow(v)\rdbrace \bigr) \\
 &\sqsubseteq \bigl(c^{t-1}_v, \ldbrace c^{t-1}_n \mid n \in \neighborhood_G(v)\rdbrace \bigr) \\
 &\sqsubseteq \bigl(d^{t-2}_v, \ldbrace d^{t-2}_n \mid n \in \neighborhood_G(v)\rdbrace \bigr) \\ 
 &\sqsubseteq d^{t-1}_v
 \end{align*}
This concludes the induction step. Hence, we have $c \sqsubseteq d$, and by \cref{lem:lower bound} NF-WL is not more powerful than FR-WL.
\end{proof}

\paragraph{Proof of Theorem \ref{theo:HLG}}
\theoHLG*
\begin{proof}
    Consider the two graphs $\graph^1$, $\graph^2$ depicted in Figure \ref{fig:HLG} with two different node features colored green and red. Note that the two graphs fulfill the conditions of Lemma \ref{lemma:WL}. Furthermore, even the graphs $\FR(\graph^1)$ and $\FR(\graph^2)$ fulfill these conditions. Hence, the graphs $\graph^1$, $\graph^2$ cannot be distinguished by \FR-WL. 
    
    In the higher level graph of $\graph^1$ each 3-cycle representation is connected to two other 3-cycle representations. Contrarily, in the higher level graph of $\graph^2$ each 3-cycle representation is connected to only one 3-cycle representations. Hence, the coloring will differ and \HLG-WL distinguishes the two graphs.

    Now, it remains to show that \FR-WL is not more powerful than \HLG-WL. This follows immediately from \cref{lem:lower bound} and the fact that the change in the graph from \FR{} to \HLG{} is reversible.
\end{proof}

\paragraph{Proof of Theorem \ref{theo:HLG power}}
\theoHLGpower*
\begin{proof}
    The proof follows a similar proof by \citet{bodnar_weisfeiler_2022}. Consider the Rook’s 4×4 and Shrikhande graph (both in the family of strongly regular graphs $\text{SR}(16, 6, 2, 2)$). The Shrikhande graph possesses 5-cycles while the Rook's graph does not. Hence, \HLG-WL can trivially disitnguish those two graphs with a fragmentation recovering 5-cycles. However, it is known that 3-WL cannot distinguish those two graphs \cite{bodnar_weisfeiler_2022}.
\end{proof}

\subsection{Additional graph augmentation function}
We will now consider two additional graph augmentation functions that are often used in practice: a learned representation for each edge and a learned representation for the complete graph that is connected to all other nodes. While these augmentations might be beneficial in practice, we will show that they do not increase expressiveness.

\paragraph{Edge representation}
We will first formally define the edge representation augmentation (ER). For every edge, we introduce a new node that is connected to its two endpoints.
\begin{definition}
    The edge representation graph augmentation function is $\ER(\vertices, \edges, \features) = (\vertices^\ER, \edges^\ER, \features^\ER)$ with 
    \begin{align*}
    &\vertices^\ER := \vertices \cup \edges, \\
    &\edges^\ER := \edges \cup \bigl\{ \{e ,v\} \mid e = \{u,v\} \in \edges \bigr\}\\
    &\features^\ER_i := \begin{cases} \features_i & i \in \vertices \\
    \alpha & i \in \edges 
    \end{cases}
    \end{align*}
    where $\alpha$ is some new label.
\end{definition}
Now, we can show that this augmentation does not increase expressiveness compared to the WL test.

\begin{lemma}
    \label{lem:ER}
    ER-WL is as powerful as WL.
\end{lemma}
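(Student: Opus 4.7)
The plan is to prove both directions separately: $\ER$-WL is not less powerful than WL (using \cref{lem:lower bound}), and $\ER$-WL is not more powerful than WL (using \cref{lem:upper bound}).

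For the lower bound, I would show that $\ER$ is a reversible graph augmentation. It is clearly additive since $\vertices \subseteq \vertices^\ER$ and $\edges \subseteq \edges^\ER$. Reversibility exploits the fresh label $\alpha$ placed on all edge-representation nodes: the feature-recovery function $h$ sends $\alpha$ to $\bot$ and is the identity on original features, while the edge-recovery function $\varepsilon$ returns $1$ iff both endpoints carry non-$\alpha$ labels. This exactly recovers the original edges, since every newly added edge $\{e,v\}$ has one $\alpha$-labelled endpoint. Invoking \cref{lem:lower bound} then yields that $\ER$-WL is at least as powerful as WL.

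For the upper bound, I would apply \cref{lem:upper bound} by showing that the stable colouring WL produces on $\ER(\graph)$ is a canonical function of the stable colouring WL produces on $\graph$. The intuition is that each edge-node $e = \{u,v\}$ only carries information that is already available to WL via the direct edge $\{u,v\} \in \edges$ (which is preserved in $\edges^\ER$), so $\ER$-WL should merely simulate WL, with a factor-of-two iteration slowdown because each original message now travels via a length-$2$ detour through an edge-node. Writing $c^{(t)}$ for the WL colouring on $\graph$ and $d^{(t)}$ for the WL colouring on $\ER(\graph)$, I would prove by simultaneous induction on $t$ that (A) for every $v \in \vertices$, $d^{(2t)}_v$ is a function of $c^{(t)}_v$; and (B) for every edge-node $e = \{u,v\}$, $d^{(2t+1)}_e$ is a function of the multiset $\ldbrace c^{(t)}_u, c^{(t)}_v \rdbrace$. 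The base case is immediate since $d^{(0)}_v = \hash(\features_v) = c^{(0)}_v$ and $d^{(0)}_e = \hash(\alpha)$ is constant across edge-nodes. The inductive step uses that the $\ER(\graph)$-neighbourhood of an original node $v$ splits into its $\graph$-neighbours (covered by (A)) and the incident edge-nodes (covered by (B)), while an edge-node only aggregates over its two endpoints.

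The main obstacle will be handling the iteration offset cleanly: because the message flow on $\ER(\graph)$ alternates between original nodes and edge-nodes, the inductive invariant must track even and odd steps and the two node types separately, and must verify at each half-step that no information appears beyond what the corresponding WL colouring on $\graph$ already encodes. Once the two parts of the induction are established, the stable colour multiset of $\ER$-WL on $\ER(\graph)$ decomposes into original-node colours determined by (A) and edge-node colours determined by (B), both of which are functions of the stable WL multiset on $\graph$; \cref{lem:upper bound} then closes the argument.
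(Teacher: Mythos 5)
Your lower-bound direction (ER-WL is not less powerful than WL) is correct and matches the paper's: ER is additive, and the fresh label $\alpha$ makes it reversible ($h$ sends $\alpha \mapsto \bot$, $\varepsilon$ returns $1$ iff neither endpoint carries $\alpha$), so Lemma~\ref{lem:lower bound} applies.

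The upper-bound direction, however, contains a genuine error. You correctly note that the direct edges $\{u,v\}\in\edges$ are preserved in $\edges^\ER$, but you then nonetheless assert a factor-of-two slowdown and build your induction on the invariant (A) that $d^{(2t)}_v$ is a function of $c^{(t)}_v$. That invariant is simply false: because $\edges \subseteq \edges^\ER$, a message in $\ER(\graph)$ can hop along an original edge in a single step, so after $2t$ iterations $d^{(2t)}_v$ aggregates information from nodes at $\graph$-distance up to $2t$, whereas $c^{(t)}_v$ only sees distance $t$. Hence $d^{(2t)}_v$ generally contains \emph{strictly more} information than $c^{(t)}_v$, and cannot be a function of it. (The factor-of-two picture would be right only if the original edges were \emph{replaced} by the two-hop detours, which is not how $\ER$ is defined here.) A secondary problem is book-keeping: your invariants (A) and (B) pin down $d$ only at even steps for original nodes and odd steps for edge-nodes, but the update for $d^{(2t)}_v$ uses $d^{(2t-1)}_u$ for original neighbours $u$, which neither invariant covers.

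The fix is to drop the offset: show $d^{(t)}_v$ is a function of $c^{(t)}_v$ for $v\in\vertices$, and $d^{(t)}_e$ is a function of $\ldbrace c^{(t-1)}_u, c^{(t-1)}_v\rdbrace$ for edge-nodes $e=\{u,v\}$ (edge-nodes lag one step because they start with the constant label $\alpha$). The induction then closes using the standard WL fact that $c^{(t+1)}_v$ determines $c^{(t)}_v$, $\ldbrace c^{(t)}_u\rdbrace_{u\in N(v)}$, $c^{(t-1)}_v$, and $\ldbrace c^{(t-1)}_u\rdbrace_{u\in N(v)}$, which together determine $d^{(t)}_v$, $\ldbrace d^{(t)}_u\rdbrace_{u\in N(v)}$, and $\ldbrace d^{(t)}_e\rdbrace_{e\ni v}$. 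This is in substance the paper's (more terse) argument, which observes that from $c^{(i)}_v$ one recovers the neighbours' $c^{(i-1)}_u$ and hence the edge-node colours.
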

\begin{proof}
    We will first show that ER-WL is not more powerful than WL:
    Let  $c^{(i)}, d^{(i)}$ be the colorings of the WL test and of the ER-WL test, respectively. Then we will show that

    We will use \cref{lem:upper bound} and give a function $h$ that maps a coloring $c^{(i)}$ of the WL test without edge representation to a coloring $d^{(i)}$ of ER-WL:
    Note that with the color $c^{(i)}_v$ of a node $v$ one can compute the colors $c^{(i-1)}_u$ of all neighboring nodes $u$. Hence, we can determine from $c^{(i)}$ the following multiset
    \begin{align*}
    \ldbrace (c^{(i)}_v, c^{(i-1)}_u) \mid e = (u,v) \in \edges \rdbrace
    \end{align*}
    which allows us to compute the corresponding edge representations $d_e^{(i)}$. 

    We will now show that ER-WL is not less powerful than WL.
    This follows directly from \cref{lem:lower bound} as ER is a reversible function.
\end{proof}

\paragraph{Graph representation}
We will now formally define the learned graph representation (GR), sometimes called virtual node.
\begin{definition}
    The graph rerpesentation augmentation function is $\GR(\vertices, \edges, \features) = (\vertices^\GR, \edges^\GR, \features^\GR)$ with 
    \begin{align*}
    &\vertices^\GR := \vertices \cup \{g\}, \\
    &\edges^\GR := \edges \cup \bigl\{ \{v, g\} \mid v \in \vertices\}\\
    &\features^\GR_i := \begin{cases} \features_i & i \in \vertices \\
    \alpha & i = g 
    \end{cases}
    \end{align*}
    where $\alpha$ is some new label.
\end{definition}
Similar to the edge representation, the graph representation does not increase expressiveness:
\begin{lemma}
    \label{lem:GR}
    GR-WL is as powerful as WL.
\end{lemma}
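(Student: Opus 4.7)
The plan is to mirror the two-direction argument used for \cref{lem:ER} almost verbatim, since $\GR$ is structurally simpler than $\ER$ (only one new node is introduced).

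First, for the direction that GR-WL is not less powerful than WL, I would appeal directly to \cref{lem:lower bound} by showing $\GR$ is reversible. The virtual node $g$ is the unique vertex carrying the distinguished label $\alpha$, so defining $h(\alpha) = \bot$ and $h(\features_v) = \features_v$ otherwise recovers the original features, and defining $\varepsilon(\features'_u, \features'_v)$ to be $1$ exactly when neither argument equals $\alpha$ recovers $\edges$ — because every edge of $\edges^\GR \setminus \edges$ is incident to $g$. \cref{lem:lower bound} then immediately yields that GR-WL is not less powerful than WL.

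Second, for the direction that GR-WL is not more powerful than WL, I would invoke \cref{lem:upper bound}. Let $c^{(i)}$ denote the WL coloring of $\graph$ and $d^{(i)}$ the WL coloring of $\GR(\graph)$. The plan is to show by induction on $i$ that $d^{(i)}$ is computable from $c^{(i)}$. The base case is immediate: $d_v^{(0)}$ agrees with $c_v^{(0)}$ for $v \in \vertices$, and $d_g^{(0)} = \hash(\alpha)$ is a fixed constant. For the inductive step, the neighborhood of $g$ in $\GR(\graph)$ is all of $\vertices$, while every original $v$'s neighborhood gains exactly the one extra neighbor $g$, so the updates read
\begin{align*}
d_g^{(i)} &= \hash\bigl(d_g^{(i-1)}, \ldbrace d_u^{(i-1)} \mid u \in \vertices \rdbrace\bigr), \\
d_v^{(i)} &= \hash\bigl(d_v^{(i-1)}, \ldbrace d_u^{(i-1)} \mid u \in \neighborhood(v) \rdbrace \uplus \{ d_g^{(i-1)} \}\bigr).
\end{align*}
By the induction hypothesis, every $d^{(i-1)}$-value on the right-hand side is a function of $c^{(i-1)}$; both the global multiset and the per-vertex neighborhood multiset are themselves determined by $c^{(i-1)}$ (which is in turn recoverable from $c^{(i)}$ by injectivity of $\hash$). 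Hence $d^{(i)}$ is a function of $c^{(i)}$, and \cref{lem:upper bound} gives the desired direction.

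The main obstacle — as in the ER-WL proof — is simply organizing the inductive bookkeeping so that the mutual dependence between the virtual-node update (which aggregates all original nodes) and the original-node update (which sees the virtual node) is handled consistently at each step. The key observation that keeps this routine is that $d_g^{(i-1)}$ depends only on the global multiset of WL colors at iteration $i-2$, and that multiset is already encoded in $c^{(i-1)}$; so both the ``global'' and ``local'' components of the $d$-update collapse into deterministic functions of the WL coloring. Beyond this packaging, no new ideas are required relative to \cref{lem:ER}.
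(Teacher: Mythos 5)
Your proposal is correct and mirrors the paper's proof essentially verbatim: both directions use \cref{lem:upper bound} and \cref{lem:lower bound} in exactly the same way, with the same induction for the upper bound (the paper likewise computes $d_g^{(t)}$ from $c^{(t-1)}$ and then $d_v^{(t)}$ from $c^{(t)}$) and the same appeal to reversibility for the lower bound. You merely spell out a couple of details the paper leaves implicit, such as the explicit definitions of $h$ and $\varepsilon$ witnessing reversibility and the per-vertex update rule $d_v^{(i)}$.
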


\begin{proof}
    We will first show that GR-WL is not more powerful than WL. We will use \cref{lem:upper bound} and give a function $h$ that maps a coloring $c^{(i)}$ of the WL test without graph representation to a coloring $d^{(i)}$ of GR-WL: We will show this by induction over $i$. For $i=0$, this follows immediately from the definition of GR. For the induction step, assume that there exists such a function from $c^{(t-1)}$ to $d^{(t-1)}$. 
    Note that the graph representation $d_g^{(t)}$ is computed as:
    \begin{align*}
    d_g^{(t)} &= \hash\bigl(d_g^{(t-1)}, \ldbrace d_v^{(t-1)} \mid v \in \neighborhood_{G'}(g) \rdbrace \bigr) \\
    &= \hash\bigl(d_g^{(t-1)}, \ldbrace d_v^{(t-1)} \mid v \in \vertices \rdbrace \bigl).
    \end{align*}
    which can be derived from $c^{(t-1)}$ by induction hypothesis.
    With this we can trivially compute $d_v^{(t)}$ from $c^{(t)}$ for all other nodes $v$, too.

    Since GR is a reversible function, by \cref{lem:lower bound} GR-WL is not less powerful than WL. Hence, GR-WL is as powerful as WL.
\end{proof}

\subsection{Expressiveness of existing models}
\label{app:expressive_existing_models}
\begin{table}[tb]
\centering
\caption{Overview of the vocabulary and expressiveness of existing topological and fragment-biased models. The bounds for GSN-v, $\mathcal{O}$-GNN, and HIMP are tight, i.e. when using a sufficient number of layers and injective neighborhood aggregators, the models are as powerful as the corresponding Fragment-WL test.}
\begin{tabular}{l|ll}
\toprule
\textbf{Model} & \textbf{Bounded by} & \textbf{Vocabulary} \\
\midrule
GSN-v\tablefootnote{We evaluate the GSN-v that is used in practice, i.e. with a vocabulary of cliques and rings. Note that the theory of the authors allows for potentially more expressive instantiations.} & $\le$ NF-WL & Cliques or Rings \\
\midrule
$\mathcal{O}$-GNN & $\le$ FR-WL & Rings \\
\midrule
HIMP & $\le$ HLG-WL & Rings \\
\midrule
\multirow{2}{*}{MPSN} & \multirow{2}{*}{$\le$ HLG-WL} & Simplicial complexes \\
& & (in practice cliques) \\
\midrule
\multirow{2}{*}{CIN} & \multirow{2}{*}{$\le$ HLG-WL} & CW complexes \\
& & (in practice rings \& edges) \\
\midrule
\multirow{2}{*}{CIN++} & \multirow{2}{*}{$\le$ HLG-WL} & CW complexes \\
& & (in practice rings \& edges) \\

\bottomrule
\end{tabular}

\label{tab:expressiveness}
\end{table}

We will use our Fragment-WL tests to compare the expressiveness of existing fragment-biased GNN models. \cref{tab:expressiveness} gives an overview of the vocabulary of existing fragment-biased and topological GNNs. Additionally, it shows the expressiveness in our Fragment-WL hierarchy.

\subsubsection{GSN-v} GSN-v \cite{bouritsas_improving_2023} incorporate fragment information as an additional node feature. The additional node features consist of the counts of fragment types a node is part of. Their framework also differentiates between different (non-symmetric)  positions inside the fragment (e.g., first node in path vs. second node in path) that correspond to different orbits. While their framework can use any fragmentation scheme, in all real-world experiments, they only use rings or cliques. Note that for rings and cliques, no different orbits exist, i.e., each node in the substructure has the same orbit. Hence, this information becomes irrelevant.
\begin{theorem}
    GSN-v using rings and/or cliques as vocabulary is at most as powerful as NF-WL. Additionally, when using injective neighborhood aggregators and a sufficient number of layers, GNSs are as powerful as NF-WL with a fragmentation scheme based on rings and cliques.
    \label{theo:GSN}
\end{theorem}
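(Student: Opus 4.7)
The plan is to prove both directions using the color-refinement machinery developed earlier in the appendix, exploiting the specific structure of the rings/cliques vocabulary.

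First I would compare the initial features attached to each node by the two schemes. For GSN-v with vocabulary restricted to rings and/or cliques, every node $v$ is decorated with the count vector $(n_{t_1}(v),\ldots,n_{t_k}(v))$, where $n_{t_i}(v)$ is the number of fragments of type $t_i$ containing $v$; no orbit information appears since in a ring or clique all vertices occupy the same orbit. The NF-WL initialization, by contrast, augments $v$ with $\lambda(\ldbrace \text{type}(f) \mid v\in f, f\in F\rdbrace)$ for an injective $\lambda$. The multiset of types and the count vector are interconvertible by an injective map, so the two initial encodings carry exactly the same information.

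For the upper bound, I would invoke the standard fact that any MPNN on a graph with given node features is bounded by WL on that same augmented graph: GSN-v is a plain MPNN on $G$ with the count features attached, and so its node colorings are refined by the WL coloring on $(\vertices,\edges,\features^{\text{GSN}})$. Composing this with the bijection between count vectors and type multisets shows that the NF-WL coloring refines the GSN-v coloring at every iteration, by a simple induction using the induction step from \cref{lem:lower bound} (both schemes message-pass on the same underlying graph $G$, so the neighborhood structure in the recursion is identical). Applying \cref{lem:refinement} then yields that GSN-v is not more powerful than NF-WL.

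For the tightness, I would run the same induction in the opposite direction: when the aggregation and update functions of the MPNN are injective, the GSN-v update becomes a legitimate hash of (previous color, multiset of neighbor colors), matching exactly the WL update on the augmented graph. Since the initial features already carry the same information as those of NF-WL (up to the injective reindexing), an induction shows the two colorings refine each other after sufficiently many layers, giving equality by two applications of \cref{lem:refinement}. The main subtlety I expect is being careful that ``sufficient number of layers'' is at least the number of iterations the NF-WL test needs to stabilize on the graphs in question, and that the equivalence between the count vector and the type multiset really does fail outside the ring/clique setting (where orbit-dependent positions would make the count summary strictly coarser than the orbit-indexed multiset GSN can in principle use); this is precisely why the theorem is restricted to this vocabulary.
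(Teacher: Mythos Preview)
Your proposal is correct and follows essentially the same route as the paper: both arguments hinge on the observation that rings and cliques have a single orbit, so GSN-v's orbit-indexed counts collapse to plain substructure counts, which are in bijection with the type multiset used by NF-WL (i.e., a valid injective $\lambda$ in \cref{def:NF}); from there both invoke the standard MPNN-vs-WL correspondence on the identically augmented graph. The paper's proof is terser---it simply asserts that under injective aggregators GSN-v ``exactly models the NF-WL test''---whereas you spell out the two-sided refinement induction more explicitly.
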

\begin{proof}
    GSN-v appends the node features by the counts of substructures and the respective orbits each node is part of. After that, a standard GNN is applied to the graph. 
    
    Note that in a ring or a clique, each node has exactly the same orbit. So, for a vocabulary based on cliques and rings the appended information degenerates to solely the substructure counts.
    Further, note that this substructure count function is an injective function $\lambda$ as defined in \cref{def:NF}. Hence, when using injective neighborhood aggregators and an MLP update function with a sufficiently large number of layers such that it can approximate the HASH function, GSN-v exactly models the NF-WL test. Hence,  GSN-v is exactly as powerful as NF-WL. 
\end{proof}

\subsubsection{$\mathcal{O}$-GNNs}
Besides representations for nodes, $\mathcal{O}$-GNNs \cite{zhu_mathcalo-gnn_2022} use explicit representation for rings, edges, and the whole graph. 
\begin{theorem}
    $\mathcal{O}$-GNNs \cite{zhu_mathcalo-gnn_2022} are at most as powerful as FR-WL. Additionally, when using injective neighborhood aggregators and a sufficient number of layers, $\mathcal{O}$-GNNs are as powerful as FR-WL with a fragmentation scheme based solely on rings.
    \label{theo:OGNN}
\end{theorem}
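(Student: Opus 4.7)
The plan is to mirror the proof strategy used for GSN-v in \cref{theo:GSN}, but now accommodating three levels of auxiliary representations in $\mathcal{O}$-GNN (edges, rings, and the whole graph) rather than just node-feature augmentation. I will split the argument into an upper-bound direction (simulating $\mathcal{O}$-GNN within FR-WL) and a matching lower-bound direction (simulating FR-WL within $\mathcal{O}$-GNN).

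For the upper bound, I would first observe that the computational graph of $\mathcal{O}$-GNN is isomorphic to the augmented graph obtained by composing three of the augmentation functions introduced above: $\FR$ restricted to rings, $\ER$, and $\GR$. Concretely, I would argue that running one layer of $\mathcal{O}$-GNN is at most as informative as one iteration of WL run on $(\GR \circ \ER \circ \FR)(\graph)$. The key technical step is then to show that pre- and post-composing $\FR$ with $\ER$ and $\GR$ does not increase expressivity beyond $\FR$-WL. This follows by adapting \cref{lem:upper bound}: I would describe functions $h_\ER$ and $h_\GR$ analogous to those in the proofs of \cref{lem:ER} and \cref{lem:GR} that reconstruct the edge-node and graph-node colors from the colors of the underlying FR-augmented graph. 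Since those proofs only used local reconstructability from the WL colors of the original graph and the reversibility of $\ER$ and $\GR$, the same reconstruction argument goes through with $\FR(\graph)$ in place of $\graph$, giving $\GR\mhyphen\ER\mhyphen\FR\mhyphen\WL \le \FR\mhyphen\WL$.

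For the lower bound, I would instantiate the $\mathcal{O}$-GNN update functions as MLPs approximating injective multiset hashes, in the spirit of \citet{xu_how_2019}. Because $\mathcal{O}$-GNN explicitly maintains a representation for every ring and passes messages between each ring and its constituent nodes, the node-to-ring and ring-to-node message channels reproduce exactly the hash step on the bipartite ``fragment $\leftrightarrow$ node'' edges introduced by $\FR$. Combined with the node-to-node messaging on the original edges, after $T$ layers we can show by induction on $T$ (as in the induction in the proof of \cref{theo:FR}) that the $\mathcal{O}$-GNN node and ring embeddings refine the coloring produced by $T$ iterations of FR-WL. Initialized with a sufficiently large $T$ and expressive MLPs, this yields $\mathcal{O}$-GNN $\ge \FR\mhyphen\WL$ on ring-based fragmentations, and combined with the upper bound gives equality.

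The main obstacle will be the upper-bound step: \cref{lem:ER} and \cref{lem:GR} are stated only for WL on the original graph, so I need to verify that their reconstruction arguments still work when the ``base'' graph is already the FR-augmented one, in particular that the fragment-nodes introduced by $\FR$ do not break the computability of the edge-node colors $d_e^{(i)}$ from the multiset $\ldbrace (c_v^{(i)}, c_u^{(i-1)}) \mid \{u,v\} \in \edges \rdbrace$. I expect this to go through cleanly because fragment-nodes are distinguishable from the new edge-nodes and graph-node by their injective initial label, so the reconstruction function can correctly restrict to the original edges; but writing this out carefully, together with the composition of the three augmentations in the correct order, is the place where most of the bookkeeping lies.
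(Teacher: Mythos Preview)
Your proposal is correct and follows essentially the same route as the paper: identify the $\mathcal{O}$-GNN computation graph with WL on the composite augmentation $\GR \circ \ER \circ \FR$, then invoke \cref{lem:ER} and \cref{lem:GR} to strip away the edge and graph representations, leaving exactly FR-WL. The paper's proof is a two-sentence version of your plan and does not spell out the lower-bound induction or the point you flag---that \cref{lem:ER} and \cref{lem:GR} are literally stated for WL on the \emph{original} graph rather than on $\FR(\graph)$---so your more careful handling of the composition (noting that the distinct initial labels of fragment-, edge-, and graph-nodes keep the reconstructions well-defined) is a genuine, if minor, refinement over what the paper writes.
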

\begin{proof}
    $\mathcal{O}$-GNNs (when using injective neighborhood aggregators instead of their original sum aggregators and an MLP with a sufficiently large number of layers such that it can approximate the HASH function) models performing the WL test on an FR, ER, and GR augmented graph. As shown in \cref{lem:ER,lem:GR}, the edge representation and the graph representation do not influence the expressivity. Hence, $\mathcal{O}$-GNNs are exactly as powerful as FR-WL with a fragmentation scheme based solely on rings.
\end{proof}

\subsubsection{HIMP}
HIMP \cite{fey_hierarchical_2020} builds a higher-level junction tree based on rings and edges for message passing on the original graph, the higher-level junction tree, and between those two.

\begin{theorem}
    HIMP  is at most as powerful as HLG-WL. Additionally, when using injective neighborhood aggregators and a sufficient number of layers, HIMP is as powerful as HLG-WL with a fragmentation scheme based on rings and edges.
    \label{theo:HIMP}
\end{theorem}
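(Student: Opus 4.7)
The plan is to mirror the structure of the proofs of \cref{theo:GSN,theo:OGNN}: translate HIMP's message-passing updates into a coloring procedure on a suitably augmented graph, identify that augmented graph with $\HLG(\graph)$ (up to reversible additions), and then invoke \cref{lem:upper bound} for the upper bound and \cref{lem:lower bound} together with an injectivity argument for tightness.

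First I would carefully unpack HIMP's update rules. HIMP maintains three types of hidden states: atom-level features, bond-level (edge) features, and junction-tree features for rings. In each layer, atoms exchange messages along bonds, ring nodes in the junction tree exchange messages with neighboring ring nodes, and there are cross-messages between an atom and the rings that contain it. Written in the language of augmented graphs, this is exactly message passing on a graph that is obtained from $\graph$ by (i) inserting an edge-representation node for every bond (the $\ER$ augmentation), (ii) inserting a fragment-representation node for every ring and connecting it to all its atoms (the $\FR$ augmentation restricted to rings), and (iii) adding junction-tree edges between ring-fragments that share an atom (which is precisely the higher-level graph part of $\HLG$ for the rings-and-edges vocabulary).

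For the upper-bound direction, I would argue as in the proof of \cref{theo:OGNN}. The $\ER$-augmentation is reversible and, by \cref{lem:ER}, does not add expressive power beyond $\WL$ on the underlying structure. The remaining pieces coincide with $\HLG(\graph)$ when the vocabulary is rings and bonds. Hence any coloring HIMP produces can be computed from the color sequence of $\HLG$-$\WL$ via a suitable aggregation function $h$, so \cref{lem:upper bound} yields that HIMP is at most as powerful as $\HLG$-$\WL$. For the tightness direction, I would replace HIMP's aggregators by injective multiset functions (as in the proof that MPNNs reach $\WL$ in \cite{xu_how_2019}) and use MLP update functions with enough layers to approximate the $\hash$ of \cref{eq:hash}. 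With these substitutions, HIMP's three-way update rule realizes exactly the neighborhood hashing performed by $\WL$ on $\HLG(\graph)$, node by node and layer by layer, so by \cref{lem:lower bound}-style monotonicity the resulting coloring refines that of $\HLG$-$\WL$, giving equivalence.

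The main obstacle I anticipate is bookkeeping around the edge representations and the junction-tree edges: one must check that the asymmetric way HIMP couples bond states, ring states, and atom states genuinely reproduces the neighborhood multisets seen by $\WL$ on $\HLG(\graph)$, rather than a strict coarsening of them. In particular, care is needed to confirm that (a) the cross-level messages between an atom and its containing rings are aggregated injectively from both sides, so the atom ``sees'' which rings it belongs to and the ring ``sees'' its constituent atoms, and (b) the junction-tree neighborhood between two rings sharing at least one atom matches the $\HLG$ adjacency $\{\,\fragment,k\,\}$ with $\fragment\cap k \neq \emptyset$. Once these two correspondences are verified, the argument reduces to the template already used for \cref{theo:OGNN}.
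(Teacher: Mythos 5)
Your proposal mirrors the paper's own argument, which is essentially an assertion: HIMP with injective multiset aggregators and a sufficiently deep MLP simulates, layer by layer, the WL iterations on the $\HLG$-augmented graph (modulo the extra edge representations, which \cref{lem:ER} shows to be harmless), and with weaker aggregators it can only coarsen these colourings. Your account is more detailed and correctly identifies the two bookkeeping obstacles (the cross-level aggregation and the junction-tree adjacency), which the paper's proof does not address explicitly.

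Two small imprecisions are worth flagging. First, for the upper bound you invoke \cref{lem:upper bound}, but that lemma bounds $f\circ g$ against $f$ when $g$ is a graph augmentation and $f$ a colouring; HIMP is not of that form, so the lemma does not directly apply. What you actually want is the refinement argument of \cref{lem:refinement} (or equivalently the standard ``any MPNN is bounded by WL on its computation graph'' result of \citet{xu_how_2019}) applied to HIMP viewed as an MPNN on the augmented graph, combined with \cref{lem:ER} to strip the edge representation. Second, your decomposition into $\ER$ plus $\FR$ on rings plus junction-tree edges between rings only does not literally coincide with $\HLG(G)$ for a rings-and-edges vocabulary: in the latter, edge-fragments are themselves higher-level nodes participating in the fragment-to-fragment adjacency, whereas in your picture the $\ER$ nodes sit only at the atom level. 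Since HIMP's junction tree in fact contains bridge-edge clusters alongside ring clusters, the cleanest reading is that HIMP is $\WL$ on $\HLG(G)$ (for the rings-and-bridges fragmentation) further augmented by $\ER$ on the atom graph. This re-identification does not change the argument, but it avoids the apparent mismatch. Neither imprecision breaks the proof; you arrive at the same conclusion by the same route as the paper.
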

\begin{proof}
    HIMP (when using injective neighborhood aggregators instead of their original sum aggregators and an MLP with a sufficiently large number of layers such that it can approximate the HASH function) exactly models performing the WL test on an HLG augmented graph.  Hence, HIMP is exactly as powerful as FR-WL with a fragmentation scheme based on rings and edges.
\end{proof}

\subsubsection{\ours{}}
Next, we consider the expressiveness of our \ours{} model:
\ourmodel*
\begin{proof}
For the proof, we rely on \cref{lem:ER}, the finding that an explicit edge representation does not augment expressiveness.
Notice that our model, when using injective neighborhood aggregators and an MLP with a sufficiently large number of layers such that it can approximate the HASH function,  exactly models performing the WL test on an HLG and ER augmented graph. As shown in \cref{lem:ER}, ER does not change the expressiveness. Hence, our model is exactly as powerful as HLG-WL.
\end{proof}

\subsubsection{Topological GNNs}
We will now consider topological GNNs. We will start by comparing HLG-WL with CWL, a variant of the WL test operating on CW complexes \cite{bodnar_weisfeiler_2022}. In the CWL framework, every graph is (permutation invariantly) mapped to a set of cells $\mathcal{X}$, a CW complex (using a skeleton-preserving lifting map). Let $\mathcal{X}_i$ denote the set of cells with dimension $i$. Then, $\mathcal{X}_0$ corresponds to all vertices $\vertices$ and $\mathcal{X}_1$ to all edges $\edges$. For higher dimensions, the results depend on the particular cellular lifting map. For instance, $\mathcal{X}_2$ could correspond to all cycles. 
 \begin{restatable}{theorem}{CWL}
    HLG-WL  is not less powerful than CWL, with a fragmentation scheme $\mathcal{F}$ that corresponds to the cellular lifting map used by CWL.
\end{restatable}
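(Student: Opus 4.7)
The plan is to show that HLG-WL refines CWL at the level of multisets of colors, and then conclude by the contrapositive that HLG-WL is not less powerful than CWL in the sense of \cref{lem:refinement}. With the fragmentation scheme $\mathcal{F}$ chosen to be exactly the cellular lifting map (applied to cells of dimension $\geq 2$), there is a natural correspondence between the cells of the CW complex and the graph elements of $\HLG(\graph)$: $\mathcal{X}_0$ corresponds to the vertex-nodes of $\HLG(\graph)$, $\mathcal{X}_1$ to the original edges, and $\mathcal{X}_{\geq 2}$ to the fragment-nodes in $\fragmentation$. The vertex--fragment edges in $\HLG(\graph)$ encode the boundary/coboundary relation between 0-cells and higher-dimensional cells, while the fragment--fragment edges in $\HLG(\graph)$ (present whenever two fragments share a vertex) encode lower adjacency between higher cells that share a boundary vertex.

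First, I would prove by induction on the CWL iteration $t$ that the coloring $c^{(t)}_\sigma$ assigned by CWL to each cell $\sigma \in \mathcal{X}$ can be computed as a function of the HLG-WL colors after $O(t)$ rounds on the corresponding nodes of $\HLG(\graph)$. The base case uses that the dimension and initial label of each cell are recoverable from the features attached to vertex- and fragment-nodes in $\HLG(\graph)$; the CWL color of a 1-cell is recoverable from the unordered pair of endpoint vertex features. In the inductive step each of the four CWL adjacency types is simulated by a bounded number of HLG-WL rounds: boundary/coboundary messages travel along incidence edges of $\HLG(\graph)$; lower adjacency between two fragments sharing a boundary vertex is captured by the direct fragment--fragment edge added by $\HLG$; and upper adjacency between cells sharing a common coboundary cell is captured through two-hop propagation across the shared fragment node.

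Once every CWL color at iteration $t$ is computable from the multiset of HLG-WL colors at iteration $O(t)$, the multiset of final CWL colors on the CW complex is a function of the multiset of final HLG-WL colors on $\HLG(\graph)$. Hence any two graphs indistinguishable by HLG-WL are also indistinguishable by CWL, which is exactly the statement that HLG-WL is not less powerful than CWL, in the sense of \cref{lem:refinement,lem:upper bound}.

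The main obstacle I expect is that 1-cells do not appear as explicit nodes in $\HLG(\graph)$, so the CWL color of each 1-cell --- and in particular CWL's upper adjacency between two 1-cells sharing a 2-cell coboundary --- must be reconstructed from the HLG-WL colors of the endpoint vertex-nodes together with the fragment-node corresponding to the shared coboundary. Handling this cleanly requires a careful bookkeeping of how many HLG-WL iterations are needed to simulate one CWL round, so that incidence information can propagate across vertex--edge--fragment triples without losing refinement.
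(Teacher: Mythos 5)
The paper and your proposal diverge at a material point: the choice of which cells of the CW complex become fragments. The paper's proof takes $F = \mathcal{X} \setminus \vertices$, so the 1-cells (edges) \emph{are} fragments, each with its own node in $\HLG(\graph)$ connected to its two endpoints and to every higher-dimensional cell it touches. With that choice, every CWL cell has an explicit counterpart in $\HLG(\graph)$, and the proof is a clean induction: two HLG-WL rounds refine one CWL round, $c^{(2t)} \sqsubseteq b^{(t)}$, and the conclusion follows from \cref{lem:refinement}. You instead restrict the fragmentation to cells of dimension $\geq 2$, and then have to reconstruct the CWL coloring of each 1-cell implicitly from its endpoint vertex colors and two-hop propagation through the shared fragment node. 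You are right that this is the obstacle, but the proposed workaround does not obviously go through.

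The specific problem with the two-hop argument is that a fragment node $f$ in $\HLG(\graph)$ aggregates the \emph{multiset} of colors of all vertices $v \in f$. It does not encode which pairs of those vertices form boundary 1-cells of $f$ versus which pairs are merely both contained in $f$. Yet CWL's upper-adjacency message to a 1-cell $e$ with $e \prec f$ is the multiset of colors of the \emph{other boundary 1-cells} of $f$, paired with $b_f$. Reconstructing that multiset from the endpoint colors $(c_u, c_v)$ and $c_f$ therefore requires information that the fragment node has thrown away: it would need to know, for each pair of vertices in $f$, whether that pair is an edge of $\graph$. The paper avoids exactly this by making 1-cells explicit fragments, so their colors are directly available and the per-level induction closes without any implicit reconstruction. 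Without either adopting the paper's definition of $F$ or supplying an argument that the boundary structure of a cycle can be recovered from the HLG-WL vertex colors alone (which is not clearly true and is not argued in your sketch), the inductive step does not close, and the proof has a gap.
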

\begin{proof}
    Let $F$ be the fragmentation that corresponds to $\mathcal{X}$ without the vertices: $F = \mathcal{X} \setminus \vertices$.
    Let $c^{(t)}, b^{(t)}$ be the coloring of iteration $t$ of HLG-WL and CWL, respectively. We will show that $b^{(t)} \sqsubseteq c^{(2t)}$ which implies that $b \sqsubseteq c$ after termination.

    We will show this by induction over $t$. For $t=0$, this follows immediately from the fact that the node features in HLG-WL are finer than the features of cells in CWL.

    Now we will show $c^{(t)} \sqsubseteq b^{(2t)}$ assuming $c^{(t-1)} \sqsubseteq b^{(2t-2)}$:

    The idea of the induction step is that the hash update function $\hash$ receives more information in HLG-WL compared to CWL. Let us first consider vertices, i.e., $\mathcal{X}_0$.
    The update function in CWL for $v \in \vertices = \mathcal{X}_0$ is:
    \begin{align*}
        b_v^{(t)} &= \hash\bigl( b_v^{(t-1)}, \ldbrace (b_w^{(t-1)}, b_e^{(t-1)}) \mid e = \{v,w\} \in \edges \rdbrace \bigr )
    \end{align*}
    Now note that the update function in HLG-WL for $v \in \vertices$ is:
    \begin{align*}
        c_v^{(2t)} &= \hash\bigl(c_v^{(2t-1)}, \ldbrace c_w^{(2t-1)} \mid  w \in \neighborhood_{\HLG(G)}(v) \rdbrace \bigr) \\
        &\sqsubseteq \hash\bigl(c_v^{(2t-1)}, \ldbrace c_e^{(2t-1)} \mid  e = \{v,w\} \in \edges  \rdbrace \bigr) \\
        &\sqsubseteq \hash\bigl(c_v^{(2t-1)}, \ldbrace (c_e^{(2t-2)}, c_w^{(2t-2)}) \mid  e = \{v,w\} \in \edges  \rdbrace \bigr) \\
        & \sqsubseteq \hash\bigl(b_v^{(t-1)}, \ldbrace (b_e^{(t-1)}, b_w^{(t-1)}) \mid  e = \{v,w\} \in \edges  \rdbrace \bigr) \\
        &= b_v^{(t)}
    \end{align*}
    The first step follows from $e \in \neighborhood_{\HLG(G)}(v)$ as the edges are part of the fragmentation $F$. The second step follows from $c_e^{(2t-1)} \sqsubseteq (c_e^{(2t-2)}, c_w^{(2t-2)}, c_v^{(2t-2})$. The third step uses the induction hypothesis.

    Now, we will consider a cell $x \in \mathcal{X}_k \subseteq F$. The update function in CWL is
    \begin{align*}
        b_x^{(t)} &= \hash\bigl( b_x^{(t-1)}, \ldbrace (b_u^{(t-1)}, b_o^{(t-1)}) \mid x \prec u,\: o \prec u  \rdbrace, \ldbrace b_l^{(t-1)} \mid l \prec x \rdbrace \bigr )
    \end{align*}
    where  $x \prec y$ means that $x$ with dimension $k$ is part of the cell $y$ of dimension $k+1$. For example, $e \prec r$ if $e$ is an edge in a ring $r \in X_{k+1}$. For details, we refer to \citet{bodnar_weisfeiler_2021}.

    The update function in HLG-WL for a fragment $x \in F \cap \mathcal{X}_k$ in $G' := \HLG(G)$ is:
    \begin{align*}
        c_x^{(2t)} &= \hash\bigl(c_x^{(2t-1)}, \ldbrace c_w^{(2t-1)} \mid  w \in \neighborhood_{G'}(x) \rdbrace \bigr) \\
        &\sqsubseteq \hash\bigl(c_x^{(2t-1)}, \ldbrace c_u^{(2t-1)} \mid  u \in \neighborhood_{G'}(x) \cap \mathcal{X}_{k+1}\rdbrace, \ldbrace c_l^{(2t-1)} \mid  l \in \neighborhood_{G'}(x) \cap \mathcal{X}_{k-1}\rdbrace \bigr)  \\
        &\sqsubseteq \hash\bigl(c_x^{(2t-1)}, \ldbrace c_u^{(2t-1)} \mid  x \prec u \rdbrace, \ldbrace c_l^{(2t-1)} \mid  l \prec x \rdbrace \bigr)  \\
        &\sqsubseteq \hash\bigl(c_x^{(2t-1)}, \ldbrace (c_u^{(2t-2)}, c_o^{(2t-2)}) \mid  x \prec u, \: o \prec u \rdbrace, \ldbrace c_l^{(2t-1)} \mid  l \prec x \rdbrace\bigr)  \\
        &\sqsubseteq \hash \bigl(b_x^{(t-1)}, \ldbrace (b_u^{(t-1)}, b_o^{(t-1)}) \mid  x \prec u, \: o \prec u \rdbrace, \ldbrace b_l^{(t-1)} \mid  l \prec x \rdbrace\bigr) \\
        &= b_x^{(t)}
    \end{align*}
    The steps are very similar to the vertex case above. This concludes the proof that $c$ refines $b$. By \cref{lem:refinement} this implies that HLG-WL is not less powerful than CWL using a fragmentation that corresponds to the cellular complex.
\end{proof}
As CWL bounds the expressiveness of CIN \cite{bodnar_weisfeiler_2022} and CIN++ \cite{giusti_cin_2023}, we get the following corollary:
\begin{corollary}
CIN and CIN++ are at most as powerful as HLG-WL with a fragmentation scheme that corresponds to the cellular lifting map.
\label{cor:CIN}
\end{corollary}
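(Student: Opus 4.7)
The plan is to obtain the bound by chaining two facts: (i) the preceding theorem, which established that HLG-WL with a fragmentation scheme matching the cellular lifting map is at least as powerful as CWL, and (ii) the existing results from the literature showing that CIN and CIN++ are both bounded in expressiveness by CWL (see \citet{bodnar_weisfeiler_2022} for CIN and \citet{giusti_cin_2023} for CIN++). Together, these give the desired bound by transitivity of the ``not more powerful than'' relation.

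More concretely, first I would invoke the cited results to establish that for every pair of non-isomorphic graphs $\graph^1,\graph^2$ that CIN (resp.\ CIN++) can distinguish, CWL with the same cellular lifting map can also distinguish them. Second, I would apply the previously proven theorem, instantiated with the fragmentation scheme $\mathcal{F}$ that recovers exactly the higher-dimensional cells produced by that lifting map, to conclude that CWL $\le$ HLG-WL. A short transitivity argument then completes the proof: if $h \le g$ and $g \le f$, then whenever $h$ distinguishes a pair of graphs, so does $g$, and thus so does $f$, giving $h \le f$. Applying this with $h$ being CIN (or CIN++), $g$ being CWL, and $f$ being HLG-WL yields the corollary.

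The main obstacle is not a technical one but rather a bookkeeping issue: one must ensure that the fragmentation scheme $\mathcal{F}$ used to instantiate HLG-WL is chosen to match the cellular lifting map employed by the particular CIN/CIN++ variant under consideration (e.g., rings and edges in the standard chemical setting). Once this alignment is made explicit, the preceding theorem applies directly without modification, and the rest reduces to citing the literature and invoking transitivity.
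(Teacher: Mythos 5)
Your proposal matches the paper's argument exactly: the paper derives the corollary by chaining the preceding theorem (HLG-WL $\ge$ CWL for the matching fragmentation) with the known facts that CWL bounds CIN and CIN++, and concluding by transitivity. Your additional remark about aligning the fragmentation scheme with the cellular lifting map is the same alignment the paper makes implicit in the theorem's hypothesis, so the argument is correct and identical in substance.
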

Additionally, CWL subsumes the WL version, SWL, introduced by \citet{bodnar_weisfeiler_2021} for simplicial complexes. The cellular complex just corresponds to all cliques of the graph. 
\begin{restatable}{corollary}{SWL}
HLG-WL, with a fragmentation scheme recovering cliques, is not less powerful than SWL.
\end{restatable}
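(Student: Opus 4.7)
The plan is to derive the corollary as a direct application of the preceding theorem, exploiting the fact that SWL is the special case of CWL whose cellular lifting map sends each graph to its clique complex. So the high-level strategy is: instantiate the theorem with the right fragmentation scheme, and then identify the resulting CWL instance with SWL.

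Concretely, I would proceed as follows. First, I would recall that for a simplicial complex lifting, the cells of dimension $k$ are exactly the $(k{+}1)$-cliques of $G$, and that the incidence relation $x \prec y$ in the complex coincides with subset inclusion among cliques. Hence, as noted in the paragraph preceding the corollary, running CWL on this particular lifting reproduces SWL step for step. Next, let $\fragmentationScheme$ be the fragmentation scheme that recovers every clique of $G$, i.e.\ $\fragmentationScheme(\graph)$ contains all subsets of $\vertices$ inducing a clique (of size $\ge 2$). By construction, the set of fragments equals the set of higher-dimensional cells used by the clique complex. Applying the preceding theorem to this $\fragmentationScheme$ then yields that HLG-WL is not less powerful than CWL with the clique-based cellular lifting map.

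Finally, since that particular instance of CWL is exactly SWL, I would conclude that HLG-WL (with cliques as fragments) is not less powerful than SWL, which is the claimed statement. I would phrase this last step via \cref{lem:refinement}, transporting the refinement $b \sqsubseteq c$ established in the proof of the CWL theorem directly to the SWL coloring.

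I do not expect any real obstacle here; the work is essentially bookkeeping. The only point that would require a brief sentence of justification is the identification of the clique complex with a valid fragmentation, since \emph{recovering} cliques in the sense of \cref{sec:theory} means that \emph{every} subgraph isomorphic to a clique appears in $\fragmentationScheme(\graph)$, which is exactly what we need to match the simplicial complex used by SWL. Once that identification is made explicit, the corollary follows immediately from the theorem.
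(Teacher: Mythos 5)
Your proposal matches the paper's argument exactly: the paper also derives this corollary by observing that SWL is the instance of CWL whose cellular lifting map produces the clique complex, and then instantiating the preceding CWL theorem with a fragmentation that recovers all cliques. Your additional remarks on the incidence relation and on what ``recovering cliques'' means are just the bookkeeping the paper leaves implicit, so there is no substantive difference.
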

As MPSNs \cite{bodnar_weisfeiler_2021} are bounded by SWL, we have the following result for MPSNs:
\begin{corollary}
MPSNs are at most as powerful as HLG-WL with a fragmentation scheme based on cliques.
\label{cor:MPSN}
\end{corollary}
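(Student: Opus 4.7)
The plan is to derive this corollary by transitivity from two ingredients already available in the excerpt. On one side, MPSNs are known to be bounded in expressiveness by SWL, the Weisfeiler--Leman variant on simplicial complexes introduced by \citet{bodnar_weisfeiler_2021}; this bound is a cited external fact. On the other side, the statement immediately preceding the corollary asserts that HLG-WL with a clique-recovering fragmentation is not less powerful than SWL. Chaining these two facts via transitivity of the ``at most as powerful as'' relation yields the claim.

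More concretely, I would first remark that SWL is the special case of CWL in which the cellular lifting map takes a graph to its clique complex: every simplex of dimension $k$ is a $(k{+}1)$-clique. Hence choosing a fragmentation scheme $\fragmentationScheme$ that recovers all cliques makes the fragment set $\fragmentation$ coincide with $\bigcup_{k\ge 1} \mathcal{X}_k$, so that the hypothesis of the preceding corollary (on CWL/HLG-WL) specializes exactly to SWL. This is the step that the preceding \Cref{SWL} packages for us, so I would just invoke it.

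Next, I would apply the bound MPSN $\preceq$ SWL and compose it with SWL $\preceq$ HLG-WL (with clique fragmentation) to conclude MPSN $\preceq$ HLG-WL (with clique fragmentation). Formally this uses that if $f \le g$ and $g \le h$ then $f \le h$, which is immediate from the definition of ``not more powerful than'' given in the expressiveness preliminaries: any two graphs distinguished by $f$ are distinguished by $g$, hence by $h$.

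No real obstacle is expected, since both ingredients are already in place; the only sanity check worth doing explicitly is that the clique fragmentation used on the HLG-WL side matches the simplicial lifting used on the SWL side, i.e., that every clique of every dimension is included in $\fragmentationScheme(\graph)$ (the ``recovers cliques'' hypothesis). Once this is observed, the proof reduces to a one-line combination of \Cref{SWL} with the known MPSN--SWL bound.
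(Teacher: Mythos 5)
Your proposal matches the paper's argument exactly: the paper also derives the corollary by citing the external fact that MPSNs are bounded by SWL \cite{bodnar_weisfeiler_2021} and chaining it with the preceding corollary (HLG-WL with clique fragmentation is not less powerful than SWL) via transitivity. The extra remark you make about SWL being the clique-complex special case of CWL is just unpacking what the preceding corollary already packages, so there is no substantive difference.
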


\section{Further experiments}\label{app:further_experiments}

\subsection{Long-Range tests}
We provide more experiments to measure the long-range capabilities of \ours{}.

\textbf{Commute time}

In addition to the recovery rates in \cref{fig:recovery}, we also consider commute times.  The commute time between the nodes $a$ and $b$ is the expected time for a random walker from $a$ to reach $b$ and return again to $a$. \citet{di_giovanni_over-squashing_2023} have proposed the commute time as a measure for over-squashing. To compute and compare commute times across different fragmentations, we connected all nodes in each fragmentation that could exchange a message within one layer. \cref{fig:commute} shows the commute from the star node to every other node for the same graph as in \cref{fig:recovery}. The close alignment between commute time and recovery rate supports the theoretical findings by \citet{di_giovanni_over-squashing_2023} and further emphasizes the potentially enhanced long-range capabilities of our model. Additionally, we also compute commute times on a molecule from the ZINC dataset that contains more fragments (see \cref{fig:commute_larger}). 
\begin{figure}[tb]
\includegraphics[width=.6\linewidth]{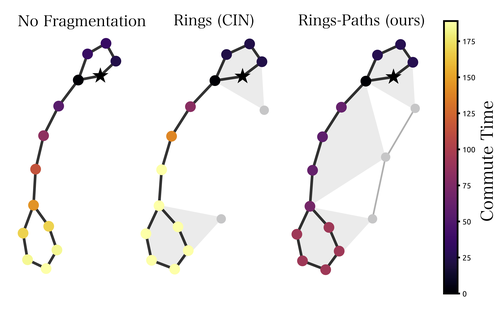}
\centering
\caption{Commute time from the star node to all other nodes. The first graph has no fragmentation, the second one a rings fragmentation (like in CIN/CIN++), the third a rings and paths fragmentation (like our model).}
\label{fig:commute}
\end{figure}

\begin{figure}[htb]
\includegraphics[width=.7\linewidth]{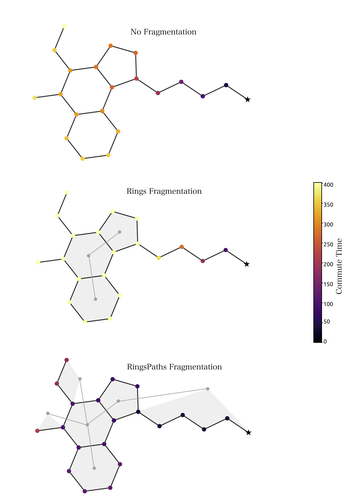}
\centering
\caption{Commute time from the star node to all other nodes. The first graph has no fragmentation, the second one a rings fragmentation (like in CIN/CIN++), the third a rings and paths fragmentation (like our model).}
\label{fig:commute_larger}
\end{figure}

Quantitatively, we compute average commute times on a random sample of molecules from the peptides dataset for a model without any fragmentation and for \ours{} (RingsPaths fragmentation with a higher-level graph). We observe that the addition of a higher-level graph reduces commute times by $16\%$. 

\begin{table}[h]
    \centering
    \caption{Average commute times between all nodes on a random sample of 50 molecular graphs from the peptides dataset with and without the higher-level graph.}
    \begin{tabular}{l|c}
    \toprule
        Normal Molecular Graph &  5056\\
        Molecular Graph + HLG & 4253\\
    \bottomrule
    \end{tabular}
    \label{tab:commute_time_preptides}
\end{table}

\subsection{Distribution of Fragments}
\cref{fig:frag_distribution}
 illustrates the distribution of fragment sizes, i.e., path lengths and ring lengths, extracted by our RingsPaths fragmentation method across the ZINC-10k, ZINC-full, and peptides datasets. It is worth noting that the peptides dataset features some exceptionally large rings.

\begin{figure}[tb]
\centering
\includegraphics[width=0.6\linewidth]{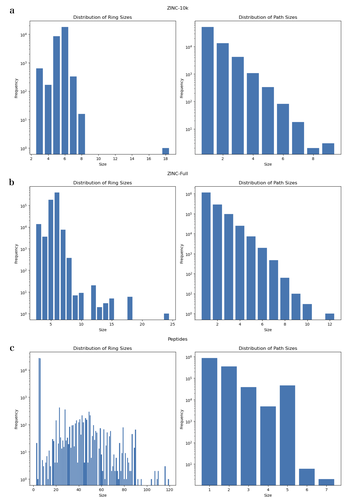}
\caption{Distribution of sizes of path and ring fragments for the a) ZINC-10k, b) ZINC-full, and c) Peptides dataset.}
\label{fig:frag_distribution}
\end{figure}

\subsection{Ablation Studies}
In the following, we test the design choices of our model and fragmentation. First, we test \ours{} without the different fragment information or ordinal encoding on ZINC and Peptides. We show the results in \cref{tab:ablation_model}. We observe that a reduction in expressiveness generally leads to a reduction in performance. An exception is the use of fragment representations (FR-WL, i.e. \ours{} - Higher-level graph) which shows a higher error on ZINC 10k and Peptides Struct. This is similar to the pattern that we observe in the message reconstruction toy experiment we show in \cref{fig:recovery} where the additional fragment representations increase the importance of the current substructure and do not contain a message from other parts of the molecule.
\begin{table}[h]
    \centering
    \caption{Ablation of different expressivity choices for \ours{}. Additionally, we ablate the ordinal encoding.}
    \begin{tabular}{l|cccc}
\toprule
\multirow{2}{*}{\textbf{Model}} &  \multicolumn{1}{c}{\textbf{ZINC}} & \multicolumn{2}{c}{\textbf{Peptides}} \\
\cmidrule(lr){2-2}\cmidrule(lr){3-4}
&  \textbf{10k} (MAE $\downarrow$) & \textbf{Struct} (MAE $\downarrow$) & \textbf{Func} (AP $\uparrow$) \\
\midrule
\ours{} (=HLG-WL) & \textbf{0.0775} $\pm$ 0.004 & \textbf{0.246} $\pm$ 0.002 & \textbf{0.668} $\pm$ 0.003\\
$-$ Higher-level graph (=FR-WL) & 0.0872 $\pm$ 0.004 & 0.256 $\pm$ 0.003 & 0.661 $\pm$ 0.005\\
$-$ Fragment representation (=NF-WL) & 0.0994 $\pm$ 0.007 & 0.247 $\pm$ 0.003 & 0.654 $\pm$ 0.005\\
$-$ All fragment information (=WL) & 0.1609 $\pm$ 0.003 & 0.249 $\pm$ 0.001 & 0.652 $\pm$ 0.005\\
\midrule
\ours{} $-$ ordinal encoding & 0.0945 $\pm$ 0.006 & 0.249 $\pm$ 0.001 & 0.666 $\pm$ 0.004\\
\bottomrule
\end{tabular}

    \label{tab:ablation_model}
\end{table}

Furthermore, we compare different fragmentation schemes in combination with \ours{}. In \cref{tab:ablation_fragmentation}, we observe that our RingsPath fragmentation scheme performs the best across the different datasets. 
\begin{table}[h]
    \centering
    \caption{Performance of \ours{} with different fragmentation schemes.}
    \begin{tabular}{l|cccc}
\toprule
\multirow{2}{*}{\textbf{Fragmentation Scheme}} &  \multicolumn{1}{c}{\textbf{ZINC}} & \multicolumn{2}{c}{\textbf{Peptides}} \\
\cmidrule(lr){2-2}\cmidrule(lr){3-4}
&  \textbf{10k} (MAE $\downarrow$) & \textbf{Struct} (MAE $\downarrow$) & \textbf{Func} (AP $\uparrow$) \\
\midrule
BBB &  0.127 & 0.252 $\pm$ 0.002 & 0.637 $\pm$ 0.003\\
BRICS & 0.127 & 0.247 $\pm$ 0.008 & 0.658 $\pm$ 0.011\\
Magnet & 0.098& - & - \\
\midrule
Rings &  0.078 & 0.249 $\pm$ 0.001 & 0.659 $\pm$ 0.007\\
RingsPaths (ours) & \textbf{0.077} & \textbf{0.246} $\pm$ 0.002 & \textbf{0.668} $\pm$ 0.005\\
\bottomrule
\end{tabular}

    \label{tab:ablation_fragmentation}
\end{table}

In \cref{fig:frag_comparion}, we look at how large the vocabulary size has to be per fraction of fragmented atoms. That is, for an increasing vocabulary, we observce how many atoms belong to a fragment. The steeper the increase the better. We can observe that on ZINC-10k BBB, BRICS and Rings are not able to assign a fragment to each atom no matter how large the vocabulary size. Magnet achieves full fragmentation but slower compared to our RingsPaths which is the most vocabulary efficient. We further show the necessary vocabulary size on ZINC-Full and Peptides for RingsPaths in \cref{tab:vocab_size}.

\begin{figure}[tb]
\centering
\includegraphics[width=0.5\linewidth]{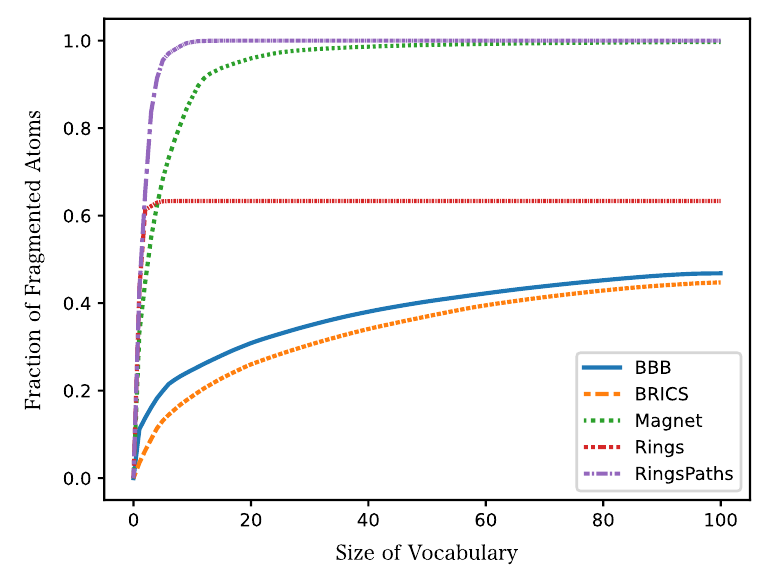}
\caption[Fragmented Atoms by Vocabulary Size]{Fraction of atoms in ZINC-10k dataset that are part of a fragment as a function of vocabulary size. A fraction of $1$ indicates that all molecules in the dataset can be completely fragmented. We compare the chemically inspired fragmentation schemes BBB, BRICS, and MagNet with a fragmentation based just on rings and our RingsPaths fragmentation. 
The substructures in the vocabulary are sorted by the frequency in which they appear in the molecules.}
\label{fig:frag_comparion}
\end{figure}

\begin{table}[h]
    \centering
    \caption{Vocabulary sizes for RingsPaths on different datasets.}
    \begin{tabular}{l|ccc}
\toprule
& \textbf{ZINC-10k} & \textbf{ZINC-Full} & \textbf{Peptides}\\
\midrule
\textbf{Vocabulary Size} & 18 & 28 & 100 \\
\bottomrule
\end{tabular}

    \label{tab:vocab_size}
\end{table}

\section{Experimental details}
\label{sec:experimental_details}
In the following, we will describe details for all our experiments.
Unless otherwise stated, for our \ours{}, we use a 2-layer fully connected neural network with ReLU activations and batch norm as the \MLP{} update function. For the aggregation method \AGG{}, we use a sum aggregation for messages within the original or higher-level graph and a mean aggregation for messages between the original graph and the higher-level graph. For training, we use the AdamW \cite{Loshchilov2017DecoupledWD} optimizer and gradient clipping with a value of 1. The model has been implemented in PyTorch \cite{pytorch} using the PyTorchGeometric \cite{pyg} and the PyTorch Lightning \cite{lightning} library. It is in parts adapted from HIMP \cite{fey_hierarchical_2020}.
All results for other methods are taken from \citet{rampasek_recipe_2023} and \citet{giusti_cin_2023}.

\subsection{ZINC and peptides}
The hyperparameters of our model for ZINC (10k and full) and peptides (struct and func) can be found in \ref{tab:hyper}. Note that we adhere to the 500K parameter budget. Each experiment is repeated over three different seeds except for the ZINC-full experiment, where we only have a single run because of computational and time limitations. 

\begin{table}
\centering
\begin{tabular}{l|llll}
\toprule
{} & peptides-struct & peptides-func & ZINC-10k & ZINC-full \\
\midrule
num\_layers      &               3 &             2 &        5 &         3 \\
hidden\_channels &             110 &           128 &       64 &       120 \\
num\_layers\_out  &               3 &             3 &        3 &     2 \\
frag-reduction  &             sum &           sum &      max &       max \\
out-reduction & mean & mean & mean & mean \\
dropout         &            0.05 &          0.15 &        0 &         0 \\
lr              &           0.001 &         0.001 &    0.001 &     0.001 \\
weight decay    &               0 &             0 &    0.001 &      0 \\
ema decay       &            0.99 &          0.99 &     0.99 &      0.99 \\
scheduler      &            Cosine &          ReduceonPlateau &      Cosine &       ReduceonPlateau \\
patience        &            - &            30 &     - &        15 \\
factor          &            - &           0.5 &     - &       0.9 \\
batchsize      &              32 &           128 &       32 &       128 \\
max epochs      &             300 &           400 &     2000 &      1000 \\
num parameters & 440K & 440K & 221K & 494K \\
\bottomrule
\end{tabular}

\caption{Hyperparameter configuration of our model for the ZINC and peptides benchmarks}
\label{tab:hyper}
\end{table}

\subsection{Expressiveness}
\begin{table}
\caption{Fragment counts for the 42 most common MagNet fragments in ZINC and accuracy scores of our model in predicting the counts.}
\resizebox{\textwidth}{!}{
\begin{tabular}{l|llllllllllllllHHHHHHHHHHHHHHH}
\toprule
     \multirow{2}{*}{\textbf{Fragment}}&  
     \multirow{2}{*}{\includegraphics[width=0.04\textwidth]{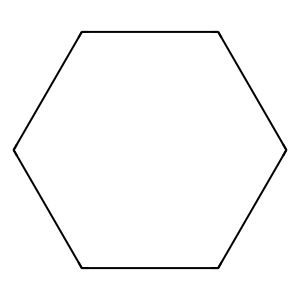}} &  \multirow{2}{*}{\includegraphics[width=0.04\textwidth]{{figures/mols/2}}} &  \multirow{2}{*}{\includegraphics[width=0.04\textwidth]{{figures/mols/3}}} &  \multirow{2}{*}{\includegraphics[width=0.04\textwidth]{{figures/mols/4}}} &  \multirow{2}{*}{\includegraphics[width=0.04\textwidth]{{figures/mols/5}}} &  \multirow{2}{*}{\includegraphics[width=0.04\textwidth]{{figures/mols/6}}} &  \multirow{2}{*}{\includegraphics[width=0.04\textwidth]{{figures/mols/7}}} &  \multirow{2}{*}{\includegraphics[width=0.04\textwidth]{{figures/mols/8}}} &  \multirow{2}{*}{\includegraphics[width=0.04\textwidth]{{figures/mols/9}}} &  \multirow{2}{*}{\includegraphics[width=0.04\textwidth]{{figures/mols/10}}} &  \multirow{2}{*}{\includegraphics[width=0.04\textwidth]{{figures/mols/11}}} &  \multirow{2}{*}{\includegraphics[width=0.04\textwidth]{{figures/mols/12}}} &  \multirow{2}{*}{\includegraphics[width=0.04\textwidth]{{figures/mols/13}}} &  \multirow{2}{*}{\includegraphics[width=0.04\textwidth]{{figures/mols/14}}} &  \multirow{2}{*}{\includegraphics[width=0.04\textwidth]{{figures/mols/15}}} &  \multirow{2}{*}{\includegraphics[width=0.04\textwidth]{{figures/mols/16}}} &  \multirow{2}{*}{\includegraphics[width=0.04\textwidth]{{figures/mols/17}}} &  \multirow{2}{*}{\includegraphics[width=0.04\textwidth]{{figures/mols/18}}} &  \multirow{2}{*}{\includegraphics[width=0.04\textwidth]{{figures/mols/20}}} &  \multirow{2}{*}{\includegraphics[width=0.04\textwidth]{{figures/mols/19}}} &  \multirow{2}{*}{\includegraphics[width=0.04\textwidth]{{figures/mols/21}}} &  \multirow{2}{*}{\includegraphics[width=0.04\textwidth]{{figures/mols/22}}} &  \multirow{2}{*}{\includegraphics[width=0.04\textwidth]{{figures/mols/23}}} &  \multirow{2}{*}{\includegraphics[width=0.04\textwidth]{{figures/mols/24}}} &  \multirow{2}{*}{\includegraphics[width=0.04\textwidth]{{figures/mols/25}}} &  \multirow{2}{*}{\includegraphics[width=0.04\textwidth]{{figures/mols/26}}} &  \multirow{2}{*}{\includegraphics[width=0.04\textwidth]{{figures/mols/27}}} &  \multirow{2}{*}{\includegraphics[width=0.04\textwidth]{{figures/mols/28}}} &  \multirow{2}{*}{\includegraphics[width=0.04\textwidth]{{figures/mols/29}}} \\
      &&&&&&&&&&&&&&&&&&&&&&&&&&&&&\\
\midrule
\textbf{Count}   & 12862 &  7548 &                                                       6198 &                                                       5629 &                                                       3904 &                                                       2204 &                                                       1799 &                                                       1772 &                                                       1348 &                                                        1330 &                                                        1071 &                                                         741 &                                                         573 &                                                         375 &                                                         208 &                                                         204 &                                                         176 &                                                         156 &                                                         113 &                                                         113 &                                                          90&                                                          80 &                                                          77 &                                                          66 &                                                          54 &                                                          45 &                                                          37 &                                                          32 &                                                          31 \\
\textbf{Accuracy} &                                                          1.0 &                                                        0.997 &                                                        0.999 &                                                        0.986 &                                                         0.99 &                                                        0.969 &                                                        0.999 &                                                          1.0 &                                                        0.963 &                                                         0.997 &                                                         0.997 &                                                         0.933 &                                                         0.999 &                                                         0.954 &                                                         0.999 &                                                         0.988 &                                                         0.983 &                                                         0.982 &                                                         0.996 &                                                         0.995 &                                                         0.993 &                                                         0.991 &                                                         0.998 &                                                         0.995 &                                                         0.996 &                                                         0.996 &                                                         0.996 &                                                         0.998 &                                                         0.998 \\
\end{tabular}
}
\resizebox{\textwidth}{!}{
\begin{tabular}{l|HHHHHHHHHHHHHHllllllllllllllH}
\toprule
    \multirow{2}{*}{\textbf{Fragment}}
      &  \multirow{2}{*}{\includegraphics[width=0.04\textwidth]{{figures/mols/1}}} &  \multirow{2}{*}{\includegraphics[width=0.04\textwidth]{{figures/mols/2}}} &  \multirow{2}{*}{\includegraphics[width=0.04\textwidth]{{figures/mols/3}}} &  \multirow{2}{*}{\includegraphics[width=0.04\textwidth]{{figures/mols/4}}} &  \multirow{2}{*}{\includegraphics[width=0.04\textwidth]{{figures/mols/5}}} &  \multirow{2}{*}{\includegraphics[width=0.04\textwidth]{{figures/mols/6}}} &  \multirow{2}{*}{\includegraphics[width=0.04\textwidth]{{figures/mols/7}}} &  \multirow{2}{*}{\includegraphics[width=0.04\textwidth]{{figures/mols/8}}} &  \multirow{2}{*}{\includegraphics[width=0.04\textwidth]{{figures/mols/9}}} &  \multirow{2}{*}{\includegraphics[width=0.04\textwidth]{{figures/mols/10}}} &  \multirow{2}{*}{\includegraphics[width=0.04\textwidth]{{figures/mols/11}}} &  \multirow{2}{*}{\includegraphics[width=0.04\textwidth]{{figures/mols/12}}} &  \multirow{2}{*}{\includegraphics[width=0.04\textwidth]{{figures/mols/13}}} &  \multirow{2}{*}{\includegraphics[width=0.04\textwidth]{{figures/mols/14}}} &  \multirow{2}{*}{\includegraphics[width=0.04\textwidth]{{figures/mols/15}}} &  \multirow{2}{*}{\includegraphics[width=0.04\textwidth]{{figures/mols/16}}} &  \multirow{2}{*}{\includegraphics[width=0.04\textwidth]{{figures/mols/17}}} &  \multirow{2}{*}{\includegraphics[width=0.04\textwidth]{{figures/mols/18}}} &  \multirow{2}{*}{\includegraphics[width=0.04\textwidth]{{figures/mols/20}}} &  \multirow{2}{*}{\includegraphics[width=0.04\textwidth]{{figures/mols/19}}} &  \multirow{2}{*}{\includegraphics[width=0.04\textwidth]{{figures/mols/21}}} &  \multirow{2}{*}{\includegraphics[width=0.04\textwidth]{{figures/mols/22}}} &  \multirow{2}{*}{\includegraphics[width=0.04\textwidth]{{figures/mols/23}}} &  \multirow{2}{*}{\includegraphics[width=0.04\textwidth]{{figures/mols/24}}} &  \multirow{2}{*}{\includegraphics[width=0.04\textwidth]{{figures/mols/25}}} &  \multirow{2}{*}{\includegraphics[width=0.04\textwidth]{{figures/mols/26}}} &  \multirow{2}{*}{\includegraphics[width=0.04\textwidth]{{figures/mols/27}}} &  \multirow{2}{*}{\includegraphics[width=0.04\textwidth]{{figures/mols/28}}} &  \multirow{2}{*}{\includegraphics[width=0.04\textwidth]{{figures/mols/29}}} \\
      &&&&&&&&&&&&&&&&&&&&&&&&&&&&&\\
\midrule
\textbf{Count}   & 10000 &  7548 &                                                       6198 &                                                       5629 &                                                       3904 &                                                       2204 &                                                       1799 &                                                       1772 &                                                       1348 &                                                        1330 &                                                        1071 &                                                         741 &                                                         573 &                                                         375 &                                                         208 &                                                         204 &                                                         176 &                                                         156 &                                                         113 &                                                         113 &                                                          90&                                                          80 &                                                          77 &                                                          66 &                                                          54 &                                                          45 &                                                          37 &                                                          32 &                                                          31 \\
\textbf{Accuracy} &                                                          1.0 &                                                        0.997 &                                                        0.999 &                                                        0.986 &                                                         0.99 &                                                        0.969 &                                                        0.999 &                                                          1.0 &                                                        0.963 &                                                         0.997 &                                                         0.997 &                                                         0.933 &                                                         0.999 &                                                         0.954 &                                                         0.999 &                                                         0.988 &                                                         0.983 &                                                         0.982 &                                                         0.996 &                                                         0.995 &                                                         0.993 &                                                         0.991 &                                                         0.998 &                                                         0.995 &                                                         0.996 &                                                         0.996 &                                                         0.996 &                                                         0.998 &                                                         0.998 \\
\bottomrule
\end{tabular}
}

\resizebox{\textwidth}{!}{
\begin{tabular}{l|HHHHHHHHHHHHHHHHHHHHHHHHHHHHllllllllllllll}
\multirow{1}{*}{\textbf{Fragment}}     &  \includegraphics[width=0.2\textwidth]{{figures/mols/0}} &  \includegraphics[width=0.2\textwidth]{{figures/mols/1}} &  \includegraphics[width=0.2\textwidth]{{figures/mols/2}} &  \includegraphics[width=0.2\textwidth]{{figures/mols/3}} &  \includegraphics[width=0.2\textwidth]{{figures/mols/4}} &  \includegraphics[width=0.2\textwidth]{{figures/mols/5}} &  \includegraphics[width=0.2\textwidth]{{figures/mols/6}} &  \includegraphics[width=0.2\textwidth]{{figures/mols/7}} &  \includegraphics[width=0.2\textwidth]{{figures/mols/8}} &  \includegraphics[width=0.2\textwidth]{{figures/mols/9}} &  \includegraphics[width=0.2\textwidth]{{figures/mols/10}} &  \includegraphics[width=0.2\textwidth]{{figures/mols/11}} &  \includegraphics[width=0.2\textwidth]{{figures/mols/12}} &  \includegraphics[width=0.2\textwidth]{{figures/mols/13}} &  \includegraphics[width=0.2\textwidth]{{figures/mols/14}} &  \includegraphics[width=0.2\textwidth]{{figures/mols/15}} &  \includegraphics[width=0.2\textwidth]{{figures/mols/16}} &  \includegraphics[width=0.2\textwidth]{{figures/mols/17}} &  \includegraphics[width=0.2\textwidth]{{figures/mols/18}} &  \includegraphics[width=0.2\textwidth]{{figures/mols/19}} &  \includegraphics[width=0.2\textwidth]{{figures/mols/20}} &  \includegraphics[width=0.2\textwidth]{{figures/mols/21}} &  \includegraphics[width=0.2\textwidth]{{figures/mols/22}} &  \includegraphics[width=0.2\textwidth]{{figures/mols/23}} &  \includegraphics[width=0.2\textwidth]{{figures/mols/24}} &  \includegraphics[width=0.2\textwidth]{{figures/mols/25}} &  \includegraphics[width=0.2\textwidth]{{figures/mols/26}} &  \includegraphics[width=0.2\textwidth]{{figures/mols/27}} &  \includegraphics[width=0.04\textwidth]{{figures/mols/28}} &  \includegraphics[width=0.04\textwidth]{{figures/mols/29}} &  \includegraphics[width=0.04\textwidth]{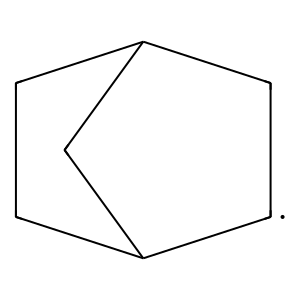} &  \includegraphics[width=0.04\textwidth]{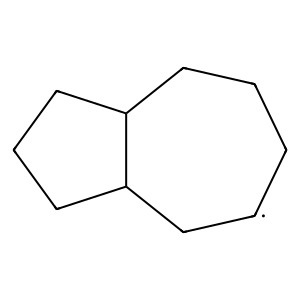} &  \includegraphics[width=0.04\textwidth]{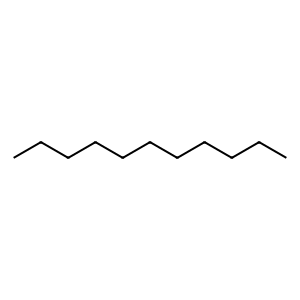} &  \includegraphics[width=0.04\textwidth]{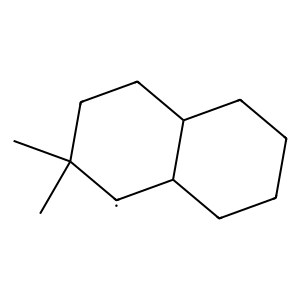} &  \includegraphics[width=0.04\textwidth]{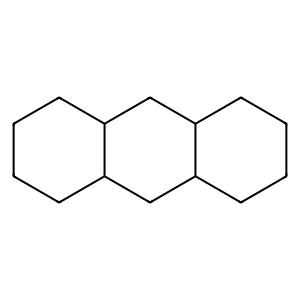} &  \includegraphics[width=0.04\textwidth]{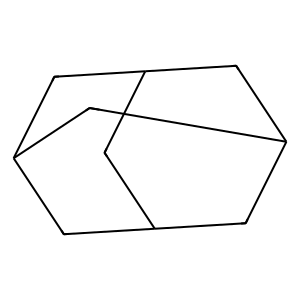} &  \includegraphics[width=0.04\textwidth]{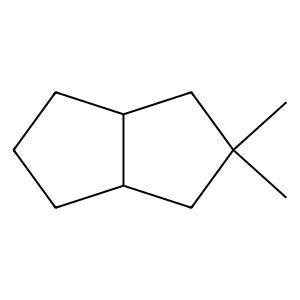} &  \includegraphics[width=0.04\textwidth]{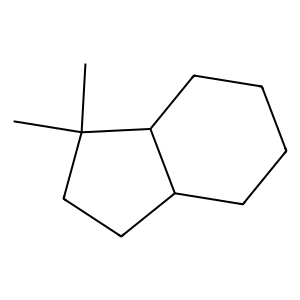} &  \includegraphics[width=0.04\textwidth]{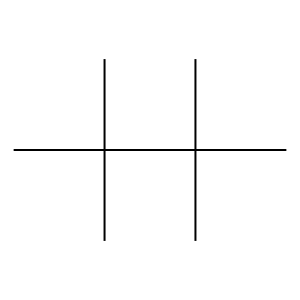} &  \includegraphics[width=0.04\textwidth]{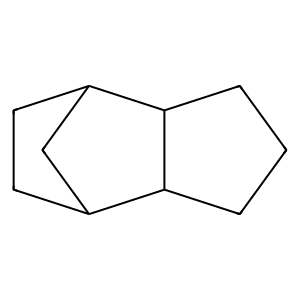} &  \includegraphics[width=0.04\textwidth]{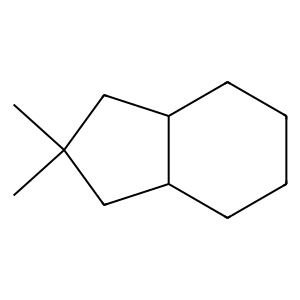} &  \includegraphics[width=0.04\textwidth]{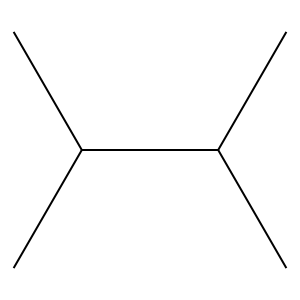} \\
\midrule
\textbf{Count}   &                                                      12862.0 &                                                      10000.0 &                                                       7548.0 &                                                       6198.0 &                                                       5629.0 &                                                       3904.0 &                                                       2204.0 &                                                       1799.0 &                                                       1772.0 &                                                       1348.0 &                                                        1330.0 &                                                        1071.0 &                                                         741.0 &                                                         573.0 &                                                         375.0 &                                                         208.0 &                                                         204.0 &                                                         176.0 &                                                         156.0 &                                                         113.0 &                                                         113.0 &                                                          90.0 &                                                          80.0 &                                                          77.0 &                                                          66.0 &                                                          54.0 &                                                          45.0 &                                                          37.0 &                                                          32 &                                                          31 &                                                          28 &                                                          25 &                                                          23 &                                                          19 &                                                          19 &                                                          18 &                                                          18 &                                                          17 &                                                          15 &                                                          15 &                                                          15 &                                                          13 \\
\textbf{Accuracy} &                                                          1.0 &                                                          1.0 &                                                        0.997 &                                                        0.999 &                                                        0.986 &                                                         0.99 &                                                        0.969 &                                                        0.999 &                                                          1.0 &                                                        0.963 &                                                         0.997 &                                                         0.997 &                                                         0.933 &                                                         0.999 &                                                         0.954 &                                                         0.999 &                                                         0.988 &                                                         0.983 &                                                         0.982 &                                                         0.995 &                                                         0.996 &                                                         0.993 &                                                         0.991 &                                                         0.998 &                                                         0.995 &                                                         0.996 &                                                         0.996 &                                                         0.996 &                                                         0.998 &                                                         0.998 &                                                         0.996 &                                                         0.997 &                                                           1.0 &                                                         0.998 &                                                         0.997 &                                                         0.998 &                                                           1.0 &                                                           1.0 &                                                         0.998 &                                                         0.998 &                                                           1.0 &                                                         0.999 \\
\bottomrule
\end{tabular}
}

\label{tab:more_counting}
\end{table}
We use the MagNet \cite{hetzel_magnet_2023} fragmentation to fragment all graphs in the ZINC-subset dataset. We sort the fragments by number of occurrences in the training set. For each of the 28 most common substructure we train our model to predict the counts of these substructures. As model parameters we use three layers of message passing with a hidden dimension of 120. For the final readout function we use a sum aggregation and a two layer MLP.
We train our model using the MAE loss for 200 epochs with a learning rate of 0.001 and a reduce-on-plateau learning rate scheduling. We report the accuracy (percentage of graphs where rounded prediction equals the ground-truth count) on the test set. \cref{tab:more_counting} shows the complete table of all substructures.

\subsection{Long-range Interaction: Recovery rate}
In our synthetic long-range experiment, we consider a graph consisting of two rings connected by a path \cref{fig:recovery}. One node in the graph is the designated source node (marked by a star). The feature of the source node is initialized with one-hot-encoding of one of 10 different classes. All other node features are initialized with a constant encoding. 
For every node $t$ in the graph, we train a separate model to predict the class of the source node $s$, i.e. the target node $t$ has to reconstruct a message from the source node. The number of layers of the models is $\max(d(s,t),3)$, ensuring that the target can receive messages from the source. We train the model with the cross entropy loss between the prediction at the target node and the true class of the source. We compare the results of models that have no fragmentation, a ring fragmentation and a ring-path fragmentation. We use a our model without batchnorm and a hidden dimension of 64. We train the model for a maximum of 200 epochs with a starting learn rate of 0.001 and average the results over at least five seeds.

\subsection{Generalization: Rarity}
For the experiment in \cref{tab:rarity}, we report the MAE of the ZINC-full validation set grouped by the frequency of the rarest fragment in the molecule. The frequency of a fragment is defined as the fraction of molecules that contain the fragment. As the fragmentation scheme, we use the simple Rings fragmentation.

\subsection{Generalization: QM9}
To perform our generalization experiment on QM9, we transform the edge and node features of the molecular graphs in QM9 so that they have the same node features and edge features as the graphs in the ZINC dataset. Additionally, we do not use any molecular graphs that contain atom types that do not appear in the ZINC dataset. We calculate penalized logP as ground truth.
Then, we trained our model and GRIT on ZINC-full and tested them on the transformed QM9 dataset.

\section{Downstream tasks using substructures}
Many other molecular tasks beyond property prediction can benefit from substructure information, highlighting the broader potential applications of our model.

\textbf{Motifs for Drug Discovery}
Motifs and specific substructures are important inductive biases in molecular generation, optimization, and scaffolding tasks \cite{hu2023deep, sommer_power_2023, du2022molgensurvey}. Employing a set of fragments can simplify the generation process and increase the chemical validity of the generated molecules. Given a fragmentation procedure, the fragments are aggregated into a vocabulary of motifs through complete enumeration of the dataset \cite{jin_junction_2019, jin2020hierarchical, geng2023novo}, top-k selection \cite{kong_molecule_2022, maziarz_learning_2022} or consolidation into Murcko scaffolds \cite{hetzel_magnet_2023}. Encoders for molecule generation often integrate motifs as node features or via additional higher-level encoder networks, as the decoder is explicitly tasked with reconstructing the set of motifs from a given embedding.

\textbf{Pretaining}
In the context of using GNNs for drug discovery, incorporating motifs as part of a pretraining phase has been shown to improve representation learning capabilities. \citet{zang2023hierarchical} integrates higher-level structures as nodes in a graph and leverages the graph's hierarchy for self-supervised pretraining. Similarly, \citet{zhang2021motif} propose a GNN that operates on a two-tiered graph and predicts the sequence of motifs during network pretraining. To improve the encoding of higher-level structures, \citet{inae2023motif} suggest a motif-aware pretraining technique, which masks entire motifs during the pretraining phase.

\end{document}